\newtheorem{theorem}{\noindent \bf Theorem}
\newtheorem{lemma}{\noindent \bf Lemma}
\newtheorem{proposition}{\noindent \bf Proposition}
\newtheorem{definition}{\noindent \bf Definition}
\newtheorem{corollary}{\noindent \bf Corollary}
\newenvironment{remark}{{\noindent\it Remark:}\quad}{\hfill}
\newtheorem{assumption}{\noindent \it Assumption}
\newenvironment{proof}{{\noindent\it Proof:}\quad}{\hfill $\square$\par}
\def\BibTeX{{\rm B\kern-.05em{\sc i\kern-.025em b}\kern-.08em
    T\kern-.1667em\lower.7ex\hbox{E}\kern-.125emX}}
\begin{document}
% \pagewiselinenumbers
% \switchlinenumbers
\title{Distributed Stochastic Gradient Descent with Staleness: A Stochastic Delay Differential Equation Based Framework}
    
\author{  Siyuan~Yu,~Wei~Chen,~\IEEEmembership{Senior Member,~IEEE},~and~H. Vincent Poor,~\IEEEmembership{Life Fellow,~IEEE}

\thanks{Manuscript received 16 May 2024; revised 21 November 2024 and 24 February 2025; accepted 24 February 2025. This research was supported in part by National Natural Science Foundation of China and Research Grants Council (NSFC/RGC) Joint Research Scheme under Grant No. 62261160390/N HKUST656/22, in  part by National Natural Science Foundation of China under Grant No. 62471276, in part by a grant from Princeton Language and Intelligence, and in part by the U.S. National Science Foundation under Grant CCF-1908308 and Grant CNS-2128448. An earlier version of this paper has been presented in part at the 2024 IEEE International Conference on Communications (ICC) \cite{icc}. \emph{(Corresponding author: Wei Chen.)}} 

\thanks{Siyuan Yu and Wei Chen are with the Department of Electronic Engineering, the State Key Laboratory of Space Network and Communications, and the Beijing National Research Center for Information Science and Technology (BNRist), Tsinghua University, Beijing, 100084, China, e-mail: ysy20@mails.tsinghua.edu.cn, wchen@tsinghua.edu.cn. }
    
\thanks{H. Vincent Poor is with the Department of Electrical and Computer Engineering, Princeton University, New Jersey, 08544, USA, e-mail: poor@princeton.edu.}   
}

\maketitle
\thispagestyle{empty}

\begin{abstract}
Distributed stochastic gradient descent (SGD) has attracted considerable recent attention due to its potential for scaling computational resources, reducing training time, and helping protect user privacy in machine learning. {However, stragglers and limited bandwidth may induce random computational/communication delays, thereby severely hindering the learning process.} Therefore, how to accelerate asynchronous SGD (ASGD) by efficiently scheduling multiple workers is an important issue. In this paper, a unified framework is presented to analyze and optimize the convergence of {ASGD} based on stochastic delay differential equations (SDDEs) and the Poisson approximation of aggregated gradient arrivals. In particular, we present the run time and staleness of distributed SGD without a memorylessness assumption on the computation times. Given the learning rate, we reveal the relevant SDDE’s damping coefficient and its delay statistics, as functions of the number of activated clients, staleness threshold, the eigenvalues of the Hessian matrix of the objective function, and the overall computational/communication delay. The formulated SDDE allows us to present both the distributed SGD’s convergence condition and speed by calculating its characteristic roots, thereby optimizing the scheduling policies for asynchronous/event-triggered SGD. It is interestingly shown that increasing the number of activated workers does not necessarily accelerate distributed SGD due to staleness. Moreover, a small degree of staleness does not necessarily slow down the convergence, while a large degree of staleness will result in the divergence of distributed SGD. Numerical results demonstrate the potential of our SDDE framework, even in complex learning tasks with non-convex objective functions.
\end{abstract}

\begin{IEEEkeywords}
Distributed optimization, machine learning, stochastic gradient descent, gradient staleness, stochastic delay differential equations, asynchronous distributed optimization, straggler problem.
\end{IEEEkeywords}

\section{Introduction}
Deep neural networks (DNNs) have found widespread applications in diverse fields, including computer vision, natural language processing, and emerging areas like textual data analysis \cite{6g}, in which high-performance multi-GPU computing is enabled by GPU-oriented interconnects such as NVLink\cite{nvlink}. 
Computer networks, supported by communication protocols \cite{gallager, dailin}, often serve as the interconnecting infrastructure enabling efficient resource sharing among network nodes \cite{kurose}, which provides the essential framework for distributed AI systems to communicate and access resources.  {Stochastic gradient descent (SGD) serves as the fundamental framework for most of the existing deep learning algorithms, enabling models to learn from extensive databases and attain exceptional performance.}

{Parallelizing training can accelerate the overall training process. However, due to the need for synchronized updates, the completion time of each iteration is constrained by the slowest, so-called ``straggling" workers \cite{straggler_tsp, add_straggler}. This can negatively impact the convergence of the algorithm, as the presence of these stragglers leads to idle time, thereby reducing the potential speedup.}
In traditional synchronous learning, faster devices are forced to wait for slower ones due to the heterogeneity of the devices. To tackle this issue, asynchronous learning has been proposed, where each worker pushes gradients to the server in an event-triggered manner \cite{revisit, asaga,scaling,quad_sus, faster, yu_tcom, yu_tsp}. 
As an important advantage of {{asynchronous SGD} (ASGD)}, it has been shown to achieve better performance compared to synchronous SGD with respect to wall-clock time \cite{slow_and_stale}. Though {ASGD} eliminates waiting overhead, it suffers from higher convergence errors due to gradient staleness, which is induced by the computational delay and communication delay \cite{uot, Gc2023}.  As a result, stale gradients are usually penalized using a staleness-dependent learning rate in staleness-aware {ASGD} \cite{stale1, stale-aware, stale2, gap}. {Also, communication load can be reduced in distributed learning via event-triggered communication, referred to as event-triggered stochastic gradient descent \cite{sampling2, gupta, lazy, lazy2, israel}.
}

For the theoretical convergence analysis of SGD, classical bound-type results include \cite{nonlinearB} and \cite{ontheconver}. Existing bound-type results for SGD with stale-gradient include \cite{NBB01, final_add,
berkeleyDuchi,slow_and_stale, lihe, tac, add_major, add_major2, add_major3, mcmc, local}. The key obstacle to deal with in asynchronous learning is gradient staleness. In \cite{NBB01}, {ASGD} is shown to suffer an asymptotic penalty in convergence rate due to gradient staleness.  In the presence of gradient delays, Agarwal and Duchi demonstrate that asynchronous learning can achieve order-optimal convergence results in \cite{berkeleyDuchi}. As is  also presented in \cite{tac}, {ASGD} achieves similar convergence property as the centralized counterpart with bounded communication delay. To tackle this issue of gradient staleness, by carefully tuning the algorithm’s step size, asynchronous learning is shown to still converge the critical set in \cite{zhou2}. In addition to the classical bound type analysis, performance analysis based on continuous approximation by stochastic differential equation also attracted considerable attention in stochastic approximation literature \cite{Kushner,lizhiyuan}.
In \cite{SME}, the author uses tools from stochastic calculus and asymptotic analysis to provide a precise dynamical description of SGD and its variants, based on which adaptive learning rate policies are presented. In \cite{M2016},  the authors approximated SGD in terms of a multivariate Ornstein-Uhlenbeck (OU) process, where precise, albeit only distributional,
descriptions of the SGD dynamics are presented in complement to the classical bound type converge analysis. In addition, the authors conduct theoretical analysis on the convergence rates through the continuous approximation by stochastic differential delay equations in \cite{lihe}.

{Motivated by the straggler problem, and to provide non-asymptotic performance analysis and gain greater insights into the impact of gradient staleness on the convergence of distributed SGD, in this paper, a stochastic approximation-based approach is presented, in complement to the conventional bound type analysis.} Similar to \cite{M2016, SME, quad_sus}, we first focus on quadratic objective functions and then extend our discussion to general cases. Further, we aim to derive criteria for the parameter setup of learning rates, the total number of workers, and communication protocol in asynchronous distributed learning with gradient staleness.
In the literature, closely related works are \cite{slow_and_stale,uot} in terms of wall-clock run-time analysis, \cite{M2016, SME} in terms of stochastic differential equation approximation, and \cite{lihe} in terms of the connection between {ASGD} and stochastic delay differential equations (SDDEs).  {In this work, we extend the run-time analysis of SGD variants with exponential processing times \cite{slow_and_stale} to non-memoryless cases. We further extend the stochastic differential equation (SDE) approximation of SGD, developed in \cite{M2016, SME}, to an SDDE approximation that accounts for gradient staleness in ASGD. Furthermore, inspired by the stationary distribution analysis of SGD in \cite{M2016}, we perform a first hitting time analysis for SGD and a stationary distribution analysis for ASGD.
In addition to \cite{lihe}, we bridge the SDDE solution to ASGD dynamics with respect to the wall-clock time, and we also provide a detailed analysis of the characteristic roots of the SDDE, highlighting how factors such as the learning rate, the number of workers, and gradient staleness influence the ASGD convergence rate.}
\begin{figure*}
    \centering
    \includegraphics[width=6.2in]{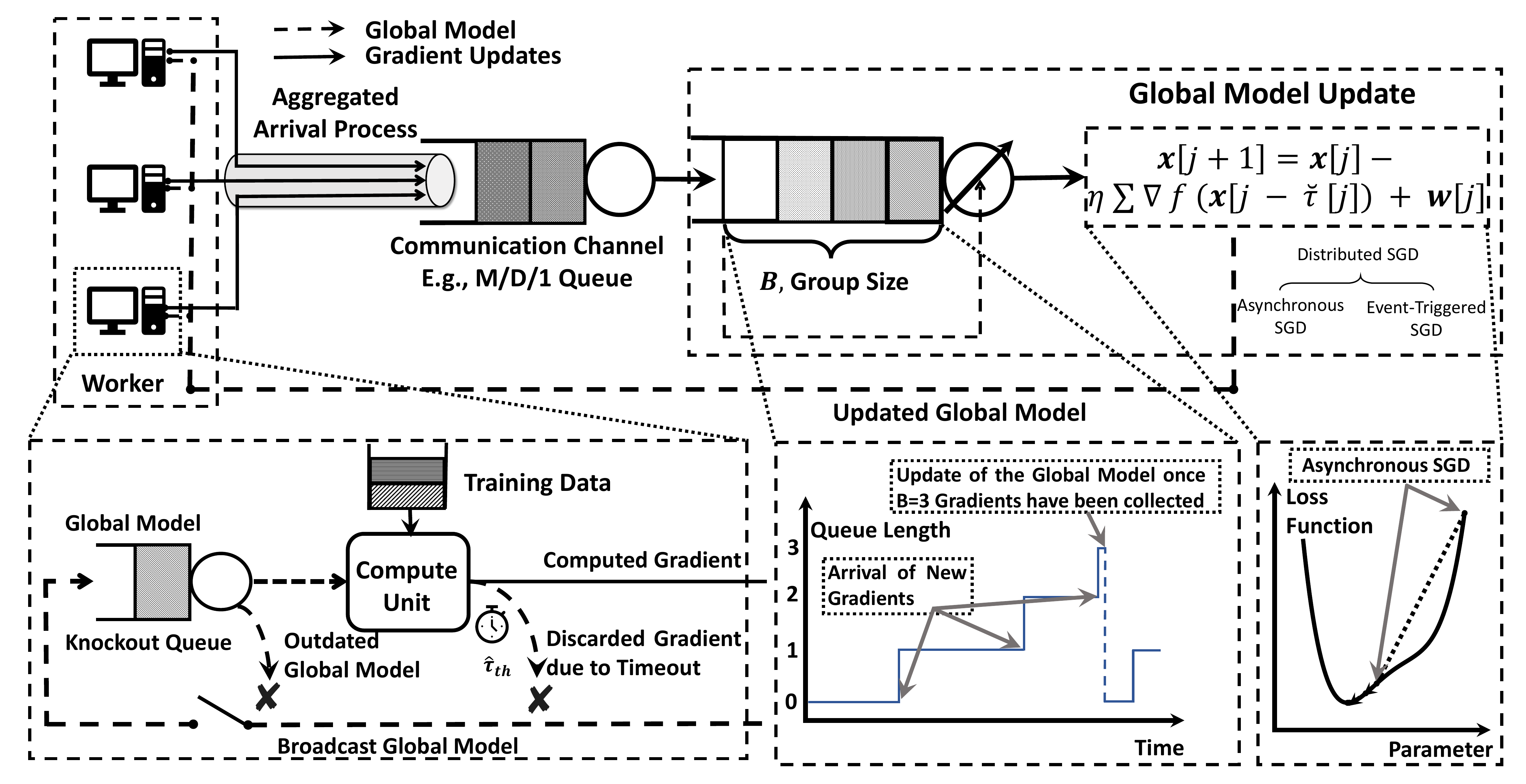}
    \caption{{System model.}}
    \label{fig:system_model}
\end{figure*}
{In this paper, we develop an SDDE-based approach, through which we bridge ASGD dynamics to the solution of an SDDE. Specifically, by presenting the characteristic root of the SDDE, we provide an insightful argument on the relationship between the convergence rate and gradient staleness, in complement to existing bound type analysis. Moving away from the memorylessness assumptions on computational times, the run-time and step staleness analysis in SDDE is further presented.  This approach enables us to reveal how factors such as learning rate, gradient staleness, number of workers, and communication protocol design influence ASGD performance, as well as presenting protocol design criteria.} 
Specifically, the contribution of this paper can be summarized as follows: 
\begin{itemize}[topsep  =0pt, partopsep = -30pt]
\item We present a unified framework to analyze and optimize the convergence process of {ASGD}  with gradient noise and staleness based on SDDEs. To deal with the non-exponentially distributed computation time, a Poisson approximation for the superposition of independent renewal processes is adopted, based on which we present the probability distribution of the update interval and the step staleness or discrete-time gradient staleness of {ASGD}. Based on the runtime and staleness analysis, we bridge the behavior of the {ASGD} algorithm and the solutions of the SDDE. 
In particular, the convergence time or rate of convergence of the SGD algorithm can be determined based on the first hitting time of the stochastic process or characteristic roots of the SDDE. 

\item The performance of the SGD algorithm with stale gradient is demonstrated to be closely related to the product of the step staleness, learning rate, and the 2-norm of the Hessian matrix. Our analysis indicates that a small degree of staleness could slightly accelerate the SGD algorithm, while {a large degree of staleness} could be harmful and result in the divergence of {ASGD}.  Our analyses align with the observations made by existing literature and complement classical bound-type convergence analyses for SGD with stale gradients. {It is further shown that gradient staleness slows down the escape of the ASGD algorithm from saddle points.} We further provide a theoretical analysis of the impact of uniformly distributed gradient staleness on the convergence rate of ASGD.

\item  {It is further revealed that reducing gradient staleness comes at the cost of a lower gradient update rate.} It is interestingly shown that, regardless of the specific distribution of computation time, {the expected step staleness} in {ASGD} without gradient dropout is only determined by the number of workers and the group size.  Also, it is shown that the presence of the gradient noise contributes to an increase in the average first hitting time. Under certain circumstances, the increase is proven to be proportional to the standard deviation of the Gaussian noise and the square root of the logarithm of the optimization variable’s dimension. {This work also shows that ASGD has a higher error floor compared to synchronous SGD. However, this issue can be alleviated by decreasing the level of asynchrony as ASGD nears the local minimum.}

\item
In practice, the optimal choice of worker number can be determined according to the learning rate and the $2$-norm of the Hessian matrix of the objective function.  An excessive number of activated workers does not necessarily accelerate {ASGD}. This is due to the fact that a small number of workers provides fewer gradients while a large number of workers leads to a greater degree of staleness. 
% With a staleness-aware learning rate, the positive effect of the
% increased number of gradients as the number of workers increases can be offset by the staleness-aware step size.
The performance of {ASGD} with an excess of workers can be improved by selecting an appropriate group size or discarding outdated gradients.  The limited bandwidth leads to an increased gradient staleness or a smaller update frequency {depending on whether the client starts computing a new gradient immediately after completing or transmitting the previous one}. In this case, an excessive number of workers can lead to a large communication delay due to network congestion, resulting in a degradation in the performance of SGD. 
Further, high-resolution quantization of the gradients with large gradient noise can lead to an overload of network bandwidth, thereby being inefficient due to the waste of communication resources.

{
\item ASGD offers an effective solution to address the straggler problem arising in synchronous SGD.  While ASGD may be hindered by stale gradients, this paper demonstrates that, through the careful selection of the number of workers or by designing appropriate protocols, step staleness can be effectively managed. Also, it is possible to leverage stale gradients to accelerate the convergence of ASGD. Additionally, ASGD benefits from a higher gradient arrival rate compared to synchronous SGD, which can lead to better convergence rates in terms of wall-clock time.}

\end{itemize}

The rest of this paper is organized as follows.  Section II presents the system model and the problem formulation. Section III presents the performance analysis for SGD with gradient noise and staleness through a stochastic delay differential  equation-based approach. Section IV presents the run-time, staleness, and protocol design criteria for distributed SGD.  
Simulation results are provided in Section V. Finally, conclusions are drawn in Section VI.

\section{System Model}
\begin{table*}[t]
\centering
\setlength{\tabcolsep}{0.5mm}
\begin{tabular}{|c|l||c|l|}
\hline  Notation &  {~~~~~~~~~~~Description } &  Notation  &   {~~~~~~~~~~~~~~~~~~Description} \\

\hline $\hat {\pmb x}(t)/\breve {\pmb x}[j]$ & The objective variable in continuous/discrete-time  & $K$ & The number of total workers\\

\hline  $\hat \tau$& {Gradient delay or continuous-time gradient staleness}   & $\pmb w$ & The gradient noise\\
\hline $\breve \tau$   & {Step staleness or discrete-time gradient staleness}   & $\pmb B(t)$ &Standard Wiener process \\

\hline $\tau$ & Gradient staleness in SDDE  & $S_j$  & The time at which the global model is updated\\

\hline $f(\cdot)$ & The objective function  &$\eta$ &The learning rate or step size 
\\
\hline $\nabla f(\cdot)$ & The gradient function   & $T_\kappa$  & The computational delay of the $\kappa$th client\\

\hline $N_p(t)$&  The aggregated gradient arrival process   & $H$  & The first hitting time\\

\hline $\pmb V$ & Hessian matrix of the objective function   & $N$  & The dimension of the objective function\\

\hline $v$ & The eigenvalues of $\pmb V$   & $\lambda$  &The characteristic roots of the SDDE\\

\hline  $W_k(\cdot)$& The Lambert W function  &  $\varpi$ & A constant number\\
\hline  $\delta$ & A small positive number   & $\mu_Q$  & The service rate of the bandwidth-limited channel\\
\hline $\nu$  & The intensity of the arrival process   &  $B$ & The group size\\
\hline

\end{tabular}
\caption{\label{tab:table_var}{Table for Notations.}}

\end{table*}

In this paper, we are interested in centralized distributed learning involving  gradient noise and staleness. {Specifically, there is a single central parameter server connected with $K$ parallel clients, or equivalently workers, via a communication medium} as illustrated in Fig. \ref{fig:system_model}.  In the conventional synchronous setting, the global model evolves based on gradients from all the $K$ workers at each iteration. To mitigate stragglers in synchronous SGD, ASGD has been introduced.  In the asynchronous setup, each worker can operate independently, fetching the global model and updating its computed gradient. Furthermore, to alleviate the communication overload in distributed learning, event-triggered SGD is presented where the worker {pushes} its gradient to the central server until the difference between the previous gradient and the current one exceeds a given triggering threshold. We introduce the implementation details of {ASGD} in Subsection \ref{suba}, and provide a corresponding stochastic approximation. {In Subsection \ref{subb}, key mathematical concepts used in this paper are presented.}

\subsection{Asynchronous Distributed Optimization}\label{suba}
{Let us first introduce the objective function in distributed SGD. Let $\pmb{x} \in \mathbb{R}^N$ be the parameters of the model to be trained. Let $f_i(\pmb{x})$ be the loss incurred by $\pmb x$ at the $i^{\text {th }}$ data point $\xi_i\sim \mathcal{D}$, in which $\mathcal D$ is the set of the training data. The objective is to minimize the generalization error 
{
\begin{equation}\label{add_aq}
    f(\pmb{x})=\mathbb{E}_{\xi_i \sim \mathcal D}\left\{f_i(\pmb{x})\right\}, 
\end{equation}}where $\mathbb{E}\{\cdot\}$ denotes expectation.}
To mitigate stragglers in conventional synchronous distributed learning, asynchronous distributed learning has emerged as an alternative approach. In the asynchronous setting, each worker operates independently of others, fetching the global model and updating its computed gradient asynchronously. The process unfolds as follows:{ Firstly, a worker fetches from the central server the most up-to-date parameters of the model to process the current mini-batch.} Subsequently, it computes gradients of the loss function with respect to these parameters. These gradients are then sent to the central server, which then updates the model accordingly. Simultaneously, upon completing the computation, it fetches the updated global model right away, based on which it computes new gradients.\footnote{In this paper, we assume that the downlink communication cost is negligible. Also, the time it takes for the central server to update the global model is also assumed to be negligible since the update is based on computed gradients. In this way, each worker could fetch the updated global model and start the computation of a new gradient once it finishes the computation of the old one.}  In the asynchronous setup,  the gradient utilized to update the global model is often computed based on outdated parameters, leading to gradient staleness, which can impede the convergence of {ASGD}. To address this challenge, we explore the following two methods aimed at reducing gradient staleness in {ASGD}.

First, in $B$-ASGD, the central server updates the global model once $B$ gradients are collected since the last update, while the workers who fail to {push} the gradient in the current iteration can continue their computation. In this context, $B$ is referred to as the group size. In particular, the $B$ clients, based on whose gradients the global model updates, start the computation of a new gradient based on the updated global model, to reduce the gradient staleness. Note that the $B$ clients could also start their computation of new gradients right after their gradients are received by the central server. In this case, the {ASGD} algorithm is termed $B$-batch-ASGD. In this paper, we generally focus on $B$-ASGD unless otherwise stated.   Note that {ASGD} reduces to synchronous SGD if $B = K$.\footnote{In this paper, B-ASGD follows the same algorithm as K-async SGD described in \cite{slow_and_stale}, with additional features such as gradient dropout to manage gradient staleness. Further, as our study uses different analytical methods and assumptions, we use distinct terminology to differentiate the two approaches.} 

Second, the workers could drop the current computing gradient to avoid a large degree of gradient staleness. 
In particular, let $\hat \tau_\text{th}$ denote the continuous-time gradient staleness threshold. 
More specifically, during the computation of a gradient, if the time since a worker last fetched the global model exceeds $\hat \tau_{th}$, the worker stops the computation of the current gradient, fetches the current model, and start the computation of a new gradient. In fact, the worker can also drop outdated gradients according to discrete-time staleness or step staleness. In particular, if the number of updates taken by the central server exceeds a pre-given threshold $\breve \tau_\text{th}$ since the client last {fetched} the global model, the client halts the current training and starts the computation of the new gradient based on the current global model. By this means, {ASGD} reduces the gradient delay at the expense of a smaller{ {gradient update frequency}}. In this context, we call it pure {ASGD} when $B=1$ and $\hat \tau_\text{th} = \infty$ or $\breve \tau_\text{th} = \infty$. 

In the following, we present the continuous-time
update rule for {ASGD}. Let $\mathcal K(t)$ be the set of workers whose gradients arrive at the parameter server until time $t$ since the global model is last updated. In $B$-ASGD, it holds that $\mathcal K(t^+) = \Phi$ if $\|\{{\mathcal K(t)}\}\|=B$, in which $t^+$ means a time that is infinitesimally but noticeably later than the time $t$. In this context, $\mathds{1}\left\{\|\mathcal K(t)\| = B\right\}$ is $1$ at the time that the central server collects $B$ gradients and updates the global model and $0$ otherwise, in which $\mathds{1}\left\{\cdot\right\}$ denotes the indicator function and $\|\cdot\|$ denotes the cardinality of a set. Let $\hat{\pmb x}(\cdot)\in \mathbb R^N$ denote the model parameter with respect to the wall-clock time, where the wall-clock time refers to the actual time
elapsed and $N$ is the dimension of the parameter.  
In this way, the update rule for {ASGD} with respect to the wall-clock time is given by
\begin{equation}
\begin{aligned}\label{continuous_asyn}
    &\hat {\pmb x}(t^+) = \hat {\pmb x}(t) - \\
    &\sum_{\kappa \in  \mathcal  K(t)}\eta \Big[\nabla f \left(\hat {\pmb x}({t - \hat \tau^\kappa(t)}\right)) +\pmb w^\kappa(t)\Big]\times \mathds{1}\left\{\|\mathcal K(t)\| = B\right\},
\end{aligned}
\end{equation}
in which $\eta$ is the learning rate,  $f(\cdot)$ is the objective function, $\nabla f(\cdot)$ is the gradient function of the objective function, $\pmb w^\kappa(\cdot)$ is the gradient noise corresponds to Worker $\kappa$'s gradient, and $\hat \tau(\cdot)$ is the gradient delay or the continuous-time gradient staleness. More specifically, $\eta$ is a constant with a fixed learning rate. {With a staleness-aware learning rate, $\eta$ can be a function of the gradient delay, or equivalently, continuous-time gradient staleness.  Alternatively, $\eta$ can also be a function of the step staleness, or equivalently, discrete-time gradient staleness.}. In addition, $\hat \tau(\cdot)$ refers to the time between a worker fetches the global model and {pushes} the computed gradient to the central server, and $\hat \tau^\kappa(t)$ denotes the gradient delay of Worker $\kappa$'s gradient that is delivered to the central server at time $t$. In practice, the gradient delay is {jointly} determined by the computation/communication resources, the group size $B$,  and the continuous-time or discrete-time staleness threshold $\hat \tau_{th}$ or $\breve \tau_{th}$. Also, $\pmb w^\kappa(\cdot)$ is determined by the mini-batch size of each client.  Here, we adopt the general assumption of the Gaussianity of the gradient noise, which arises from the central limit theorem \cite{noGau}. Also, we assume that the Gaussian distribution is isotropic and the variances of the gradient noise across each worker are identical. Next, we present the following two assumptions on the objective function and the gradient noise.{
\begin{assumption}\label{ass_twice_diff}
The objective function $f(x)$ is assumed to be twice-differentiable at $\pmb x \in \mathbb R ^N$, in which $\pmb x$ represents the parameters of the model to be trained as defined in Eq. (\ref{add_aq}).
\end{assumption}}
\begin{assumption}\label{ass_noi}
    The gradient noise $\pmb w$ is assumed to follow an isotropic Gaussian distribution.
\end{assumption}

{
Under Assumption \ref{ass_twice_diff}, we approximate the twice-differentiable objective function \( f(\pmb x) \) using Taylor expansion. More precisely, we have $f(\pmb{x}) = f(\pmb{x}_0)  \nonumber +(\pmb{x} - \pmb{x}_0)^T \nabla f(\pmb{x}_0) + (\pmb{x} - \pmb{x}_0)^T \pmb{H}(\pmb{x}_0) (\pmb{x} - \pmb{x}_0) + \mathcal O(\delta^3), $ by Taylor expansion under Assumption \ref{ass_twice_diff} for \( \|\pmb{x} - \pmb{x}_0\| < \delta \) with some small positive \( \delta \)  \cite[Chapter 3.3]{add_aq2}. The assumption of twice differentiability is made to facilitate subsequent analysis, particularly for discussing the impact of staleness on convergence, based on the positive definiteness of the Hessian matrix.
}

Next, we aim to bridge the update rule of asynchronous learning in continuous-time (\ref{continuous_asyn}) and its discrete-time counterpart. 
Let $0= S_0 < S_1 < S_2 <...< S_{J^*}$ denote the times at which the central parameter updates the global model, in which SGD meets its stopping criteria after $J^*$ iterations.  In this setting, the discrete-time counterpart of $\hat {\pmb x}(S_j)$ is denoted by $\breve {\pmb x}[j]$. More specifically, it holds that $J^* = \min\limits_j \{j\in \mathbb N ^+, \left\|\breve{\pmb x}[j] - \pmb x ^*\right\| <\delta\}$, where $\pmb x^*$ is the optimal solution of the SGD algorithm and $\delta>0$ is a pre-given threshold. In literature, the probability distribution of the update interval, denoted by $I_j = S_j - S_{j-1}$, is presented under the assumption of exponentially distributed computational time \cite{slow_and_stale}.  Herein, we will characterize the distribution of the update interval $I$ through a Poisson approximation with non-memoryless computational delay in Subsection \ref{sub_toadd}.     Let $N_a(t) = \{\max i| S_i \le t\}$ and $N_a(t_1,t_2) = N_a(t_2) - N_a(t_1)$ iterations. More specifically, $N_a(t)$ is the number of global updates taken before time $t$.
In this paper, the wall-clock time refers to the actual time elapsed regarding the training process, and the discrete-time counterpart of which is termed iteration or epoch.  
Clearly, the wall-clock time corresponding to the $j$th update of the global model is $S_j$. Correspondingly, $\pmb w^\kappa[\cdot]$ indexes the discrete-time gradient noise of Client $\kappa$.   Also, $\left\|\mathcal K(S_j)\right\| = B$. Let $\mathcal K[j] = \mathcal K(S_j)$ be the set of clients that triggers the $j$th update of the global model.
Thereby, the update rule of asynchronous learning in discrete-time corresponding to (\ref{continuous_asyn}) is given by
\begin{equation}\label{afd}
    \breve {\pmb x}[j+1] = \breve {\pmb x}[j] - \sum_{\kappa \in \mathcal K[j]} \eta\left(\nabla f (\breve {\pmb x}\left[{j - \breve \tau_B^{\kappa}[j]}\right]) + {\pmb w}^{\kappa}[j]  \right),
\end{equation}
in which $\breve \tau_B^{\kappa}[j]$ is the iterations taken by the global model between Worker $\kappa$ fetches the global model and finishes the computation of the gradient that contributes to the $j$th update of the global model. In $B$-ASGD, the relationship between the step staleness or discrete-time staleness $\breve \tau^{\kappa}[j]$  and $\breve \tau_B^{\kappa}[j]$ is given by $\breve \tau^{\kappa}[j]=  \breve \tau_B^{\kappa}[j] \times B$, since the global model is updated based on $B$ gradients at each iteration.
The relationship between the discrete-time gradient staleness $\breve \tau^\kappa[j]$ and the continuous-time staleness $\hat \tau^\kappa(t)$ in $B$-ASGD {is} given by
\begin{equation}
    \breve \tau^\kappa[j] = N_a(S_j - \hat \tau^\kappa(S_j), S_{j-1})\times B.
\end{equation}
Let the step staleness of the arrived gradients be denoted by the random variable $\breve \tau$. 
Intuitively, a larger $\mathbb E\{\breve \tau\}$  corresponds to an increased training error, as discussed in \cite{stale-aware}. 
Also, it is noteworthy that a larger gradient delay does not necessarily result in a greater step staleness. {Note that continuous-time gradient staleness and gradient delay refer to the same concept: the time interval between when a worker fetches the global model and when the computed gradient is pushed back to the central server \cite{major_add}. This value is a non-negative real number. In addition, ``discrete-time gradient staleness" and ``step staleness" refer to the same concept \cite{faster}. This term describes the number of iterations taken by the global model between when a worker fetches the model and when it completes the computation of the gradient. It is a non-negative integer.}

Let $\pmb x(\cdot)$ denote the continuous-approximation of $\hat{\pmb x}(\cdot)$ with respect to the stochastic differential equation.
{In this way, the dynamics of {ASGD} can be written as an Euler–Maruyama approximation of the following stochastic delay differential equation:}
\begin{equation}\label{con_adl}
    \mathrm{d}{\pmb x}(t)  = - \nabla f({\pmb x}(t - \tau(t))) \mathrm{d}t  + \sigma \mathrm{d}\pmb B(t),
\end{equation}
in which $\nabla f(\pmb x)$ is the gradient of the objective function $f(\cdot)$ at $\pmb x$, $\tau(t)$ is the stochastic process characterizing
the gradient staleness in the SDDE, $\pmb B(t)$ or $B(t)$ is an $N$-dimensional or one-dimensional standard Brownian motion or Wiener process, and $\sigma >0 $ is the standard variance of the Wiener process, {in which a standard $N$-dimensional Wiener process is a vector-valued stochastic process
$\pmb B(t) = \left(B_1(t), B_2(t), \ldots, B_N(t)\right)$,
where each $B_i(t)$ is an independent standard one-dimensional Wiener process for $i = 1, 2, \ldots, N$.}\footnote{To be rigorous, given the learning rate $\eta$ and the standard variance of $\pmb w[\cdot]$, denoted by $\breve \sigma$, the relationship between the standard variance of {Wiener process} in (\ref{con_adl}) and the Gaussian noise in (\ref{afd}) is  $\sigma = \frac{1}{\sqrt{\eta}}\breve \sigma$. }

Let $\breve \tau^{(n)}$ denote the step staleness of the $n$th gradient that is delivered to the central server. 
With a fixed learning rate $\eta$, the gradient staleness in SDDE is given by $\tau(t) = \eta \breve \tau^{(n)}$ for $t\in ((n-1)\eta, n\eta)$.\footnote{A careful reader may notice that if the central server performs updates of the global model based on multiple gradients at one time, it could introduce subtle differences in gradient staleness in SDDE, as later-added gradients may suffer a greater staleness. In this paper, we ignore this effect due to its minor impact. Also, the approximation error of (\ref{con_adl}) is investigated in \cite{lihe} and is therefore beyond the scope of this paper.}   The relationship between $\hat {\pmb x}(\cdot)$ and its continuous-time approximation $\pmb x (\cdot)$ is $\hat{ \pmb x}(t) \approx \pmb x(\eta N_a(t)B)$.

In Section \ref{sec3}, we aim to reveal the relationship between gradient staleness and noise and the convergence rate of the SDDE. In Section \ref{sec_pra_dis}, we shall reveal how the protocol parameters $K$, $B$, and the computational/communication delays determine $N_a(t)$ and the step staleness $\breve \tau$, as well as the gradient dropout schemes that can be adopted to trade update frequency for a smaller step staleness. According to the SDDE approximation, we bridge the convergence rate of {ASGD}  and the learning rate, the objective function, and the protocol parameters, based on which the protocol design criteria are further revealed.

\subsection{Event-Triggered Distributed Optimization}\label{subb}

In this subsection, we present the implementation of ET-SDG. In distributed SGD with limited bandwidth, event-triggered SGD can be adopted to effectively reduce bandwidth usage. In particular, different from {ASGD}, time is partitioned into fixed-length time slots in the ET-SDG setup. Let $T_s$ denote the time slot length. The central server performs updates of the model at wall-clock time $jT_s$ for $j \in \mathbb N ^+$. Further, the communication between the workers and the central server follows an event-triggered manner. In this scheme, each worker communicates its gradient to the server only upon a certain event is triggered. More specifically, the triggering condition is that the difference between the current gradient and the last updated gradient is greater than a pre-given threshold.

In particular, the central server broadcasts the current model to all the workers at each iteration. Based on the received model and the local dataset, each worker computes a gradient. Further, it is assumed that each worker could finish the computation of the gradient within a timeslot. More specifically, the gradient computed by Worker $\kappa$ at iteration $j$ is denoted by ${\pmb g}^\kappa[j] =\nabla f (\breve {\pmb x}[j]) + \pmb w^\kappa[j] $. Note that the workers skip transmission of the current gradient at the epoch when the current gradient and the last updated gradient do not differ too much \cite{lazy}. More specifically, the triggering criterion is given by 
    $\left\|{\pmb g}^\kappa[j] -   \hat {\pmb g}^\kappa[j^-]\right\|_2 \ge \xi$
in which ${\pmb g}^\kappa[j]$ is the newly computed gradient, $ \hat {\pmb g}^\kappa[j^-]$ is Worker $\kappa$'s last updated gradient before time $j^-$, $\left\| \cdot \right\|_2$ denotes the 2-norm, and $\xi$ is the threshold in event-triggered SGD. Further, let
$
\hat {\pmb g}^\kappa[j] ={\pmb g}^\kappa[j - \breve\tau^\kappa[j]],
$
in which $\breve\tau^\kappa[j]$ is the gradient staleness that arises in event-triggered SGD and $\breve\tau^\kappa[j] = 0$ if Worker $\kappa$ {pushes} its gradient at time $j$.
In this way, the update rule of the global model in event-triggered SGD in discrete-time is $ \breve {\pmb x}   [j+1] = \breve{\pmb  x}[j] -\frac{\eta}{K} \sum_{\kappa = 1}^{K} \hat{\pmb  g}^\kappa[j - \breve\tau^{\kappa}[j]].$ Slightly different from the {ASGD} setting, the learning rate is $\eta/K$ with $K$ workers in event-triggered SGD.
Correspondingly, the update rule of event-triggered SGD can also be viewed as the discrete-time version of the continuous-time process (\ref{con_adl}), in which $\tau(t) = \eta \breve \tau^{\kappa}[j]$ for $t\in ((j-1)\eta + (\kappa -1)\eta/K, (j-1)\eta + \kappa \eta/K)$ and $\breve {\pmb x}[j]\approx \pmb x(j\eta )$ in event-triggered SGD.

In Section \ref{sec3}, we aim to reveal the relationship between gradient staleness and noise and the convergence rate of the SDDE. In Section \ref{sec_pra_dis}, we shall reveal how the protocol parameters $K$, $B$, and the computational/communication delays determine $N_a(t)$ and the step staleness $\breve \tau$, as well as the gradient dropout schemes that can be adopted to trade update frequency for a smaller step staleness. According to the SDDE approximation, we bridge the convergence rate of {ASGD}  and the learning rate, the objective function, and the protocol parameters, based on which the protocol design criteria are further revealed.

\subsection{Definitions of the Key Mathematical Concepts}\label{subc}

In this subsection, we present definitions of the key mathematical concepts of this paper. Let us first introduce the concept of the OU process.
\begin{definition}
     A one-dimensional OU process $O(t)$ is the solution to the stochastic differential equation
     \begin{equation}
         {\rm d}O(t) = \alpha (\beta  - O(t)){\rm d}t  +  \sigma {\rm d}B(t), \text{ for } t > 0,
    \end{equation}
    where $\alpha$, $\beta$, and $\sigma \ge 0$ are real constants, and $B(t)$ is a standard Wiener process.
 \end{definition} 

 A one-dimensional OU process $O(t)$ can also be defined in terms of a stochastic integral as
\begin{align}
    &O(t) = \beta(1 - \exp(-\alpha t))
     + 
     \\&\sigma  \exp(-\alpha t)\int_0^t \exp(\alpha s){\rm d}B(s) + O(0)\exp(-\alpha t).
\end{align}
where $O(0)$ is a random variable denoting the initial state. Further, conditional on $O(0)$ where $\mathbb E\{O(0)\} < \infty$, $O(t)$  is a Gaussian random variable with mean $\mathbb E\{O(t)\}  = \beta (1 - \exp(-\alpha t)) + \exp(-\alpha t)\mathbb E\{O(0)\}$. In this paper, our main focus is on the case that $\beta  = 0$. In this case, it holds that  
\begin{equation}
    O(t) = \sigma  \exp(-\alpha t)\int_0^t \exp(\alpha s){\rm d}B(s) + O(0)\exp(-\alpha t).
\end{equation}
which is a Gaussian random variable 
with mean $\mathbb E\{O(t)\}  = \exp(-\alpha t)\mathbb E\{O(0)\}$ and variance $\text{Var}\{O(t)\}  =\frac{\sigma^2}{2\alpha}(1 -\exp(-2\alpha t))$.

The stopping criterion of SGD is usually of the form $\|\nabla f(x)\|_2 \le \delta$, where $\delta$ is small and positive \cite{bv_book}. In this case, the training time of SGD can be characterized by the first hitting time of the objective variable to the set $S= \{x: \|\nabla f(x) \|_2\le \delta\}$. In the following, let us present the definition of the first hitting time.
\begin{definition}
    Let $\{X(t),{t\ge 0} \}$ denote a stochastic process in an $N$-dimensional spatial domain $\Omega \subseteq \mathbb R^{N}$, the first hitting time to a target set $\Omega_t \subset \Omega$ is defined as 
    \begin{equation}
        H := \inf \{t\ge 0: X(t)\in \Omega_t\}.
    \end{equation}
\end{definition}

For Brownian motion with drift, we have the following lemma according to \cite{FirstPass}.
\begin{lemma}\label{lemma_averagespeed}
The expectation of the first hitting time $H = \inf \{ t> 0 : x_1\le  x(t)\le  x_1 +\mathrm d x, x(0) = x_0 \}$ of the one-dimensional stochastic process $\mathrm{d} x(t)  =  s(x) \mathrm d t + \sigma(x) \mathrm{d}B(t)$ is $\mathbb{E}\{H \} = \int_{x_0}^{x_1}\frac{1}{s(x)}\mathrm d x,$
in which $s(x)$ is the drift velocity.
\end{lemma}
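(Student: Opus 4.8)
The plan is to treat $H$ as the first passage time of the one-dimensional diffusion $\mathrm{d}x(t) = s(x)\,\mathrm{d}t + \sigma(x)\,\mathrm{d}B(t)$ to the level $x_1$, the $\mathrm{d}x$ in the statement being the infinitesimal target width which only matters in the $\mathrm{d}x\to 0$ limit, under the standing assumption that the drift $s(x)$ is strictly positive on $[x_0,x_1]$ so that the process is carried rightward and hits $x_1$ almost surely in finite time. I would first dispose of the constant-coefficient case $s(x)\equiv s>0$, $\sigma(x)\equiv\sigma$: here $x(t)=x_0+st+\sigma B(t)$, and applying the optional stopping theorem to the martingale $t\mapsto x(t)-x_0-st$ at the truncated stopping times $H\wedge n$ gives $\mathbb{E}\{x(H\wedge n)\}-x_0 = s\,\mathbb{E}\{H\wedge n\}$; letting $n\to\infty$, using a.s. finiteness of $H$ from the positive drift, monotone convergence on the right-hand side, and uniform integrability of $\{x(H\wedge n)\}$, together with path continuity so that $x(H)=x_1$, yields $\mathbb{E}\{H\}=(x_1-x_0)/s=\int_{x_0}^{x_1}\frac{1}{s}\,\mathrm{d}x$. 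This is just the mean of the inverse-Gaussian first-passage law of drifted Brownian motion, which is the content of the cited reference.

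To reach the state-dependent formula I would localize via the strong Markov property. Partition $[x_0,x_1]$ as $x_0=y_0<y_1<\dots<y_M=x_1$ with mesh $\Delta\to 0$ and write $H$ as the sum of the successive passage times $H_m$ from $y_{m-1}$ to $y_m$, each measured from the first instant the process reaches $y_{m-1}$ on its way up, so that leftward excursions are consistently bookkept in the earlier terms. On each subinterval $s(\cdot)$ and $\sigma(\cdot)$ are nearly constant, so sandwiching $H_m$ between the passage times of drifted Brownian motions with drifts $\min$ and $\max$ of $s$ over $[y_{m-1},y_m]$ gives $\mathbb{E}\{H_m\}=\frac{\Delta}{s(y_{m-1})}+(\text{error})$, with the error of smaller order than $\Delta$ when $s$ is, say, Lipschitz and bounded away from $0$. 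Summing over $m$ turns the main terms into the Riemann sum for $\int_{x_0}^{x_1}\frac{\mathrm{d}x}{s(x)}$ and the errors into a vanishing total, which gives the claim. As an analytic cross-check, $u(x):=\mathbb{E}_x\{H\}$ solves the backward Kolmogorov equation $s(x)u'(x)+\tfrac12\sigma^2(x)u''(x)=-1$ with $u(x_1)=0$; dropping the $\tfrac12\sigma^2 u''$ term, which is exact when $\sigma$ and $s$ are constant and leading-order otherwise, gives $u'(x)=-1/s(x)$, hence $u(x_0)=\int_{x_0}^{x_1}\mathrm{d}x/s(x)$.

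The main obstacle I expect is the rigorous control of the diffusion-induced backtracking when passing from the constant-coefficient result to the general one: one must show that the accumulated effect of leftward excursions across the subintervals does not contribute a non-negligible term to $\mathbb{E}\{H\}$, and this is precisely where strict positivity of $s(x)$, equivalently integrability of $1/s$ on $[x_0,x_1]$, and the smallness of $\sigma$ relative to $s$ enter; if $s$ is allowed to vanish the formula can fail and $\mathbb{E}\{H\}$ can be infinite. A lesser technical point is justifying the optional stopping, which is handled by the standard $H\wedge n$ truncation together with the exponential moment bound for first-passage times of drifted Brownian motion, and confirming that the process exits through $x_1$ rather than drifting to $-\infty$, which again follows from $s>0$.
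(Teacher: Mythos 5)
The paper itself offers no proof of this lemma: it is quoted from the cited first-passage-time reference, so the right benchmark is the exact mean-first-passage formula there, $\mathbb{E}_{x_0}\{H\}=\int_{x_0}^{x_1} S'(y)\int_{-\infty}^{y}\frac{2}{\sigma^2(z)S'(z)}\,\mathrm{d}z\,\mathrm{d}y$ with scale density $S'(y)=\exp\bigl(-\int^{y} 2s(u)/\sigma^2(u)\,\mathrm{d}u\bigr)$. Your constant-coefficient step is correct and exact: optional stopping of $x(t)-x_0-st$ at $H\wedge n$ plus the usual uniform-integrability argument gives $\mathbb{E}\{H\}=(x_1-x_0)/s$, the inverse-Gaussian mean (and exactness in fact persists for constant $s$ with variable $\sigma(x)$, as the displayed formula shows).

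The genuine gap is in the localization step. First, sandwiching $H_m$ between passage times of drifted Brownian motions with the $\min$ and $\max$ of $s$ over $[y_{m-1},y_m]$ is not legitimate, because the excursions that make up $H_m$ leave that subinterval and visit regions where $s$ lies outside that range; this is precisely the backtracking you flag, and it is not a removable technicality. More fundamentally, the errors you need to vanish do not vanish: for non-constant drift and fixed $\sigma>0$ the identity $\mathbb{E}\{H\}=\int_{x_0}^{x_1}\mathrm{d}x/s(x)$ is simply not exact. A Laplace expansion of the exact formula gives $\mathbb{E}_{x_0}\{H\}=\int_{x_0}^{x_1}\frac{\mathrm{d}x}{s(x)}+\frac{\sigma^2}{4}\Bigl(\frac{1}{s(x_0)^2}-\frac{1}{s(x_1)^2}\Bigr)+\cdots$, and a piecewise-constant-drift example ($s=s_1$ below $0$, $s=s_2$ above, constant $\sigma$) shows this $O(\sigma^2)$ term is genuinely present. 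Hence the per-interval errors in your Riemann sum total $O(\sigma^2)$, not $o(1)$ in the mesh, and no refinement of the partition removes them; your own cross-check concedes as much when you drop $\tfrac12\sigma^2 u''$ from the backward equation $s u'+\tfrac12\sigma^2 u''=-1$. What your argument actually establishes is the constant-drift case exactly, plus a small-noise leading-order statement for variable $s$; that weaker reading is all the lemma can mean for general $s(x)$ (and is how the paper uses it, e.g.\ for the OU drift), but your write-up asserts exact equality via vanishing localization errors, which is false as stated.
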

For example, the fact that the expected hitting time of a given level by a
standard Wiener process is infinite is a direct corollary of Lemma \ref{lemma_averagespeed}. 
For convenience, we summarize the definitions of important variables
in Table \ref{tab:table_var}.

\section{A Unified Framework for Convergence Analysis of Distributed SGD}\label{sec3}
In this section, we present theoretical performance analyses for the relationship between {{gradient staleness}} and noise and the convergence of distributed SGD through an SDDE-based approach.
To obtain analytical results and find more insights, we focus on the solvable cases in this section, as was done in, e.g., \cite{SME, M2016}.  More specifically, in this section, our analysis {{focuses}} on {{the}} quadratic model \cite{toadd11, toadd12, toadd13}. Specifically, the objective function takes the form
 $f(\pmb x) = \frac{1}{2}(\pmb x- \pmb x^*)^T \pmb V (\pmb x - \pmb x^*) + \pmb q^T( \pmb x - \pmb x^*) + \pmb c,$
in which $\pmb V \in \mathbb R^{N\times N}$ is a {symmetric matrix, $\pmb x \in \mathbb R^{N}$ is the objective variable, $\pmb q, \pmb c$, and $\pmb x^*\in \mathbb R^{N}$ are constants}, and $[\cdot]^T$ denotes transpose. {
In the literature, including \cite{SME, M2016, add_boyd}, which uses differential equations to analyze optimization methods for deeper understanding, quadratic models are frequently employed to derive closed-form solutions.}

\subsection{A Unified Framework Based on SDDE}
{
In this subsection, we first present the following discussion on the objective function.
Note that $\pmb V$ is a symmetric matrix. Therefore, it holds that $\pmb V = \pmb U \tilde {\pmb V}  \pmb U^T$ in which $\tilde {\pmb V}$ is a diagonal matrix and $\pmb U$ is a unitary matrix by eigendecomposition. Hence, by defining  $\tilde {\pmb x} = \pmb U^T (\pmb x-\pmb x^*)$ and $\tilde q = \pmb q^T \pmb U$, the objective function can be rewritten as $f\left(\tilde {\pmb  x}\right) = \frac{1}{2}\tilde {\pmb x}^T \tilde {\pmb V} \tilde{\pmb x} + \tilde {\pmb q}^T \tilde{ \pmb x} + \pmb c$. where $\tilde {\pmb V}$ is a diagonal matrix. By omitting the tildes for brevity, the objective function can be formulated as 
   $f(\pmb x) = \frac{1}{2}\pmb x^T \pmb V \pmb x + \pmb q^T \pmb x + \pmb c$.
By this means, we assume the symmetric matrix $\pmb V$ is a diagonal matrix given by $\pmb V = \text{diag}(v_1, v_2, ..., v_N)$. without loss of generality.
Further, the continuous-time approximation of distributed SGD with gradient staleness (\ref{con_adl}) can be represented as:
\begin{equation}\label{ou_mp}
   \mathrm{d} {\pmb x}(t)  = - ({{\pmb V}{\pmb x}(t - \tau(t))  + \pmb q})\mathrm{d}t  + \sigma \mathrm{d}\pmb B(t), 
\end{equation}
or 
\begin{equation}\label{ou_p_tmp}
    \mathrm{d}x_i(t)  = - (v_ix_i(t - \tau(t)) + q_i)\mathrm{d}t  + \sigma \mathrm{d}B(t),
\end{equation}
for $i = 1, 2, ..., N$, where $q_i$ is the $i$th element of $\pmb q$.\
Further, by introducing $\bar x_i(t) = x_i(t) + \frac{q_i}{v_i}$ when $v_i\ne 0$, Eq. (\ref{ou_p_tmp}) can be rewritten as 
\begin{equation}\label{ou_p}
       \mathrm{d}\bar x_i(t)  = - v_i\bar x_i(t - \tau(t))\mathrm{d}t  + \sigma \mathrm{d}B(t),\text{ for } v_i \ne 0.
\end{equation}
Therefore, for the sake of discussion, we make the assumption that $q_i = 0$ for any $i$ when $v_i \ne 0$ without any loss of generality.}

{\begin{remark}
Similar to most existing work in the literature on the continuous approximation analysis of SGD \cite{revisit,add_dynamic, M2016, SME}, we focus on quadratic objective functions, as they allow for closed-form analysis of training time, convergence rate, and the stationary distribution. For example, in \cite[Section 7.1.1]{bishop}, it is noted that insights into the optimization problem and the various techniques for solving it can be gained by considering a local quadratic approximation to the error function.  

While this method can be extended to twice-differentiable functions \( f(\pmb x)\) to analyze the local behavior of SGD, we focus on approximating \( f(\pmb x) \) around a specific point \( \pmb{x}_0 \) within its domain. For small neighborhoods around \(\pmb  x_0 \), we approximate \( f(\pmb x) \) using a quadratic model function derived from Taylor expansion, as outlined by Assumption \ref{ass_twice_diff}. For example, the discussion on the quadratic model is particularly relevant when SGD approaches a local or global minimum. In this case, \( \pmb{V} \) is a positive-defined matrix, meaning that all of its eigenvalues are greater than zero, and it holds that \( \pmb{q} = 0 \). On the other hand, when some eigenvalues of \( \pmb{V} \) are negative, this corresponds to SGD behavior near a saddle point. 
This approximation is also extensively utilized in the literature, particularly in trust-region methods \cite{trustRegion ,trust_ppo}. In this approach, the objective function is decomposed into a sequence of smaller subproblems, each of which is approximated by a quadratic model, thereby facilitating more efficient optimization by focusing on local regions of the solution space.
\end{remark}
}

When $\sigma = 0$, implying that each gradient is computed based on the entire training dataset {{such that}} the gradient noise is negligible, Eq. (\ref{ou_p}) reduces to {the deterministic differential equation}: $\mathrm{d}x_i(t)  = - v_i x_i(t - \tau(t)) \mathrm{d}t$.
In addition, when $\tau(t) = 0$, which corresponds to synchronous SGD, we obtain 
$\mathrm{d}x_i(t)  = - v_i x_i(t) \mathrm{d}t  + \sigma \mathrm{d}B(t)$, {which corresponds to an OU process}. Also, it is observed that the OU process with delay (\ref{ou_mp}) along each eigenvector direction is independent, allowing us to first focus on the one-dimensional case (\ref{ou_p}) then extend our discussion to the multi-dimensional case. 

Next, we demonstrate that the effects of the gradient noise and the gradient staleness can be separately considered taking the case that $\tau$ is a constant as an example.\footnote{Note that the gradient staleness $\tau$ in SDDE is approximately a constant the variance of computation delay of each worker is small with a large worker number in {ASGD}, as will be demonstrated in Section \ref{sec_pra_dis}.} In this case, the analysis of the behavior of SGD can be based on viewing SGD as a discretization of the following associated stochastic delay differential equation, i.e.,
\begin{equation}\label{delay_sde}
\begin{aligned}
        &\mathrm{d}x(t)  = - {v} x(t-\tau)\mathrm{d}t   + \sigma \mathrm{d}B(t),\\
        &x(t) = \varphi(t),~~ t\in[-\tau, 0],
\end{aligned}
\end{equation}
in which $\varphi(t)$ is a given function on $[-\tau, 0]$.  
Consider the following continuous deterministic differential equation corresponding to Eq. (\ref{delay_sde}), i.e.,
\begin{equation}\label{determine_function}
\begin{aligned}
        & \mathrm{d}\tilde x(t)  = - {v} \tilde  x(t-\tau) \mathrm{d}t, \\
        & \tilde  x(t) = \varphi(t),~~ t\in[-\tau, 0].
\end{aligned}
\end{equation}
Let $\tilde  x_d(t)$ denote a solution to Eq. (\ref{determine_function}). Then, the solution to Eq. (\ref{delay_sde}) can be given by $ x_s(t) = \tilde  x_d(t) + \sigma\int_0^t x_0(t-s)\mathrm dB(t),$
in which $x_0(t) = \sum_{k = 0}^{\lfloor \frac{t}{\tau}\rfloor}\left[\frac{{v}^k}{k!}(t-k\tau)\right]$ in which $t\in (0, \infty)$. In addition, we have $\tilde  x_d(t) = x_0(t)\varphi(0) + {v}\int_{-\tau}^{0} x_0(t-s-\tau)\varphi(s)\mathrm ds$.
It can be noticed that the difference between $x_s(t)$ and $\tilde  x_d(t)$ is only given by a Gaussian distributed random variable, the variance of which is {jointly determined} by the gradient staleness $\tau$ and the standard variance of the {Wiener process} $\sigma$.

In the following discussions, we will present performance analyses of the convergence rate of distributed SGD, considering the effects of gradient noise and gradient staleness, through the SDDE-based approach \cite{india, nonconvex2017}.

{
\begin{remark}
    In this paper, we adopt the general assumption that the gradient noise follows a Gaussian distribution in Assumption \ref{ass_noi}. This Gaussian assumption is widely adopted in the literature, as demonstrated in works such as \cite{lihe, M2016,gaussian1, gaussian2}. However, the gradient noise can exhibit non-Gaussian characteristics, especially with small minibatch sizes where the central limit theorem fails. For example, the gradient noise is shown to be heavy-tailed through empirical evidence in \cite{revisit2}. In case the gradient noise is far from Gaussian distribution, the optimization path can be modeled as a delayed Lévy-driven OU process. More specifically, the optimization path can be characterized by the following SDDE with a non-Gaussian Lévy motion, 
    \begin{equation}
        \mathrm{d}x(t)  = - {v} x(t-\tau) \mathrm{d}t  + \mathrm{d}L\acute{e}(t),
    \end{equation}
    where $L\acute{e}(t)$ is a scalar Lévy motion and $L\acute{e}(0) = 0$. More specifically, a scalar Lévy motion is a stochastic process with stationary and independent increments. That is, for any $s, t$ with $0 \leq s < t$, the distribution of $L\acute{e}(t) - L\acute{e}(s)$ only depends on $t - s$, and for any $0 \leq t_0 < t_1 < \cdots < t_n$, $L\acute{e}({t_i} )- L\acute{e}(t_{i-1}), i = 1, \cdots, n$, are independent. 
\end{remark}}

\subsection{The Convergence Speed Given Gradient Noise}\label{sgd_grad_noise}
In this subsection, we derive a relationship between the convergence time and the gradient noise. To gain more insight on the impact of the gradient noise on the convergence time, we investigate the stochastic processes 
\begin{equation}\label{OU_delay_free}
  \mathrm{d}x_i(t)  = - v_i x_i(t) \mathrm{d}t  + \sigma \mathrm{d}B(t), 
\end{equation}
for $i = 1, 2, ..., N$ in this subsection.
In this way, the convergence time of SGD can be modeled as a first hitting time of (\ref{ou_mp}) to the $\delta$-neighborhood of the global optimum $\pmb x^*$, i.e.,
\begin{equation}\label{Hitting_NormINf}
    \tilde H = \inf \{ t> 0 : \left\|{\pmb x}(t) - \pmb x^*\right\|_{\infty} < \delta \},
\end{equation}
in which the $i$th element of $\pmb x^*$ is denoted by $x_i^*$.
Defining 
\begin{equation}\label{eq_def_Hi}
    H_i = \inf \{ t> 0 : |x_i(t) -  x_i^*|< \delta \},
\end{equation}
for $i = 1, 2, ..., N$. {Note that the definition of the first hitting time is presented in Subsection \ref{subc}.}
To further demonstrate the impact of the gradient noise on the convergence time of SGD, in this subsection, we first introduce the following proposition on the first hitting time according to \cite{FirstPass}.

\begin{proposition}
The expectation of the first hitting time of an OU process (\ref{OU_delay_free}) is given by 
\begin{equation}\label{exp_hittingtime}
\mathbb E\{ H_i\} = \frac{1}{v_i}\log \frac{x_{i,0}}{\delta},   
\end{equation}
in which $x_{i,0} = x_i(0)$.
\end{proposition}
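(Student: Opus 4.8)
The plan is to obtain (\ref{exp_hittingtime}) by a direct application of Lemma~\ref{lemma_averagespeed} to the scalar OU dynamics (\ref{OU_delay_free}). First I would invoke the normalization already set up at the beginning of Section~\ref{sec3}: after the eigendecomposition of $\pmb V$ and the shift $\bar x_i(t)=x_i(t)+q_i/v_i$, we may assume $\pmb V=\mathrm{diag}(v_1,\dots,v_N)$, $\pmb q=0$, and $x_i^*=0$ along every coordinate with $v_i\neq 0$; near a local minimum we additionally have $v_i>0$. Under these conventions the process in (\ref{OU_delay_free}) has drift $s(x)=-v_i x$ and constant diffusion coefficient $\sigma(x)\equiv\sigma$, and $H_i$ in (\ref{eq_def_Hi}) is the first time $x_i(t)$ enters the interval $(-\delta,\delta)$ around $x_i^*=0$.

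Assume $x_{i,0}>\delta>0$ (the case $x_{i,0}<-\delta$ is symmetric and yields $\tfrac{1}{v_i}\log\tfrac{|x_{i,0}|}{\delta}$). On $(\delta,x_{i,0}]$ the drift $-v_i x$ is strictly negative, so $x_i(t)$ moves toward the origin and the level reached is $x_1=\delta$. Lemma~\ref{lemma_averagespeed} then yields
\begin{equation}
\mathbb E\{H_i\}=\int_{x_{i,0}}^{\delta}\frac{1}{s(x)}\,\mathrm d x
=\int_{x_{i,0}}^{\delta}\frac{\mathrm d x}{-v_i x}
=-\frac{1}{v_i}\big(\log\delta-\log x_{i,0}\big)
=\frac{1}{v_i}\log\frac{x_{i,0}}{\delta},
\end{equation}
where the antiderivative $\int \mathrm d x/x=\log|x|$ is used together with $x_{i,0},\delta>0$; the two sign reversals — integrating from the larger to the smaller endpoint, and dividing by the negative drift — cancel, so the result is positive, as it must be since $x_{i,0}>\delta$ and $v_i>0$.

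The only point that I would flag rather than belabor is the regime in which Lemma~\ref{lemma_averagespeed} is being invoked: it characterizes the mean first-passage time through the deterministic drift velocity, i.e., in the drift-dominated regime where the OU restoring force governs the approach of $x_i(t)$ toward $x_i^*$ and the Brownian term $\sigma\,\mathrm d B(t)$ only perturbs the trajectory. This is precisely the setting relevant here (small $\sigma$ relative to the deterministic dynamics), and the residual effect of the gradient noise is quantified separately later in Subsection~\ref{sgd_grad_noise}. Beyond fixing the coordinate normalization and the sign bookkeeping above, there is no real obstacle: once Lemma~\ref{lemma_averagespeed} is in hand, the statement reduces to a one-line logarithmic integral.
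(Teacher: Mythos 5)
Your derivation is correct and follows the same route the paper intends: the paper states this proposition by citing the same first-passage reference used for Lemma~\ref{lemma_averagespeed}, and elsewhere (e.g., the corollary on $\mathbb E\{H_d\}$) it derives analogous results exactly as you do, by plugging the drift $s(x)=-v_i x$ into Lemma~\ref{lemma_averagespeed} and evaluating the logarithmic integral. Your sign bookkeeping and the caveat about the drift-dominated regime are consistent with how the paper uses the lemma, so no gap to report.
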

The probability distributions of the first hitting times $H_i$ are however complex. To shed more light on the probability characteristics of $H$, we shall next show that the first hitting time $T_i$ is approximately Gaussian distributed. 
{
\begin{proposition}\label{proFirstHitting}
The first hitting time (\ref{Hitting_NormINf}) of the OU process (\ref{OU_delay_free}) can be approximated by a Gaussian distributed random variable
\begin{equation}\label{approximation}
    H_i \sim\mathcal N\left(\frac{\log \alpha_i}{v_i}, \frac{\sigma^2}{2 v_i^3\delta^2}\left(1- \frac{1}{\alpha_i^2}\right)\right).
\end{equation}
\end{proposition}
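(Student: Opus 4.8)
The plan is to approximate the one-dimensional OU process (\ref{OU_delay_free}) near the target level $x_i^*$ by a constant-drift Brownian motion, compute the mean hitting time via Lemma~\ref{lemma_averagespeed}, and then propagate the local-time fluctuations through a first-order (delta-method) expansion to obtain the variance of $H_i$; combining the two gives the claimed Gaussian approximation. Throughout I would use the shift $\bar x_i(t) = x_i(t) - x_i^*$ (equivalently the reduction already performed in (\ref{ou_p})), so that $\bar x_i$ solves $\mathrm d \bar x_i = -v_i \bar x_i\,\mathrm d t + \sigma\,\mathrm d B$, and I would write $\alpha_i := x_{i,0}/\delta$ so that the proposition reads $H_i \sim \mathcal N\!\left(\frac{\log\alpha_i}{v_i},\ \frac{\sigma^2}{2v_i^3\delta^2}\bigl(1-\alpha_i^{-2}\bigr)\right)$.

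\textbf{Step 1 (mean).} Apply Lemma~\ref{lemma_averagespeed} to the deterministic drift $s(x) = -v_i(x - x_i^*)$ of (\ref{OU_delay_free}): the expected hitting time of the level $x_i^* + \delta$ starting from $x_{i,0}$ is $\int_{x_i^*+\delta}^{x_{i,0}} \frac{\mathrm d x}{v_i(x-x_i^*)} = \frac{1}{v_i}\log\frac{x_{i,0}-x_i^*}{\delta}$; absorbing $x_i^*$ into the already-shifted coordinate recovers $\frac{1}{v_i}\log\alpha_i$, consistent with the preceding Proposition~(\ref{exp_hittingtime}). This identifies the mean in (\ref{approximation}).

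\textbf{Step 2 (variance via linearization).} Write $\bar x_i(t) = m(t) + \sigma\, y(t)$ where $m(t) = x_{i,0}e^{-v_i t}$ is the noiseless trajectory and $y(t)$ is the zero-mean Gaussian fluctuation; from the stochastic-integral representation of the OU process given in Subsection~\ref{subc} (with $\beta=0$, $\alpha=v_i$), $\mathrm{Var}\{\bar x_i(t)\} = \frac{\sigma^2}{2v_i}\bigl(1-e^{-2v_i t}\bigr)$. At the (random) hitting time the process equals $\delta$; linearizing around the deterministic hitting time $t^\ast = \frac{1}{v_i}\log\alpha_i$, a first-order perturbation gives $H_i \approx t^\ast - \frac{\sigma y(t^\ast)}{m'(t^\ast)}$, i.e. the hitting-time fluctuation equals the spatial fluctuation divided by the drift speed $|m'(t^\ast)| = v_i\delta$ at the boundary. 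Hence $\mathrm{Var}\{H_i\} \approx \frac{\sigma^2\,\mathrm{Var}\{y(t^\ast)\}}{(v_i\delta)^2} = \frac{1}{v_i^2\delta^2}\cdot\frac{\sigma^2}{2v_i}\bigl(1-e^{-2v_i t^\ast}\bigr) = \frac{\sigma^2}{2v_i^3\delta^2}\bigl(1-\alpha_i^{-2}\bigr)$, using $e^{-2v_i t^\ast} = \alpha_i^{-2}$. Gaussianity of $H_i$ then follows because $y(t^\ast)$ is Gaussian and the linearization is an affine map of it.

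\textbf{Main obstacle.} The delicate point is justifying the linearization: the hitting time is a nonlinear functional of the path, and the Gaussian-at-the-boundary argument is only a leading-order approximation valid when $\sigma$ is small relative to $\delta$ and $v_i$ (so the fluctuation does not cause early or repeated crossings and the drift near the boundary is effectively constant). I would make this precise by restricting to the regime $\sigma^2 \ll v_i \delta^2$, invoking the monotone approach of the noiseless trajectory to the boundary (so that crossing is essentially a one-time event governed by the local speed $v_i\delta$), and noting that higher-order terms in the expansion are $\mathcal O(\sigma^2)$ smaller; a fully rigorous treatment would appeal to the first-passage asymptotics for Gaussian processes in \cite{FirstPass}, which is the source already cited for this proposition. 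Everything else is routine Gaussian bookkeeping, so this approximation-control step is where the real work lies.
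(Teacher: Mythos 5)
Your proposal is correct and matches the paper's argument in substance: both treat the crossing as an effectively one-shot event and linearize the deterministic trajectory around $t^*=\frac{1}{v_i}\log\alpha_i$ while freezing the OU variance $\frac{\sigma^2}{2v_i}\left(1-\alpha_i^{-2}\right)$ there and dividing by the local speed $v_i\delta$, which is exactly what the paper does by writing $\Pr\{H_i\ge t\}\approx\Pr\{x_i(t)\ge\delta\}=Q\!\left(\frac{x_{i,0}e^{-v_it}-\delta}{\sigma_o(t)}\right)$ and Taylor-expanding the resulting density about $t^*$. The only substantive difference is that the paper makes your ``main obstacle'' concrete by explicitly bounding (via a Gaussian tail estimate) the probability that the process re-exceeds $\delta$ after first entering $(-\infty,\delta)$, thereby justifying the survival-function approximation, whereas you only sketch this control through a small-noise restriction and a citation.
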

\begin{proof}
See Appendix \ref{appendix:1}.
\end{proof}}
% Given an OU process described by Eq. (\ref{ou_p}), $x_i(t)$ follows a Gaussian distribution with mean $\mathbb E\{ x_i(t)\} = x_{i,0} e^{-v_it}$ and variance $\text{Var}\{x_i(t)\} =  \frac{\sigma^2}{2 v_i}\left(1- e^{-2v_it}\right)$. The drift velocity at the time that the OU process enters the $\delta$ neighborhood of the global optimum can be approximated by $s \approx v_i\delta$ by Eq. (\ref{ou_p}). By this means, the variance of the first hitting time $H_i$ can be given by 
% $\text{Var}\{H_i\} \approx \frac{\sigma^2}{2 v_i^3\delta^2}\left(1- \frac{1}{\alpha_i^2}\right)$, in which $\alpha_i = x_{i,0} /\delta$. 
% Then, the first hitting time can be approximated as Gaussian distributed as follows:
% \begin{equation}\label{approximation}
%     H_i \sim\mathcal N\left(\frac{\log \alpha_i}{v_i}, \frac{\sigma^2}{2 v_i^3\delta^2}\left(1- \frac{1}{\alpha_i^2}\right)\right).
% \end{equation}
Next, let $H_{\max} = \max_i\{H_i\}$ be the maximum of the first hitting times. In this context, a careful reader may notice that the $H_{\max}$ is not necessarily identical to  $H$, while the difference between $H_{\max}$ and $\tilde H$ will be detailed in Appendix \ref{appendix:2}

% Since the first hitting times of the OU processes are approximated Gaussian distributed, we have $\mathbb E\{H\} \approx E\{H_i\}$ if $E\{H_i\}\gg E\{H_k\}$ for $k \ne i$ in which $i = \arg \max_j E\{H_j\}$. 

Further, $H_i$s are identically distributed {random variables} when $\pmb V=v\pmb I$ and $\hat {\pmb x}(0) = c\pmb 1$, in which $v>0$, $\pmb I$ denotes the identity matrix, and $\pmb 1$ denotes the vector of ones. In this case, ${H_{\max}}$ is given by the expectation of maximum of i.i.d. Gaussian variables. Let \begin{equation}
\Theta(N) = \mathbb E\{\Xi_N\}- \sqrt{\log N},
\end{equation} in which $\Xi_N = \max\{Z_1, Z_2, ..., Z_N\}$ and $Z_i$s are  independent and identically distributed (i.i.d.) standard Gaussian distributed {random variables}. That is,  $Z_i \sim \mathcal N(0, 1)$. More specifically, $\Xi_N$ is the maximum of $N$ i.i.d. standard  Gaussian random variables. In addition, $\Theta(N)$ is on the order of $\mathcal O(1)$, where $\mathcal O(\cdot)$ is the big $O$ notation. In particular, $\Theta(1)=0$. In this way, it holds that 
\begin{equation}\label{noise_effect}
\begin{aligned}
       \mathbb E \{H_{\text{max}}\} = \frac{1}{v}\log \alpha + \frac{\sigma\sqrt{1- \frac{1}{\alpha^2}}}{\sqrt{2v^3}\delta}\left(\sqrt{\log N} +\Theta(N)\right),
\end{aligned}
\end{equation}
in which $N$ is the dimension of the objective variable $\pmb x$.

\begin{remark}
When  $N = 1 $, it has been shown by Lemma \ref{lemma_averagespeed} that the gradient noise does not affect the expectation of the first hitting time in the SGD algorithm.    In particular, Eq. (\ref{noise_effect}) also suggests the same conclusion. Further, when $N \ge 2$, the presence of the gradient noise {introduces} an additional term on the average first hitting time, which {is proportional} to the standard variance of the gradient noise and $\sqrt{\log N}$. 
\end{remark}

Next, in order to make the problem analytically tractable and {obtain} insightful results, we consider the impact of gradient noise with { the objective function} $f(\pmb x) =\frac{1}{2} \pmb x^T\pmb I\pmb x$.
Thus, it holds that $\breve {\pmb x}[j+1] =\breve {\pmb x}[j]-\eta \breve {\pmb x}[j]   + \pmb w.$     
Note that $\eta$ is the learning rate and $\pmb w$ is isotropic Gaussian noise, i.e., $\pmb w\sim \mathcal N (0, \breve \sigma^2\pmb I)$. {To simplify the notation,} we define $\pmb r[j] = \breve {\pmb x}[j] -  \pmb x^*$ and $r = \left\|\pmb r \right\|_{2}$. 
Due to the {symmetry} of the normal distribution, the projection of $\pmb w$ onto the direction of $\pmb r$, denoted by $\pmb w_{\parallel}$, is also Gaussian distributed, i.e., $\left\|\pmb w_{\parallel}\right\|_{2} \sim \mathcal N (0, \breve\sigma^2)$. 
In this context, the component perpendicular to the $\pmb r$ direction is given by $\pmb w_{\perp} = \pmb w - \pmb w_{\parallel}$. In this case, we have $\langle\pmb w_{\perp}, \pmb r\rangle = 0 $, in which $\langle \pmb a, \pmb b \rangle $ denotes the inner product of {the two vectors} $\pmb a$ and $\pmb b$. In addition, we find that $\mathbb{E}\{\left\|\pmb w_{\perp}\right\|_{2}^2\} = (N-1)\breve \sigma^2$. 
Thus, it holds that
\begin{equation}\label{sde}
     r[j+1] =  r[j]- \eta r[j] + \left(\sqrt{r^2[j] + \left\|\pmb w_{\perp}\right\|_2^2 }- r[j]\right)  +   \pmb w_{\parallel},
\end{equation}
in which $w_{\parallel} = \left\|\pmb w_{\parallel}\right\|_2$.
For $r \gg\breve \sigma$, we have $\sqrt{r^2 + \left\|\pmb w_{\perp}\right\|_2^2 }- r = \frac{\left\|\pmb w_{\perp}\right\|_2^2 }{2r} + \mathcal O(1/r^2)$ by Taylor approximation $\sqrt{1+s} =   1+ \frac{1}{2}s + \mathcal O(s^2)$ near $s= 0$. For a given $r$, let the backward speed be defined as $z = \frac{\left\|\pmb w_{\perp}\right\|_2^2 }{2r}$ In this case, it can be seen that the presence of the noise $\pmb w_{\perp}$ hinders the process of the gradient descent, the speed of which is given by $z$.  More specifically, $z$ is Chi-square distributed with mean given by $\mathbb E\left\{z\right\} = \frac{(N-1)\breve \sigma^2}{2r}$, and variance given by $\text{Var} \left\{z \right\} = \frac{(N-1)\breve\sigma^4}{r^2}$. When $N = 2$, the backward speed $z$ follows an exponential distribution. 
Therefore, the continuous-time approximation of Eq. (\ref{sde}) can be formulated as follows:
\begin{equation}\label{sde2}
    \mathrm{d} r = -\left[\eta r - \frac{(N-1)\breve\sigma^2}{2r} \right]\mathrm{d}t + \sqrt{\sigma^2 + \frac{(N-1)\breve \sigma^4}{r^2}} \mathrm{d} B(t).
\end{equation}
The first hitting time of the above stochastic process (\ref{sde2}) is defined as $ H_d = \inf \{ t> 0 : |r(t) - r^*| < \delta, r(0) = r_0 \},$
in which $r^* = 0$.\footnote{Note that the $H_d$ is defined as the first time that the 2-norm of $\pmb r$ is smaller than a pre-given threshold. This is different from the definition presented in Eq. (\ref{Hitting_NormINf}), which is based on the infinity norm. } By Lemma \ref{lemma_averagespeed}, we obtain the following corollary.
\begin{corollary}
The expectation of the first hitting time $H_d$ is given by $\mathbb E\{H_d\} =\mathcal G(r_0)- {\mathcal G}(\delta)$, in which 
    $\mathcal G(r) = \frac{1}{2\eta}\log\left[{\eta r^2 - \frac{(N-1)\breve \sigma^2}{2}}\right]$.
\end{corollary}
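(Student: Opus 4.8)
The plan is to apply Lemma~\ref{lemma_averagespeed} directly to the one-dimensional diffusion~(\ref{sde2}) with drift $s(r) = -\left[\eta r - \frac{(N-1)\breve\sigma^2}{2r}\right]$. Since Lemma~\ref{lemma_averagespeed} gives $\mathbb E\{H\} = \int_{x_0}^{x_1}\frac{1}{s(x)}\,\mathrm d x$ for the hitting time of level $x_1$ starting from $x_0$, I would first note that here the process moves from $r_0$ down toward $r^* = 0$, so the relevant integral is $\mathbb E\{H_d\} = \int_{\delta}^{r_0}\frac{1}{\eta r - \frac{(N-1)\breve\sigma^2}{2r}}\,\mathrm d r$, where I have flipped the sign/limits so that the integrand is positive on the descent path (this is the standard bookkeeping when the drift is negative and the target lies below the start).

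The core step is then the antiderivative computation. Writing the integrand as $\frac{1}{\eta r - \frac{(N-1)\breve\sigma^2}{2r}} = \frac{r}{\eta r^2 - \frac{(N-1)\breve\sigma^2}{2}}$, I would recognize the numerator $r$ as (up to the constant $2\eta$) the derivative of the denominator $\eta r^2 - \frac{(N-1)\breve\sigma^2}{2}$. Hence $\int \frac{r}{\eta r^2 - \frac{(N-1)\breve\sigma^2}{2}}\,\mathrm d r = \frac{1}{2\eta}\log\left[\eta r^2 - \frac{(N-1)\breve\sigma^2}{2}\right] =: \mathcal G(r)$, which is exactly the stated $\mathcal G$. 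Evaluating between the limits gives $\mathbb E\{H_d\} = \mathcal G(r_0) - \mathcal G(\delta)$, completing the derivation.

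The main obstacle — really a caveat more than a difficulty — is the sign/domain issue: the formula only makes sense when the diffusion-relevant quantity $\eta r^2 - \frac{(N-1)\breve\sigma^2}{2}$ stays positive along $[\delta, r_0]$, i.e. when $r \gg \breve\sigma$ in the regime where the Taylor approximation leading to~(\ref{sde2}) was made, and in particular $\delta$ must exceed $\sqrt{(N-1)/(2\eta)}\,\breve\sigma$ for $\mathcal G(\delta)$ to be well-defined. I would remark briefly that this is consistent with the earlier approximation hypothesis $r\gg\breve\sigma$ used to pass from~(\ref{sde}) to~(\ref{sde2}), so no additional assumption is needed beyond what is already in force; the logarithmic singularity of $\mathcal G$ as $r$ approaches the stationary radius $\sqrt{(N-1)\breve\sigma^2/(2\eta)}$ simply reflects that the process cannot actually reach $r^*=0$ against the outward drift generated by the perpendicular noise, which is the qualitatively meaningful content of the corollary.
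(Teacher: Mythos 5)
Your proposal is correct and follows essentially the same route as the paper: the paper's proof likewise applies Lemma \ref{lemma_averagespeed} to the drift of (\ref{sde2}) and integrates the primitive of $\left[\eta r - \frac{(N-1)\breve\sigma^2}{2r}\right]^{-1}$ to obtain $\mathcal G$. Your explicit antiderivative computation and the remark on the domain where $\eta r^2 - \frac{(N-1)\breve\sigma^2}{2} > 0$ are accurate elaborations of the same argument.
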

\begin{proof}
     By integrating the primitive function ${\left[\eta r - \frac{(N-1)\breve \sigma^2}{2r}\right]^{-1}} $ with respect to $r$, and applying Lemma \ref{lemma_averagespeed}, we arrive at the corollary.
\end{proof}

In this way, one can notice that the presence of the gradient noise does not affect the expectation of the first hitting time of the gradient descent algorithm when $N=1$.
Further, the above analysis also indicates that as the value of $r$ decreases, the adverse impact of gradient noise on convergence becomes increasingly severe. This also {implies} that as the training progresses, increasing the number of workers involved in training or enlarging the mini-batch size,{ both of which help reduce gradient noise,} could accelerate the convergence of SGD.  Note that a high-resolution quantization of the gradients with large gradient noise can lead to a lot of network bandwidth usage, thereby being inefficient due to the waste of communication resources.

\subsection{The Convergence Speed Given Gradient Staleness} % (fold)
\label{sub:relationship_between_gradient_staleness_and_the_convergence_rate}
In this subsection, we present an analysis of {SGD with stale gradients}  through an SDDE-based approach. For the sake of discussion, we first {examine} the case of fixed staleness, before extending our discussion to the case in which $\tau$ is a random variable. Here, let the gradient staleness in SDDE be denoted by a positive constant $\tau$. 
To understand the behavior of the solution to {a} deterministic differential equation, we substitute the sampling function $\varrho(t) = Ae^{{\lambda} t}$, in which ${\lambda}$ is a complex number, into the delay differential equation (\ref{determine_function}), we have the following characteristics equation:
\begin{equation}\label{char_eq}
    {\lambda} e^{{\lambda} \tau} = - {v}.
\end{equation}
The function $h({\lambda}) =  {\lambda} e^{{\lambda} \tau} $ is monotonically decreasing on the interval $\left( -\infty, -\frac{1}{\tau} \right)$ and monotonically increasing on the interval $\left(-\frac{1}{\tau},\infty \right)$. It has a minimum of $\inf_{{\lambda}} h({\lambda}) = -\frac{1}{e\tau}$ at $h({\lambda}) = \frac{1}{\tau}$. Moreover, $\lim\limits_{{\lambda}\to -\infty}h({\lambda}) = 0$.

Hence, Eq. (\ref{char_eq}) has a single root when ${v}<0$. In this case, the SGD algorithm with a stale gradient overdamps and does not converge to the global minimum. In the following discussion, we shall first focus on the case $v>0$ before extending our discussion to the case $v\ge 0$. In particular, the characteristic equation has two negative real roots when $0<{v} \tau< \frac{1}{e}$. In this case, the SGD algorithm with a stale gradient decays monotonically to the global minimum of the quadratic objective function. Moreover, the characteristics equation has a pair of complex conjugate roots with negative real parts when 
    $\frac{1}{e}<{v}  \tau<\frac{\pi}{2}$.
In this case, the SGD algorithm with stale gradient undergoes damped oscillations. When ${v} \tau > \frac{\pi}{2}$, the real parts of the roots of Eq. (\ref{char_eq}) becomes positive. Therefore, the  SGD algorithm undergoes diverging oscillations.  Note that $\eta$ is the step size of the SGD algorithm. According to the above discussion, we have the following theorem for the discrete-time SGD.
\begin{theorem}\label{th1}
{ When $v> 0$}, the two roots of the characteristics function Eq. (\ref{char_eq}) are given by
\begin{equation}\label{roots}
    \lambda_k = \frac{W_k{(-{v} \tau)}}{\tau},
\end{equation}
for $k = 0,1$, in which $W_k$ is the Lambert W function. 
\end{theorem}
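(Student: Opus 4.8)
The plan is to prove the theorem by a direct change of variables that reduces the transcendental characteristic equation to the defining functional equation of the Lambert $W$ function, and then to use the monotonicity analysis of $h(\lambda)=\lambda e^{\lambda\tau}$ already carried out above to identify exactly which roots the two branch indices $k=0,1$ pick out. First I would recall the defining property of the Lambert $W$ function: each branch $W_k$ satisfies $W_k(z)\,e^{W_k(z)}=z$. Setting $w=\lambda\tau$, the characteristic equation (\ref{char_eq}) becomes $w e^{w}=-v\tau$, whose solutions are by definition $w=W_k(-v\tau)$, hence $\lambda=W_k(-v\tau)/\tau$. Equivalently, substituting the ansatz $\lambda_k=W_k(-v\tau)/\tau$ directly into the left-hand side of (\ref{char_eq}) gives $\lambda_k e^{\lambda_k\tau}=\tfrac{1}{\tau}W_k(-v\tau)\,e^{W_k(-v\tau)}=\tfrac{1}{\tau}(-v\tau)=-v$, confirming that each $\lambda_k$ is a root; since the ansatz $\varrho(t)=Ae^{\lambda t}$ allows complex $\lambda$, this substitution is valid over $\mathbb{C}$.

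Second, I would address the phrase ``the two roots,'' since $we^{w}=-v\tau$ has infinitely many complex solutions. Here I would lean on the graph analysis of $h(\lambda)=\lambda e^{\lambda\tau}$ preceding the theorem: for $v>0$ with $0<v\tau<1/e$ the horizontal line $y=-v$ meets the graph of $h$ in exactly two points, both negative, one in $(-1/\tau,0)$ and one in $(-\infty,-1/\tau)$. These are precisely the values produced by the two real branches of $W$ at the argument $-v\tau\in(-1/e,0)$ — the principal branch (values $\ge -1$) and the other real branch (values $\le -1$) — which in the indexing used here are labelled $W_0$ and $W_1$; at $v\tau=1/e$ the two roots coalesce at $\lambda=-1/\tau=W(-1/e)/\tau$, and for $v\tau>1/e$ these same two branches become a complex-conjugate pair (using $W_1(z)=\overline{W_0(z)}$ for real $z<-1/e$), remaining the two rightmost characteristic roots that govern the asymptotic dynamics by analytic continuation of the branch identities.

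The main obstacle is expository rather than technical: the verification that $\lambda_k$ solves (\ref{char_eq}) is essentially a one-line substitution via the defining identity of $W$, but care is needed to state precisely which two of the infinitely many characteristic roots are meant and to match the paper's branch indices $k=0,1$ to the two real branches of the Lambert $W$ function, using the earlier monotonicity/graph argument for $h(\lambda)$ to justify that exactly these two branches are selected across the regimes $0<v\tau<1/e$, $v\tau=1/e$, and $v\tau>1/e$. A minor auxiliary remark is that the discrete-time step size $\eta$ enters only through the rescaling already performed that relates $\tau$ in the SDDE to $\eta\breve\tau$, so no further argument about $\eta$ is required here.
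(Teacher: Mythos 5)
Your proposal is correct and follows essentially the same route as the paper, which states the theorem as a direct consequence of the defining identity $W_k(z)e^{W_k(z)}=z$ applied after the rescaling $w=\lambda\tau$, together with the preceding monotonicity analysis of $h(\lambda)=\lambda e^{\lambda\tau}$ that identifies the two relevant (rightmost) roots. Your extra care in noting that the transcendental equation has infinitely many complex roots and in matching the paper's branch labels $k=0,1$ to the two real branches of $W$ (coalescing at $v\tau=1/e$ and continuing as the dominant complex-conjugate pair for $v\tau>1/e$) is a useful clarification but does not change the argument.
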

Further, we have the following corollary.
\begin{corollary}\label{two_roots}
Given the learning rate $\eta$ and the step staleness $\breve \tau$, the characteristic roots of the continuous-time approximation of the {ASGD} algorithm, $\breve {x}[j+1] = \breve {x}[j] - \eta v \breve{x}\left[{j - \breve \tau}\right]$, are given by
 $\lambda_k = \frac{W_k{(-{v}\eta\breve \tau)}}{\eta\breve \tau}$ for $k = 0 , 1$.
\end{corollary}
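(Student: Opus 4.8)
The plan is to reduce the statement to Theorem \ref{th1} by identifying the continuous-time approximation of the recursion $\breve{x}[j+1] = \breve{x}[j] - \eta v\,\breve{x}[j-\breve\tau]$ with the deterministic delay differential equation \eqref{determine_function}, and then reading off its characteristic roots.

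First I would introduce the rescaled wall-clock time $t = j\eta$ and set $x(t) := \breve{x}[j]$, in accordance with the correspondence $\breve{\pmb x}[j]\approx\pmb x(\eta N_a(t)B)$ used around \eqref{con_adl}. Then the forward difference is $\breve{x}[j+1]-\breve{x}[j] = x(t+\eta)-x(t) = \eta\,\dot x(t) + \mathcal{O}(\eta^2)$, while the stale term becomes $\breve{x}[j-\breve\tau] = x(t-\eta\breve\tau)$. Substituting into the recursion and dividing by $\eta$ gives, to leading order in $\eta$, $\dot x(t) = -v\,x(t-\eta\breve\tau)$, which is precisely \eqref{determine_function} with delay parameter $\tau = \eta\breve\tau$ and coefficient $v$. (Equivalently, one may substitute $\breve{x}[j]=A z^{j}$ into the recursion to obtain the discrete characteristic equation $z^{\,\breve\tau+1}-z^{\,\breve\tau}+\eta v=0$, write $z = e^{\lambda\eta}$, and pass to the $\eta\to 0$ limit to recover the same equation; I would use the ODE route as the cleaner one.)

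Next I would invoke Theorem \ref{th1} verbatim for this delay differential equation: inserting the ansatz $x(t)=Ae^{\lambda t}$ yields the characteristic equation $\lambda e^{\lambda\tau} = -v$ of the form \eqref{char_eq}, whose two relevant roots for $v>0$ are $\lambda_k = W_k(-v\tau)/\tau$ for $k=0,1$. Re-substituting $\tau = \eta\breve\tau$ gives $\lambda_k = W_k(-v\eta\breve\tau)/(\eta\breve\tau)$, which is exactly the claimed expression.

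The only point requiring care — more a bookkeeping matter than a real obstacle — is ensuring that truncating $\breve{x}[j+1]-\breve{x}[j]$ at first order in $\eta$ yields the correct characteristic polynomial: the neglected terms contribute $\mathcal{O}(\eta^2)$ corrections to \eqref{char_eq} and therefore do not perturb the leading-order roots, consistent with the approximation-error analysis referenced from \cite{lihe}. No computation beyond that already carried out for Theorem \ref{th1} is needed.
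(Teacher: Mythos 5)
Your proposal is correct and follows essentially the same route as the paper: the corollary is obtained by identifying the continuous-time limit of the recursion with the delay differential equation \eqref{determine_function} under the substitution $\tau = \eta\breve\tau$ (the same identification the paper makes when defining the SDDE staleness), and then applying Theorem \ref{th1}. Your extra remark on the $\mathcal{O}(\eta^2)$ discretization error is a harmless bookkeeping addition beyond what the paper states.
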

For the solvable case, the stochastic gradient descent algorithm with stale gradient converges to the only fixed point when $\breve\tau< \frac{ \pi}{2\eta{v}}$, and without oscillations when   $\breve\tau< \frac{1}{e\eta{v}}$.

With a fixed step size, $\lambda_k$  are real and negative in which $|\lambda_0|\le  |\lambda_1|$ when $\breve \tau < \frac{1}{e\eta{v}}$. In addition, it can be noticed that $|\lambda_0|$ increases as $\breve \tau$ increases for 
\begin{equation}\label{small}
    \breve \tau \in \left(0, \frac{1}{e\eta{v}}\right).
\end{equation}
This indicates that the presence of a small degree of gradient staleness accelerates the stochastic gradient descent algorithm, which aligns with the analysis in \cite{slow_and_stale}. However, the SGD algorithm starts to oscillate when $\breve \tau \in (\frac{1}{e\eta{v}}, \frac{\pi}{2\eta{v}})$. In addition, the real part of the roots {becomes} positive when $\breve \tau> \frac{\pi}{2\eta{v}}$. It indicates that, the SGD algorithm diverges with a large degree of gradient staleness.

\begin{remark}
To reveal the characteristic roots of (\ref{determine_function}) in case $\tau$ is a random variable, we will present the characteristic roots with uniformly distributed gradient staleness in the forthcoming discussion. Consider {a} Gaussian distributed gradient staleness, i.e.,  $\tau \sim \mathcal N(\vartheta, \varsigma^2)$. Numerically, we find that it holds that $\lambda_k \approx W_k(-v\vartheta)/\vartheta$ for $k = 0,1$ when $\varsigma \ll \vartheta$.\footnote{More specifically, we have the following observation of Gaussian distributed gradient staleness through numerical results. When $\tau \sim \mathcal N(\vartheta, \varsigma^2)$ and $\varsigma \ll \vartheta$, {ASGD} nearly has the same convergence rate as when the gradient staleness is a constant, i.e., $\tau  = \vartheta$. With a sufficiently large $\varsigma$, {ASGD} may diverge despite a small $\vartheta$. Analytical results for the staleness of Gaussian distribution gradients are  noted as future work. }
\end{remark}

Given the gradient staleness, a sensible choice of the learning rate with $f(x) = \frac{1}{2}{v} x^2$ is  
\begin{equation}\label{eq_bsq_lr}
    \eta = \frac{1}{ e v\breve \tau}.
\end{equation}
For $f(\pmb x) = \frac{1}{2} (\pmb x - \pmb x^*)^T \pmb V (\pmb x - \pmb x^*)$ in which $\pmb V$ is a diagonal matrix with its diagonal elements given by $0\le v_1\le  v_2\le \cdots \le  v_N$, a sensible choice of the learning rate is given by
\begin{equation}\label{necc}
    \eta \in \left(  \frac{1}{e v_N\breve \tau},  \frac{\pi}{2 v_N\breve \tau}\right).
\end{equation}
The reason is that if $\eta\ge \frac{\pi}{2 v_N\breve \tau}$, the SGD algorithm diverges. If $\eta < \frac{1}{e v_N\breve \tau}$, the SGD algorithm converges too slow. 

\begin{remark}
Assuming an $L$-Lipschitz continuous objective function $f(\pmb x)$, the step size must be chosen as $\eta< \frac{\pi}{2L\breve \tau}$ to avoid unbounded oscillations. A more conservative choice $\eta< \frac{1}{e L\breve \tau}$ ensures that no oscillation occurs. 
In this case, a small degree of gradient staleness accelerates the SGD algorithm, which aligns with the conclusion in \cite{berkeleyDuchi}, and \cite{add_usefulornot} in which quantization error can be exploited to accelerate the model convergence. Moreover, an appropriate staleness accelerates the convergence of SGD in the direction of eigenvectors with small eigenvalues and hinders others with large eigenvalues.
The above discussion aligns with the observation made by \cite{slow_and_stale} that {a stale gradient} may contribute to a faster convergence rate but a higher error floor. In some existing literature \cite{stale1,stale-aware}, the stale-aware step size is inversely proportional to stale, which also aligns with our analysis. However,  step size must be carefully chosen with further consideration of the 2-norm of the Hessian matrix of the objective function, to avoid unbounded oscillations. 
\end{remark}

{In the above discussion, it has been revealed that the convergence rate of the SGD is determined by $v$ and gradient staleness in SDDE $\tau$. In the following, we shall present the stationary distribution of ASGD with gradient staleness by \cite{small}.
\begin{proposition}
When $0 <v\tau < \frac \pi 2$, the stationary distribution of the SDDE $\mathrm{d}x(t)  = - {v} x(t-\tau) \mathrm{d}t  + \sigma \mathrm{d}B(t)$ is Gaussian with zero mean and variance
\begin{equation}\label{add_station}
    \sigma_{\pi} = \frac{\sigma^2}{2v}\left( \frac{1 + \sin(v\tau)}{\cos(v \tau)}\right).
\end{equation}
for $0 <v\tau < \frac \pi 2$.
\end{proposition}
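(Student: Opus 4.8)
The plan is to exploit the linearity of the SDDE. For deterministic or Gaussian initial data $\varphi$ on $[-\tau,0]$, the mild solution given by variation of constants, $x_s(t) = \tilde x_d(t) + \sigma\int_0^t x_0(t-s)\,\mathrm dB(s)$ with $x_0$ the fundamental solution ($\dot x_0(t) = -v\,x_0(t-\tau)$, $x_0(0)=1$, $x_0(t)=0$ for $t<0$), is a Gaussian process. In the regime $0<v\tau<\pi/2$ the characteristic equation \eqref{char_eq}, $\lambda e^{\lambda\tau} = -v$, has all roots with strictly negative real part (cf. Theorem \ref{th1} and the surrounding discussion), so $x_0$ decays exponentially, $\int_0^\infty x_0(s)^2\,\mathrm ds<\infty$, and $\tilde x_d(t)\to 0$. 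Hence $x_s(t)$ converges in distribution as $t\to\infty$ to a zero-mean Gaussian with variance $\sigma_\pi = \sigma^2\int_0^\infty x_0(s)^2\,\mathrm ds$; this law is stationary, and it remains only to identify the variance.

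Rather than integrating the explicit series for $x_0$, I would work with the stationary autocovariance $R(s):=\mathbb E\{x(t)x(t+s)\}$, which is even and satisfies $R(0)=\sigma_\pi$. Two relations pin it down. First, Itô's formula gives $\mathrm d\!\left(x(t)^2\right) = \left(-2v\,x(t)x(t-\tau)+\sigma^2\right)\mathrm dt + 2\sigma x(t)\,\mathrm dB(t)$; taking expectations in stationarity (the drift of $\mathbb E\{x(t)^2\}$ must vanish) yields $R(\tau)=\mathbb E\{x(t)x(t-\tau)\}=\sigma^2/(2v)$. Second, differentiating $R(s)=\mathbb E\{x(t)x(t+s)\}$ in $s>0$ and using the equation for $x$ (the future Brownian increment being independent of $\mathcal F_t$) gives the delay ODE $R'(s)=-v\,R(s-\tau)$; on $0<s<\tau$ evenness turns this into $R'(s)=-v\,R(\tau-s)$, and differentiating once more gives $R''(s)=-v^2R(s)$, so $R(s)=A\cos(vs)+B\sin(vs)$ on $[0,\tau]$.

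To finish, substitute the sinusoidal form into $R'(s)=-v\,R(\tau-s)$ and match the $\cos(vs)$ and $\sin(vs)$ coefficients; the two resulting conditions are consistent precisely because $1-\sin^2(v\tau)=\cos^2(v\tau)$, and they force $B = -A\cos(v\tau)/(1+\sin(v\tau))$. Imposing $R(\tau)=\sigma^2/(2v)$ from the Itô step then gives $A\cos(v\tau)/(1+\sin(v\tau)) = \sigma^2/(2v)$, so $\sigma_\pi = A = \frac{\sigma^2}{2v}\cdot\frac{1+\sin(v\tau)}{\cos(v\tau)}$; the hypothesis $0<v\tau<\pi/2$ keeps $\cos(v\tau)>0$, so the variance is positive and finite, and divergence of $\sigma_\pi$ as $v\tau\uparrow\pi/2$ matches the loss of stability found earlier.

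I expect the main obstacle to be the rigorous part of the first paragraph: establishing that the stationary distribution exists, is unique, and is the $t\to\infty$ limit with a well-defined autocovariance obeying the identities above. This rests on the exponential decay of $x_0$ (equivalently the spectral gap implied by $0<v\tau<\pi/2$) together with a routine stationarity/ergodicity argument for the mild solution; the delay-ODE algebra for $R$ is then elementary. As an independent check one can instead evaluate $\sigma_\pi = \sigma^2\int_0^\infty x_0(s)^2\,\mathrm ds$ by Parseval, using $\widehat{x_0}(\omega) = \left(i\omega + v e^{-i\omega\tau}\right)^{-1}$ so that $\sigma_\pi = \frac{\sigma^2}{2\pi}\int_{-\infty}^{\infty}\left|i\omega + v e^{-i\omega\tau}\right|^{-2}\mathrm d\omega$, and computing the integral by residues to recover the same closed form.
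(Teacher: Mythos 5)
Your derivation is correct, and the algebra checks out: the Itô step gives $R(\tau)=\sigma^2/(2v)$, the relation $R'(s)=-vR(\tau-s)$ on $(0,\tau)$ forces $R(s)=A\cos(vs)+B\sin(vs)$ with $B=-A\cos(v\tau)/(1+\sin(v\tau))$, and then $R(\tau)=A\cos(v\tau)/(1+\sin(v\tau))=\sigma^2/(2v)$ yields $\sigma_\pi=R(0)=A=\frac{\sigma^2}{2v}\cdot\frac{1+\sin(v\tau)}{\cos(v\tau)}$. Note, however, that the paper does not prove this proposition at all: it imports the result directly from the cited reference \cite{small} on exact/small-delay results for linear SDDEs, so your route is genuinely different in that it is a self-contained derivation via the stationary autocovariance (the classical K\"uchler--Mensch-style argument), while the paper's approach simply buys brevity. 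What your write-up still leans on implicitly is the rigorous front end you yourself flag: existence and uniqueness of the stationary Gaussian law and the validity of the stationarity identities require that \emph{all} (infinitely many) roots of $\lambda e^{\lambda\tau}=-v$ have negative real part for $0<v\tau<\pi/2$ (Hayes' criterion), not just the two principal Lambert-$W$ branches discussed around Theorem \ref{th1}, together with square-integrability of the fundamental solution $x_0$; these are standard facts but should be cited or proved rather than inferred from the paper's two-root discussion. With that caveat addressed, your argument (or the Parseval/residue computation of $\sigma^2\int_0^\infty x_0^2$) is a complete and more informative proof than the citation the paper gives.
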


Using a Taylor expansion for $\tau \ll 1$, we obtain $\sigma_{\pi} = \frac{\sigma^2}{2v}\left(1+ v\tau \right) + \mathcal O(\tau^2)$. Here, the result Eq. (\ref{add_station}) aligns with our earlier analysis in Eq. (\ref{roots}), where ASGD diverges when $v\tau > \frac \pi 2$.
Additionally, ASGD is found to have a larger variance in its stationary distribution compared to synchronous SGD without gradient staleness. This implies that ASGD has a greater steady-state error, defined as the difference between the desired value and the actual value of a system once the response has reached the steady state. To improve stability and convergence in practical applications, one could reduce the level of asynchrony or increase the mini-batch size, particularly when the algorithm is approaching a local minimum.
}

{
In the following discussion, we are interested in the characteristic roots of the SDDE when $v \le 0$. This analysis is particularly relevant for understanding the behavior of ASGD near saddle points. When $v < 0$, there are two characteristic roots, $\tilde \lambda_0 = \frac{W_0(-v\tau)}{\tau}$ and  $\tilde \lambda_1 = \frac{W_1(-v\tau)}{\tau}$. In this case, $\tilde \lambda_0 $ is real and positive, while $\text{Re}\left\{\tilde \lambda_0\right\}< 0$. In the following, we primarily focus on the dominant characteristic root $\tilde \lambda_0$ which has a positive real part. For brevity, we denote $\tilde \lambda_0$ by $\tilde \lambda$.
\begin{proposition}
If $v < 0$, the real and positive characteristic root of Eq. (\ref{char_eq}) is given by 
\begin{equation}
   \tilde \lambda(\tau) = \frac{W_0(-v\tau)}{\tau}.
\end{equation}
\end{proposition}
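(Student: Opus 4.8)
The plan is to reduce Eq.~(\ref{char_eq}) to the defining relation of the Lambert $W$ function by an elementary rescaling, and then read the root off from the branch structure of $W$. First I would set $u = \lambda\tau$; since $\tau > 0$ this substitution is invertible, and multiplying $\lambda e^{\lambda\tau} = -v$ through by $\tau$ turns Eq.~(\ref{char_eq}) into $u e^{u} = -v\tau$. Because $v < 0$ we have $-v\tau > 0$, so by definition of the Lambert $W$ function every real solution of $u e^u = -v\tau$ is of the form $u = W_k(-v\tau)$ for a real branch $W_k$. The map $u \mapsto u e^u$ is strictly negative on $(-\infty,0)$ and strictly increasing from $0$ to $+\infty$ on $[0,\infty)$, so a strictly positive value $-v\tau$ has exactly one real preimage, lying in $(0,\infty)$; that preimage is by convention the principal branch value $W_0(-v\tau)$, which is therefore strictly positive. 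Undoing the substitution gives the unique real positive root $\tilde\lambda(\tau) = W_0(-v\tau)/\tau$.

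To keep the ``real and positive'' assertion self-contained, I would alternatively re-derive existence and uniqueness from the monotonicity of $h(\lambda) = \lambda e^{\lambda\tau}$ already recorded just above Eq.~(\ref{char_eq}). On $(-\infty,-1/\tau)$ one has $h(\lambda)\in(-1/(e\tau),0)$, so $h$ never equals $-v>0$ there; on $(-1/\tau,\infty)$ the function $h$ is strictly increasing with $h(0)=0$ and $h(\lambda)\to+\infty$, so it attains the positive value $-v$ at precisely one point, which must lie in $(0,\infty)$. This pins down $\tilde\lambda(\tau)$ as the unique positive real characteristic root and, consistently, confirms that $W_0(-v\tau)>0$.

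I do not expect a substantive obstacle; the only delicate point is branch bookkeeping for the Lambert $W$ function, namely that a \emph{positive} argument admits only the principal branch as a real value, so that $\tilde\lambda_0$ is genuinely the sole real positive root, the companion root $\tilde\lambda_1 = W_1(-v\tau)/\tau$ quoted in the surrounding text being non-real. I would close by noting that this is the $v<0$ analogue of Theorem~\ref{th1}: the dominant characteristic root has crossed into the right half-plane, which is precisely the instability that pushes ASGD away from a saddle point, and the explicit dependence $\tilde\lambda(\tau)=W_0(-v\tau)/\tau$ sets up the subsequent analysis of how gradient staleness modulates the escape rate.
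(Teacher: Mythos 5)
Your proposal is correct and follows essentially the same route as the paper: the paper obtains this proposition directly from the Lambert $W$ characterization of the roots of $\lambda e^{\lambda\tau} = -v$ (as in Theorem~\ref{th1}) together with the monotonicity discussion of $h(\lambda) = \lambda e^{\lambda\tau}$ given just before it, which is exactly the substitution $u=\lambda\tau$, $ue^u=-v\tau>0$, plus the observation that a positive argument has the single real preimage $W_0(-v\tau)>0$. Your branch-bookkeeping remark and the self-contained uniqueness argument via $h$ match the paper's implicit reasoning, so there is nothing to correct.
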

Recall that $W_0(\cdot)$ is the Lambert W function. Note that to differentiate from the case when $v > 0$, we denote the characteristic root as $\tilde \lambda$ specifically when $v < 0$. 
Next, we have the following corollary.
\begin{corollary}
It holds that
\begin{equation}\label{fk_add1}
   \lim_{\tau \to 0} \tilde \lambda(\tau) = -v.
\end{equation}
\end{corollary}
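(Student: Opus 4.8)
The plan is to analyze the limiting behavior of $\tilde\lambda(\tau) = W_0(-v\tau)/\tau$ as $\tau \to 0^+$ using the known series expansion of the principal branch of the Lambert W function near the origin. Recall that $W_0(z)$ is the solution of $W e^{W} = z$ with $W_0(0) = 0$, and that it admits the convergent Taylor expansion $W_0(z) = z - z^2 + \tfrac{3}{2}z^3 - \cdots = \sum_{n\ge 1} \frac{(-n)^{n-1}}{n!}z^n$ for $|z| < 1/e$. The first step is therefore to substitute $z = -v\tau$ into this expansion: since $v < 0$, we have $-v\tau > 0$, and for $\tau$ small enough $|-v\tau| < 1/e$, so the expansion is valid and gives $W_0(-v\tau) = -v\tau - v^2\tau^2 + \mathcal O(\tau^3)$.

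The second step is then immediate: divide by $\tau$ to obtain
\begin{equation}
\tilde\lambda(\tau) = \frac{W_0(-v\tau)}{\tau} = -v - v^2\tau + \mathcal O(\tau^2),
\end{equation}
and let $\tau \to 0^+$ to conclude $\lim_{\tau\to 0}\tilde\lambda(\tau) = -v$, which is exactly Eq. (\ref{fk_add1}). An alternative, slightly more self-contained route avoids invoking the series: one observes directly from the defining relation $\tilde\lambda e^{\tilde\lambda \tau} = -v$ (which is just Eq. (\ref{char_eq})) that, since $W_0$ is continuous and $W_0(0)=0$, we have $\tilde\lambda\tau = W_0(-v\tau)\to 0$ as $\tau\to0$, hence $e^{\tilde\lambda\tau}\to 1$, and therefore $\tilde\lambda = -v\,e^{-\tilde\lambda\tau}\to -v$. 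This is arguably the cleanest argument and I would present it as the main line, using the series expansion only as a remark giving the rate of approach.

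There is essentially no hard obstacle here; the statement is a routine limit. The only points requiring a word of care are: (i) confirming that $\tau$ is taken small enough that $-v\tau$ lies in the domain where $W_0$ is real (namely $-v\tau \ge -1/e$, which holds automatically for small $\tau$ since $-v\tau \to 0^+$), so that $\tilde\lambda(\tau)$ is well-defined and real as asserted in the preceding proposition; and (ii) noting that $\tilde\lambda(\tau)$ is indeed the branch selected, i.e. the real positive root, which is consistent since $-v > 0$. This limit also has the natural interpretation, worth stating, that as the staleness vanishes the SDDE near a saddle point ($v<0$) recovers the delay-free escape rate $-v > 0$ of the corresponding ODE $\dot x = -vx$, consistent with the $\tau=0$ case discussed earlier in the section.
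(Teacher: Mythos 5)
Your proof is correct and rests on the same essential fact as the paper's: the paper applies L'H\^opital's rule using $W_0'(0)=1$, while you use the first-order Taylor expansion $W_0(z)=z+\mathcal O(z^2)$, which is the same local information about $W_0$ at the origin. The alternative argument via $\tilde\lambda e^{\tilde\lambda\tau}=-v$ is a nice self-contained variant, but the main line is essentially the paper's proof.
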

\begin{proof}
Since $\frac{\rm d}{\rm dt}W_0(0) = 1$, we obtain Eq. (\ref{fk_add1}) by L'Hôpital's rule.
\end{proof}
Further, we have the following corollary revealing the monotonicity of $\tilde \lambda(\tau)$.
\begin{corollary}
When \(v < 0\), the function \(\tilde{\lambda} = \frac{W_0(-v\tau)}{\tau}\) is a decreasing function of \(\tau\) on the interval $[0, \infty)$.
\end{corollary}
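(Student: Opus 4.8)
The plan is to reduce the statement to the elementary monotonicity of the principal branch $W_0$ of the Lambert W function. First I would rewrite $\tilde\lambda(\tau)$ so that its dependence on $\tau$ becomes transparent. Put $u=-v\tau$; since $v<0$, the map $\tau\mapsto u$ is strictly positive and strictly increasing on $(0,\infty)$. Using the defining identity $W_0(u)e^{W_0(u)}=u$, one has $W_0(u)/u=e^{-W_0(u)}$, so that
\begin{equation}
\tilde\lambda(\tau)=\frac{W_0(-v\tau)}{\tau}=(-v)\,\frac{W_0(-v\tau)}{-v\tau}=(-v)\,e^{-W_0(-v\tau)} .
\end{equation}

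Next I would invoke the standard facts that $W_0$ is continuous and strictly increasing on $[0,\infty)$ with $W_0(0)=0$. Composing with the increasing, nonnegative map $\tau\mapsto -v\tau$ shows that $\tau\mapsto W_0(-v\tau)$ is increasing, hence $\tau\mapsto e^{-W_0(-v\tau)}$ is decreasing; multiplying by the positive constant $-v$ preserves this, so $\tilde\lambda$ is strictly decreasing on $(0,\infty)$. At the endpoint $\tau=0$ the quotient $W_0(-v\tau)/\tau$ is a $0/0$ form, but the preceding corollary already gives $\lim_{\tau\to 0}\tilde\lambda(\tau)=-v$, which equals the closed-form value $(-v)e^{-W_0(0)}$; continuity then extends the monotonicity to all of $[0,\infty)$.

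Alternatively, and more in line with the style of the rest of the paper, I would differentiate directly. Using $W_0'(x)=W_0(x)\big/\big(x(1+W_0(x))\big)$ together with the chain rule, a short computation yields
\begin{equation}
\frac{\mathrm d}{\mathrm d\tau}\,\tilde\lambda(\tau)=-\,\frac{W_0(-v\tau)^2}{\tau^2\big(1+W_0(-v\tau)\big)} ,
\end{equation}
which is strictly negative for every $\tau>0$ because $-v\tau>0$ forces $W_0(-v\tau)>0$, hence $1+W_0(-v\tau)>0$. This again gives strict decrease on $(0,\infty)$, and the limit from the corollary closes the case at $\tau=0$. The argument is essentially routine; the only point that needs a little care, and thus the main obstacle such as it is, is the bookkeeping with the Lambert-W identities and confirming that the relevant branch here is precisely $W_0$, the real, nonnegative, increasing branch on $[0,\infty)$ selected by the hypothesis $v<0$, together with the mild limiting argument at $\tau=0$.
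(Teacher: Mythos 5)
Your proposal is correct, and your second argument (direct differentiation) is essentially the paper's own proof: the paper also computes $\frac{\mathrm d}{\mathrm d\tau}\tilde\lambda(\tau)$ via the Lambert-W derivative and argues the sign is negative because $W_0(-v\tau)>0$ when $-v\tau>0$. In fact your simplified expression $\tilde\lambda'(\tau)=-W_0(-v\tau)^2\big/\bigl(\tau^2(1+W_0(-v\tau))\bigr)$ is the cleaner (and correctly simplified) form; the paper's displayed derivative, with the factor $-\frac{v}{W_0(-v\tau)+1}-1$, appears to contain a slip (the $-v$ should be a $1$), and its claim that this factor is negative would not hold for all $v<0$ as written, whereas your version makes the negativity immediate. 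Your first argument is a genuinely different and more elementary route: using $W_0(u)e^{W_0(u)}=u$ to rewrite $\tilde\lambda(\tau)=(-v)e^{-W_0(-v\tau)}$ reduces the claim to the monotonicity of $W_0$ on $[0,\infty)$, avoiding the derivative of $W_0$ altogether and handling the endpoint $\tau=0$ by the limit $-v$ from the preceding corollary; this buys robustness (no branch-derivative bookkeeping) at the cost of invoking the defining identity, while the paper's differentiation approach stays closer to the quantitative style used elsewhere in its characteristic-root analysis.
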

\begin{proof}
The derivative of $\tilde{\lambda}$ with respect to $\tau$ is given by 
\[
\frac{\rm d}{\rm d \tau}\tilde{\lambda}(\tau) = \frac{W_0(-v\tau) \left(-\frac{v}{W_0(-v\tau) + 1} - 1\right)}{\tau^2}.
\]
For \(v < 0\), \(W_0(-v\tau) > 0\) since \(-v\tau > 0\). The term \(-\frac{v}{W_0(-v\tau) + 1} - 1\) is negative, leading to \(\tilde{\lambda}'(\tau) < 0\). Thus, \(\tilde{\lambda}\) is decreasing with respect to $\tau$.
\end{proof}
Therefore, we have the following proposition, which demonstrates that gradient staleness slows down the escape from a saddle point.
\begin{proposition}
It holds that $\tilde \lambda < -v$ when $v< 0$.
\end{proposition}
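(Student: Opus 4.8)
The plan is to read off the strict inequality directly from the two corollaries that immediately precede the proposition, which together determine the behaviour of $\tilde\lambda$ on the whole ray $[0,\infty)$. The limit corollary gives $\lim_{\tau\to 0^+}\tilde\lambda(\tau) = -v$, so $\tilde\lambda$ extends continuously to $\tau=0$ with boundary value exactly $-v$. The monotonicity corollary gives that $\tilde\lambda$ is strictly decreasing on $[0,\infty)$: when $v<0$ we have $-v\tau>0$, hence $W_0(-v\tau)>0$, so the derivative computed in that corollary's proof is strictly negative for every $\tau>0$. Combining the two facts, for any $\tau>0$,
\begin{equation*}
\tilde\lambda(\tau) < \sup_{s>0}\tilde\lambda(s) = \lim_{s\to 0^+}\tilde\lambda(s) = -v,
\end{equation*}
where the first equality uses strict monotonicity (a strictly decreasing function does not attain its supremum on an interval open at the left endpoint). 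This is precisely the claim.

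As an independent check I would also give a one-line self-contained argument using the defining identity of the Lambert W function, $W_0(z)\,e^{W_0(z)} = z$ for $z\ge 0$. This lets us rewrite $\tilde\lambda(\tau) = W_0(-v\tau)/\tau = (-v)\cdot W_0(-v\tau)/(-v\tau) = (-v)\,e^{-W_0(-v\tau)}$. For $v<0$ and $\tau>0$ we have $-v\tau>0$, hence $W_0(-v\tau)>0$ and $e^{-W_0(-v\tau)}\in(0,1)$; multiplying by $-v>0$ gives $0<\tilde\lambda(\tau)<-v$. This route simultaneously re-derives the companion facts stated above (that $\tilde\lambda$ is real and positive for $v<0$), so I may fold it in as the actual proof and use the corollary-based argument only as motivation.

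The only delicate points are bookkeeping about strictness and the endpoint, so I expect no real obstacle here: one must make sure the cited monotonicity is \emph{strict} (it is, from the sign of $\tilde\lambda'(\tau)$, which relies on $v<0\Rightarrow W_0(-v\tau)>0$), and one must note that the value $-v$ is only the limit as $\tau\to 0^+$ and is never attained for $\tau>0$ — this is exactly what upgrades the inequality to a strict one. I would also remark that the hypothesis $v<0$ is essential: for $v=0$ one has $\tilde\lambda\equiv 0=-v$, and for $v>0$ the relevant root lies in a different regime already analysed earlier in this subsection.
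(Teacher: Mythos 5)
Your main argument is correct and is essentially the paper's own route: the proposition there is stated as a direct consequence of the two preceding corollaries (the limit $\lim_{\tau\to 0}\tilde\lambda(\tau)=-v$ and the strict decrease of $\tilde\lambda(\tau)$ for $v<0$), exactly as you combine them. The self-contained check via $W_0(z)e^{W_0(z)}=z$, giving $\tilde\lambda(\tau)=(-v)e^{-W_0(-v\tau)}\in(0,-v)$, is a valid and slightly more elementary alternative, but it is not needed beyond confirmation.
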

Next, we have the following approximations on the characteristic root.
\begin{proposition}\label{small_tau}
    If $\tau \ll 1$, the characteristic root can be approximated as $\tilde \lambda = -v(1 + v \tau) + \mathcal O(\tau^2)$.
\end{proposition}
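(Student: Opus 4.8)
The plan is to expand the defining relation for the dominant characteristic root, namely Eq.~(\ref{char_eq}) specialized to the root of interest, $\tilde\lambda e^{\tilde\lambda\tau} = -v$, as a power series in $\tau$ around $\tau = 0$. Equivalently, one may expand $W_0$ directly using its known Taylor series at the origin, $W_0(z) = z - z^2 + \tfrac{3}{2}z^3 - \cdots$; substituting $z = -v\tau$ into $\tilde\lambda(\tau) = W_0(-v\tau)/\tau$ yields $\tilde\lambda(\tau) = -v - v^2\tau + \mathcal{O}(\tau^2) = -v(1+v\tau) + \mathcal{O}(\tau^2)$. I would present the implicit‑expansion version as the main argument, since it is self‑contained and does not require quoting the series coefficients of $W_0$.

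Concretely, I would first argue that $\tilde\lambda$ is a smooth function of $\tau$ in a neighborhood of $\tau = 0$. Apply the implicit function theorem to $g(\lambda,\tau) = \lambda e^{\lambda\tau} + v$: at $(\lambda,\tau) = (-v, 0)$ we have $g = 0$ and $\partial_\lambda g = e^{\lambda\tau}(1 + \lambda\tau)\big|_{(-v,0)} = 1 \neq 0$, so there is a unique analytic branch $\tilde\lambda(\tau)$ with $\tilde\lambda(0) = -v$, which is precisely the branch identified in Corollary~(\ref{fk_add1}). Then write the ansatz $\tilde\lambda(\tau) = a_0 + a_1\tau + \mathcal{O}(\tau^2)$, so that $\tilde\lambda\tau = a_0\tau + \mathcal{O}(\tau^2)$ and $e^{\tilde\lambda\tau} = 1 + a_0\tau + \mathcal{O}(\tau^2)$, giving
\[
\tilde\lambda e^{\tilde\lambda\tau} = (a_0 + a_1\tau)(1 + a_0\tau) + \mathcal{O}(\tau^2) = a_0 + (a_1 + a_0^2)\tau + \mathcal{O}(\tau^2).
\]
Matching against $-v$ order by order forces $a_0 = -v$ and $a_1 + a_0^2 = 0$, hence $a_1 = -v^2$, which is the claimed expansion.

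The argument is essentially routine, so there is no serious obstacle; the only point requiring a little care is justifying the $\mathcal{O}(\tau^2)$ remainder, i.e.\ the legitimacy of the series ansatz. This is handled by the implicit function theorem step above (analyticity of the branch), after which term‑by‑term matching of the convergent expansions is valid. As a sanity check I would note that the leading term recovers $\lim_{\tau\to 0}\tilde\lambda(\tau) = -v$, and that the first‑order correction $-v^2\tau$ has the correct sign — negative, since $v^2 > 0$ — which is consistent with the monotonicity corollary showing $\tilde\lambda(\tau)$ is decreasing and with the proposition $\tilde\lambda < -v$ for $v < 0$.
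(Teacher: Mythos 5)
Your proof is correct, and it reaches the stated expansion $\tilde\lambda = -v - v^2\tau + \mathcal{O}(\tau^2) = -v(1+v\tau) + \mathcal{O}(\tau^2)$ by a somewhat different route than the paper. The paper's own proof is a one-line appeal to the Maclaurin series of the Lambert W function; you instead expand the characteristic equation $\tilde\lambda e^{\tilde\lambda\tau} = -v$ implicitly, first securing an analytic branch through $(-v,0)$ via the implicit function theorem (checking $\partial_\lambda\bigl(\lambda e^{\lambda\tau}+v\bigr) = e^{\lambda\tau}(1+\lambda\tau) = 1 \neq 0$ there) and then matching coefficients. Your version buys two things. First, it is self-contained and rigorously justifies the $\mathcal{O}(\tau^2)$ remainder, which the paper's one-liner leaves implicit. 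Second, it avoids a slip that is actually present in the paper's proof: the paper quotes $W_0(s) = s - \tfrac{s^2}{2} + \mathcal{O}(s^3)$, whereas the correct series is $W_0(s) = s - s^2 + \tfrac{3}{2}s^3 - \cdots$ (the coefficient of $s^n$ is $(-n)^{n-1}/n!$); only the correct coefficient $-1$ for $s^2$ yields the stated $-v(1+v\tau)$, while the paper's quoted series would give $-v(1+\tfrac{v\tau}{2})$. Your alternative $W_0$-series remark uses the correct coefficients, and your coefficient-matching argument confirms independently that the proposition as stated (and its use in the saddle-point discussion and in the small-delay SDDE approximation) is the right expansion. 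Your sanity checks against the limit $\tilde\lambda(0^+) = -v$ and the monotonicity of $\tilde\lambda(\tau)$ are also consistent with the surrounding corollaries.
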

\begin{proof}
Since $W_0(s) = s- \frac{s^2}{2} + \mathcal O(s^3)$ for $s\ll 1$ by Taylor expansion, we arrive at the conclusion.
\end{proof}

As stated in Assumption 1, the objective function is assumed to be twice differentiable, which motivates further insightful discussions highlighting the impact of the gradient staleness on the behavior of the SGD algorithm based on the positive definiteness of the Hessian matrix. As presented above, a small degree of gradient staleness accelerates convergence, but a large degree of gradient staleness causes divergence near critical points where the eigenvalues of the Hessian matrices are positive. For the sake of completeness, we next highlight the impact of gradient staleness on the behavior of SGD near critical points, where some eigenvalues of the Hessian matrix can be negative, to provide further insights. In particular, the presence of gradient staleness results in a slower escape from the critical point with only negative eigenvalues. This occurs because the effective gradient magnitude is scaled by a factor of \( (1 + v \tau) \) when \( \tau \ll 1 \) and \( v < 0 \). Near a saddle point, the escape speed depends on both the magnitude of gradient staleness and the Hessian matrix of the objective function. Gradient staleness can either accelerate or decelerate the escape from the saddle points.
Consider a two-dimensional objective function, where the eigenvalues of the Hessian matrix are \( v_1 > 0 \) and \( v_2 < 0 \) and  \( \tau \ll 1 \). When \( |v_2| < |v_1| \), the presence of gradient staleness increases the escape speed from the saddle point. In contrast, when \( |v_2| > |v_1| \), the equation
\(
v_1^2(1 + v_1 \tau)^2 + v_2^2(1 + v_2 \tau)^2 = v_1^2 + v_2^2
\)
has a single non-zero positive root, at which point the escape velocity from the saddle point matches that of the non-stale case.

Next, we have the following discussion on the characteristic root when $\tau\gg 1$.
\begin{proposition}\label{big_tau}
For $\tau \gg 1$, it holds that $\tilde \lambda = \frac{\log(-v\tau) - \log\log(-v\tau)}{\tau} + \mathcal O\left(\frac{\log\log(-v\tau)}{\tau\log {-v \tau}}\right)$.
\end{proposition}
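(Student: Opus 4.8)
The plan is to recognize the claim as the classical large-argument asymptotics of the principal branch of the Lambert $W$ function, transported through the identity $\tilde\lambda(\tau) = W_0(-v\tau)/\tau$ established above. Since $v < 0$, the argument $z := -v\tau$ is positive, and the regime $\tau \gg 1$ corresponds exactly to $z \to \infty$. Hence it suffices to prove that
\[
W_0(z) = \log z - \log\log z + \mathcal O\!\left(\frac{\log\log z}{\log z}\right)\qquad\text{as } z \to \infty,
\]
after which dividing by $\tau$ and resubstituting $z = -v\tau$ gives the stated formula; alternatively, one may simply invoke the known asymptotic series for $W_0$ from the standard reference on the Lambert $W$ function and truncate it after two terms.

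To derive the expansion directly, I would start from the defining relation $W_0(z)\,e^{W_0(z)} = z$. For $z \ge e$ one has $W_0(z) \ge 1$ and $W_0(z) \to \infty$ as $z \to \infty$, so taking logarithms yields the fixed-point form $W_0(z) = \log z - \log W_0(z)$. Writing $L_1 := \log z$ and $L_2 := \log L_1 = \log\log z$, the first step is a crude two-sided bound: from $\log W_0(z) \ge 0$ we get $W_0(z) \le L_1$; feeding this back into the fixed-point form gives $\log W_0(z) \le L_2$, hence $W_0(z) \ge L_1 - L_2$. The second step sharpens the logarithmic term: combining $L_1 - L_2 \le W_0(z) \le L_1$ with $\log(L_1 - L_2) = L_2 + \log(1 - L_2/L_1) = L_2 - \mathcal O(L_2/L_1)$, we conclude $\log W_0(z) = L_2 - \theta$ with $0 \le \theta = \mathcal O(L_2/L_1)$. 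Substituting this back into $W_0(z) = \log z - \log W_0(z)$ yields $W_0(z) = L_1 - L_2 + \theta = L_1 - L_2 + \mathcal O(L_2/L_1)$, which is the desired expansion.

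I do not expect a genuine obstacle here — the estimate is a short bootstrap — but the point requiring care is the sign bookkeeping of the remainder and the validity of $\log(1 - L_2/L_1) = -\mathcal O(L_2/L_1)$, which needs $L_2/L_1 < 1$, i.e. $z$ sufficiently large; this is precisely what the hypothesis $\tau \gg 1$ provides. A final remark worth including is that one more iteration of the same bootstrap recovers the next-order term $+\,L_2/L_1$, consistent with the error bound in the statement, should a more refined expansion be needed elsewhere in the paper.
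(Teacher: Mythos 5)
Your proposal is correct: writing $z=-v\tau>0$ (since $v<0$), the claim is exactly the standard large-argument expansion $W_0(z)=\log z-\log\log z+\mathcal O\!\left(\frac{\log\log z}{\log z}\right)$ applied to $\tilde\lambda(\tau)=W_0(-v\tau)/\tau$, and your bootstrap from $W_0(z)=\log z-\log W_0(z)$ (crude bounds $L_1-L_2\le W_0\le L_1$, then one refinement of $\log W_0$) establishes it rigorously, with the only needed care being $L_2/L_1<1$ for large $z$, which you note. The paper states this proposition without an explicit proof, implicitly invoking precisely this known asymptotics of the Lambert W function, so your argument matches the intended route and additionally supplies a short self-contained derivation of it.
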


% Next, we have the following remark.% can be deleted

% \begin{remark}
% Let $H_{\text{exit}}(\tau)$ denote the solution to the equation $x_0\exp{\left(\tilde \lambda(\tau) H_{\text{exit}}(\tau)\right)}  = x_0 + \delta$, in which $x_0>0 $, $\delta > 0$ and $v < 0$. This gives $H_{\text{exit}}(\tau) = \frac{1}{\tilde \lambda (\tau)}\log{\left(1 + \frac{\delta}{x_0}\right)}$. Clearly, $H_{\text{exit}}(\tau)$ is closely related to the exit time of the ASGD algorithm from the interval $(\infty, x_0 + \delta]$ given the initial state value $x_0$.  Hence, $\frac{H_{\text{exit}}(\tau)}{H_{\text{exit}}(0)} = 1 + v\tau + \mathcal O(\tau^2)$ by Proposition \ref{small_tau} for $\tau \ll 1$.  Further, $\frac{H_{\text{exit}}(\tau)}{H_{\text{exit}}(0)} = \frac{-v\tau}{\log{(-v\tau)}}$ by Proposition \ref{big_tau} for $v \tau \ll 1$. These results suggest that the exit from a saddle point is significantly delayed due to gradient staleness, especially when the degree of gradient staleness is large. 
% \end{remark}

Lastly, we consider the dynamics of ASGD along the eigenvector where the eigenvalue is zero, i.e.,  $v = 0$. By Eq. (\ref{ou_p_tmp}), the dynamics of ASGD evolve according to
\begin{equation}\label{ou_proc}
        \mathrm{d}x(t)  = - q\mathrm{d}t  + \sigma \mathrm{d}B(t),
\end{equation}
which is a Brownian motion with drift. In this case, it can be seen that the gradient staleness has no impact on the training process by Eq. (\ref{ou_proc}).  Next, without loss of generality, we focus on the case $q \le 0$. In this case,  for a Brownian motion starting at $x_0$, let us next present the first hitting time distribution of Eq. (\ref{ou_proc}) to $x_0 + \delta$, where $\delta > 0$ is a constant. More specifically, the first hitting time is defined as $H_0 = \inf\{t\ge 0: x(t)\ge x_0 + \delta, x(0) = x_0, q< 0 \}$, where $x(t)$ evolves according to Eq. (\ref{ou_proc}). Note that $H_0$ is the first hitting time to $[x_0 + \delta, \infty)$, or equivalently, the first exit time from $(-\infty, x_0 + \delta)$. In this case, the probability distribution of $H_0$ is given by $ p_{H_0}(t) = \frac{\delta}{\sqrt{2\pi t^3}}\exp{-\frac{(\delta - qt )^2}{2t}},~ t>0.$}

{
Next, the objective variable evolves according to 
\begin{equation}\label{small_stale_acc_sdde}
    \mathrm{d}x(t)  = -v (1+v\tau) x(t) \mathrm{d}t  + {(1+v\tau)} \sigma\mathrm{d}B(t),
\end{equation} for $\tau \ll 1$ by \cite{small}. In this way, we have the following remark for the connection between ASGD with stale gradients and SGD with momentum.

\begin{remark}
From Eq. (\ref{small}), we observe that a small degree of staleness can accelerate the convergence.  Specifically,  the drift coefficient is $-v(1+v\tau)$ in Eq. (\ref{small_stale_acc_sdde}) with a gradient staleness $\tau$, while the drift coefficient is $-v$ in Eq. (\ref{OU_delay_free}) without gradient staleness. This indicates that the introduction of gradient staleness increases the learning rate by a factor of $(1+\tau v)$. A careful reader may also observe a connection between small gradient staleness and the stochastic gradient descent method with momentum. Both approaches utilize information from previous gradients,  which contributes to improved convergence rates. In particular, the formula of SGD with momentum is given by 
\begin{equation}
\breve {\pmb x}[j+1] = \breve {\pmb x}[j] - \eta \left( \nabla f (\breve {\pmb x}[j]) \right) + \beta (\breve {\pmb x}[j] - \breve {\pmb x}[j-1]).
\end{equation}
By \cite[Section 7.3.1]{bishop} and \cite[Chapter 8.3.2]{goodfellow}, if the momentum algorithm always observes a gradient $\pmb g$, then 
\begin{equation}
    \Delta {\pmb x}[j]   =  - \frac{\eta}{1- \beta}\pmb g.
\end{equation}
where $\Delta {\pmb x}[j]  = \breve {\pmb x}[j+1] - \breve {\pmb x}[j]$.
More specifically, the result of the momentum term is to increase the effective learning rate from $\eta$ to $\eta/(1-\beta)$ \cite{bishop}. In this context, a small degree of gradient staleness and SGD with momentum both increase the convergence rate by leveraging the information of previous gradients. This effect is to increase the effective learning rate parameter.
 Further, we can express both gradient update methods in a unified form as follows:
\begin{equation}
 \Delta {\pmb x}[j] = \sum_{l=0}^{j- 1}\eta_l \left( \nabla f (\breve {\pmb x}[j- 1]) \right)  +  \beta\Delta{\pmb x}[j - l].
\end{equation}
For SGD with momentum, we set $\eta_0 = \eta$ and $\beta_1 = \beta$,  while all other parameters are zero.  For ASGD with a constant staleness $\breve \tau$, we have $\eta_{\breve \tau} = \eta$,  with all other parameters remaining zero.
\end{remark}
}

In the following, we present the performance analysis when the gradient staleness is a uniform random variable, which can arise in event-triggered SGD. More specifically, consider the delay differential equation given below:
\begin{equation}\label{}
    {\rm d} x(t) = -v x(t- \tau(t)) {\rm d}t,
\end{equation}
in which $\tau(t_1)$ and $\tau(t_2)$ are assumed to be identically and independently distributed random variables when $t_1 \ne t_2$. Further, it holds that $\tau(t) \sim U(0, \zeta)$ and $\mathbb E\{\tau \}= \frac{1}{2}\zeta$. In the following discussion, we study the first moment of $\langle x(t) \rangle$, in which brackets $\langle\cdot \rangle$ denotes an average over the realizations of the random process $\{\tau(t)\}$. In this way, we have the following proposition.
\begin{proposition}\label{prop_diff_delay}
When the gradient staleness is uniformly distributed, the average $P(t) = \langle x(t)\rangle$ obeys the second order differential equation given by
\begin{equation}\label{diff_eq_uni_delay}
    \frac{{\rm d}^2P(t)}{{\rm d}t^2} = - \frac{v}{\zeta}\left(P(t) - P(t- \zeta) \right).
\end{equation}
\begin{proof}
Let $Q(t) = \int_{t-\zeta}^t \langle x(s) \rangle {\rm d}s$.
In this way, it holds that 
    $\frac{{\rm d} P(t)}{\rm dt} = -v\frac{Q(t)}{\zeta}$ and $\frac{{\rm d} Q(t)}{{\rm d} t}= P(t) - P(t-\zeta).$
By this means, we arrive at Eq. (\ref{diff_eq_uni_delay}).
\end{proof}
\end{proposition}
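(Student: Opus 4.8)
The plan is to obtain a closed system of ordinary/delay differential equations for $P(t) = \langle x(t)\rangle$ by averaging the delay differential equation $\mathrm{d}x(t)/\mathrm{d}t = -v\,x(t-\tau(t))$ over the realizations of the i.i.d.\ delay process $\{\tau(t)\}$, and then eliminating an auxiliary integral variable. First I would interchange the averaging $\langle\cdot\rangle$ with the time derivative to write $\mathrm{d}P(t)/\mathrm{d}t = -v\,\langle x(t-\tau(t))\rangle$. The crucial step is to evaluate $\langle x(t-\tau(t))\rangle$: because $\tau(t_1)$ and $\tau(t_2)$ are independent for $t_1\neq t_2$, the delay $\tau(t)$ drawn at the current time $t$ is independent of the entire past trajectory $\{x(u):u\le t\}$ (which is built from $\{\tau(s):s<t\}$ through the integral form of the dynamics), and the argument $t-\tau(t)$ lies in $[t-\zeta,t]$. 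Conditioning on $\tau(t)=s$ and using $\tau(t)\sim U(0,\zeta)$ therefore yields $\langle x(t-\tau(t))\rangle = \frac{1}{\zeta}\int_0^\zeta \langle x(t-s)\rangle\,\mathrm{d}s = \frac{1}{\zeta}\int_{t-\zeta}^t \langle x(u)\rangle\,\mathrm{d}u$.

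Next I would introduce the auxiliary variable $Q(t) = \int_{t-\zeta}^t \langle x(s)\rangle\,\mathrm{d}s$, so that the previous computation reads $\mathrm{d}P(t)/\mathrm{d}t = -\frac{v}{\zeta}\,Q(t)$. Differentiating $Q$ by the fundamental theorem of calculus—valid since the integrand is continuous and both limits are differentiable in $t$—gives $\mathrm{d}Q(t)/\mathrm{d}t = P(t) - P(t-\zeta)$. Differentiating the relation $\mathrm{d}P(t)/\mathrm{d}t = -\frac{v}{\zeta}Q(t)$ once more and substituting the expression for $\mathrm{d}Q(t)/\mathrm{d}t$ then produces $\mathrm{d}^2 P(t)/\mathrm{d}t^2 = -\frac{v}{\zeta}\big(P(t) - P(t-\zeta)\big)$, which is exactly Eq.~(\ref{diff_eq_uni_delay}). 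Everything after the averaging step is elementary calculus.

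The main obstacle I anticipate is making the first step fully rigorous: justifying the factorization $\langle x(t-\tau(t))\rangle = \mathbb{E}_{\tau(t)}\big[\,\mathbb{E}\{x(t-\tau(t))\mid \tau(t)\}\,\big]$, i.e.\ that the ``fresh'' delay at time $t$ is statistically independent of the state accumulated from earlier delays, and that the average can be exchanged with $\mathrm{d}/\mathrm{d}t$. Under the stated i.i.d.\ model for $\{\tau(t)\}$ together with mild regularity of $x(\cdot)$ (continuity of sample paths and integrability, which hold here because the kernel $-v\,x(\cdot)$ is deterministic), this is a standard Fubini/mean-field type argument; it is the only place where care is needed, and I would flag it explicitly rather than leaving it implicit.
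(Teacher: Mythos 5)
Your proposal is correct and follows essentially the same route as the paper: introduce $Q(t)=\int_{t-\zeta}^{t}\langle x(s)\rangle\,{\rm d}s$, obtain $\frac{{\rm d}P}{{\rm d}t}=-\frac{v}{\zeta}Q(t)$ and $\frac{{\rm d}Q}{{\rm d}t}=P(t)-P(t-\zeta)$, and differentiate once more. The only difference is that you explicitly justify the first relation via conditioning on $\tau(t)$ and its independence from the past trajectory, a step the paper's proof takes for granted.
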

By Proposition \ref{prop_diff_delay}, we have the subsequent corollary illustrating the characteristic equation associated with the differential equation given by Eq. (\ref{diff_eq_uni_delay}). Without loss of generality, we assume $v= 1$ for the sake of discussion in the rest of this subsection.
\begin{corollary}
When $v=1$, the differential equation (\ref{diff_eq_uni_delay}) has the following characteristic equation: 
\begin{equation}\label{chara_function}
    \zeta \lambda^2  =e^{-\zeta\lambda} - 1.
\end{equation}
\end{corollary}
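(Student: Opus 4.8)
The plan is to obtain the characteristic equation by the standard exponential-ansatz method for linear autonomous delay differential equations. First I would set $v = 1$ in Eq.~(\ref{diff_eq_uni_delay}), so that the equation under study becomes $\frac{{\rm d}^2 P(t)}{{\rm d}t^2} = -\frac{1}{\zeta}\left(P(t) - P(t-\zeta)\right)$. Since this is a linear, constant-coefficient equation in $P$ (the only ``non-constant'' feature being the fixed delay $\zeta$), it admits solutions of the form $P(t) = A e^{\lambda t}$ for a complex constant $\lambda$ and arbitrary amplitude $A \neq 0$; the set of admissible $\lambda$ is exactly what we call the characteristic roots, and the equation they must satisfy is the characteristic equation.

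Next I would substitute $P(t) = A e^{\lambda t}$ into the equation. Differentiating twice gives $\frac{{\rm d}^2 P}{{\rm d}t^2} = \lambda^2 A e^{\lambda t}$, while the delayed term is $P(t-\zeta) = A e^{\lambda(t-\zeta)} = A e^{\lambda t} e^{-\zeta\lambda}$. Plugging these in yields
\begin{equation}
\lambda^2 A e^{\lambda t} = -\frac{1}{\zeta}\left(A e^{\lambda t} - A e^{\lambda t} e^{-\zeta\lambda}\right) = -\frac{1}{\zeta} A e^{\lambda t}\left(1 - e^{-\zeta\lambda}\right).
\end{equation}
Dividing through by the common nonzero factor $A e^{\lambda t}$ and multiplying by $\zeta$ gives $\zeta \lambda^2 = e^{-\zeta\lambda} - 1$, which is exactly Eq.~(\ref{chara_function}). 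This completes the derivation.

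There is essentially no hard step here: the argument is a routine substitution, and the only point worth a word of care is the legitimacy of the exponential ansatz, i.e.\ that seeking solutions $P(t) = A e^{\lambda t}$ is the correct way to characterize the asymptotic/oscillatory behavior of a linear delay differential equation with a single fixed delay. This is standard (the solution operator is an eventually compact semigroup whose spectrum consists of the roots of the characteristic function), so it does not need to be reproved; one may simply invoke it. If desired, one could add a sentence noting that $\zeta \lambda^2 = e^{-\zeta\lambda} - 1$ has $\lambda = 0$ as a root, consistent with $P(t)$ approaching a constant, and that the remaining (infinitely many) roots govern the transient and oscillatory components of $\langle x(t)\rangle$, linking back to the earlier Lambert-$W$ analysis of the constant-delay case.
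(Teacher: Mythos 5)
Your derivation is correct and is exactly the route the paper takes (implicitly): the corollary is obtained by the same exponential ansatz $P(t)=Ae^{\lambda t}$ that the paper uses earlier to derive Eq.~(\ref{char_eq}), and substituting it into Eq.~(\ref{diff_eq_uni_delay}) with $v=1$ immediately yields $\zeta\lambda^2 = e^{-\zeta\lambda}-1$. Nothing further is needed.
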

In the following, we present a brief discussion on the solution to the characteristics function Eq. (\ref{chara_function}).

\begin{corollary}\label{coro_solution_for_delay}
The characteristics equation Eq. (\ref{chara_function}) has two distinct negative real roots for $\zeta \in (0, \frac{\varpi ^2}{e^{-\varpi }-1})$, has repeated real roots $\lambda^* = \frac{e^{-\varpi}-1}{\varpi}$ for $\zeta = \frac{\varpi ^2}{e^{-\varpi }-1}$, has two complex roots for $\zeta \in (\frac{\varpi ^2}{e^{-\varpi }-1}, \infty)$ in which $\varpi = -W_0\left(-\frac2{e^2}\right)-2$.\footnote{The numerical value of  $\varpi$ is  $\varpi \approx - 1.5936$. Also, it follows that $\frac{\varpi ^2}{e^{-\varpi }-1} \approx 0.6476$ and $\lambda^* \approx -2.4608$.} The real part of the complex roots is negative for $\zeta \in \left(\frac{\varpi ^2}{e^{-\varpi}-1}, \frac{\pi^2}{2}\right)$, and is positive for $\zeta \in \left(\frac{\pi^2}{2}, \infty\right)$.\footnote{Note that there is always a trivial solution $\lambda = 0$ for Eq. (\ref{chara_function}).}
\end{corollary}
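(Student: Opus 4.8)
The plan is to reduce the whole statement to the study of one real function together with one transcendental resonance condition. First I would observe that Eq.~(\ref{chara_function}) can have no positive real root: for $\lambda>0$ the left side $\zeta\lambda^2$ is positive while $e^{-\zeta\lambda}-1<0$. Hence, apart from the ever-present trivial root $\lambda=0$, every real root is negative, and the substitution $s=-\zeta\lambda>0$ recasts Eq.~(\ref{chara_function}) as $\zeta=\phi(s)$ with $\phi(s)=s^2/(e^s-1)$. I would then show $\phi(0^+)=0$, $\phi(s)\to 0$ as $s\to\infty$, and that on $(0,\infty)$ the sign of $\phi'(s)$ equals that of $\psi(s)-2$, where $\psi(s)=(2-s)e^s$; since $\psi(0)=2$, $\psi'(s)=(1-s)e^s$, $\psi(1)=e$ and $\psi(2)=0$, the equation $\psi(s)=2$ has a unique root $s^\ast\in(1,2)$, so $\phi$ is strictly increasing on $(0,s^\ast)$, strictly decreasing on $(s^\ast,\infty)$, with a single global maximum at $s^\ast$.

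Next I would pin down the constants. Writing $s=2+w$ turns $\psi(s)=2$ into $we^{w}=-2/e^{2}$, hence $w=W_0(-2/e^2)$ and $s^\ast=2+W_0(-2/e^2)=-\varpi$; therefore $\max_{s>0}\phi(s)=\phi(s^\ast)=\varpi^2/(e^{-\varpi}-1)$. Counting preimages of $\phi$ then gives the three real-root regimes directly: two distinct negative roots $\lambda=-s/\zeta$ (one with $s<s^\ast$, one with $s>s^\ast$) for $\zeta\in(0,\varpi^2/(e^{-\varpi}-1))$; a double root at $s=s^\ast$, that is $\lambda^\ast=-s^\ast/\zeta=\varpi/\zeta=(e^{-\varpi}-1)/\varpi$, when $\zeta=\varpi^2/(e^{-\varpi}-1)$; and no real root for larger $\zeta$. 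By an implicit-function argument at this quadratic fold, the two roots merging at $s^\ast$ split into a complex-conjugate pair for $\zeta$ slightly above $\varpi^2/(e^{-\varpi}-1)$, and since the double root $\lambda^\ast$ is strictly negative, this pair has negative real part there by continuity.

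To locate where the real part vanishes I would search for purely imaginary roots $\lambda=i\omega$, $\omega>0$. Separating real and imaginary parts of Eq.~(\ref{chara_function}) gives $\sin(\zeta\omega)=0$ and $\zeta\omega^2=1-\cos(\zeta\omega)$; the even multiples $\zeta\omega=2k\pi$ force $\omega=0$, whereas $\zeta\omega=(2k-1)\pi$ forces $\zeta\omega^2=2$, so $\omega=2/((2k-1)\pi)$ and $\zeta=(2k-1)^2\pi^2/2$, the smallest being $\zeta=\pi^2/2$ with $\omega=2/\pi$. Implicit differentiation of Eq.~(\ref{chara_function}) in $\zeta$ yields $\lambda'=-\lambda(\lambda+e^{-\zeta\lambda})/[\zeta(2\lambda+e^{-\zeta\lambda})]$; evaluating at $\zeta=\pi^2/2$, $\lambda=2i/\pi$ (where $e^{-\zeta\lambda}=e^{-i\pi}=-1$) gives $\mathrm{Re}(\lambda')=\omega^2/[\zeta(4\omega^2+1)]>0$, so the dominant complex pair crosses the imaginary axis transversally from left to right at $\zeta=\pi^2/2$. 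Together with the previous step this yields real part negative on $(\varpi^2/(e^{-\varpi}-1),\pi^2/2)$ and positive just beyond $\pi^2/2$.

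The hard part will be upgrading ``positive just beyond $\pi^2/2$'' to ``positive on all of $(\pi^2/2,\infty)$'': a priori the pair could return to the imaginary axis at a later resonance $\zeta=(2k-1)^2\pi^2/2$, $k\ge 2$. I would close this by an argument-principle (Rouch\'e-type) count of the zeros of $G(\lambda)=\zeta\lambda^2-e^{-\zeta\lambda}+1$ in the right half-plane along a large semicircular contour, showing the count only increases through the successive resonances, so the two roots that entered at $\zeta=\pi^2/2$ remain there. A minor but necessary bookkeeping point is that Eq.~(\ref{chara_function}) in fact has infinitely many high-frequency roots; the balance $|e^{-\zeta\lambda}|\sim|\zeta\lambda^2|$ forces $\mathrm{Re}(\lambda)\to-\infty$ whenever $|\mathrm{Im}(\lambda)|\to\infty$, so all of them have strongly negative real part, and I would add a sentence clarifying that the corollary describes the two dominant roots together with the trivial root $\lambda=0$.
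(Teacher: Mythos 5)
Your proposal is correct and follows essentially the same route as the paper's proof: the substitution $s=-\zeta\lambda$ (the paper uses $\omega=\zeta\lambda$) reduces Eq.~(\ref{chara_function}) to the one-variable equation $\zeta=\phi(s)$, the fold point is identified via the Lambert $W$ function as $s^\ast=-\varpi=2+W_0(-2/e^2)$, and the purely imaginary roots $\lambda=\pm\frac{2}{\pi}i$ at $\zeta=\frac{\pi^2}{2}$ are found by separating real and imaginary parts, exactly as in the appendix. Your additional steps (transversality of the crossing via implicit differentiation, and the sketched Rouch\'e/argument-principle argument to exclude a return of the dominant pair to the left half-plane at the later resonances $\zeta=(2k-1)^2\pi^2/2$) are not carried out in the paper either, so your treatment is, if anything, more complete than the published proof.
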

\begin{proof}
    See Appendix A.
\end{proof}

% For $\zeta \in (0, \frac{\varpi ^2}{e^{-\varpi }-1})$, there are two negative real solutions, in which it holds that $|\lambda_0| < |\lambda_1|$. Note that $\lambda_0$ is the dominant root. 

% Further, we have the following approximation for the dominant root  $\lambda_0$ in the following corollary when $\zeta << 1$.
% \begin{corollary}\label{coro_domi_root_approx}
% It holds that $\lambda_0 \approx \frac{2}{\zeta - 2 }$ for $\zeta \ll 1$.
% \end{corollary}
% \begin{proof}
% By applying the second order Taylor expansion, it follows that {$\zeta \lambda^2 = -\zeta\lambda + \frac{\zeta^2\lambda^2}{2} + \mathcal O(\zeta^3\lambda^3)$ by Eq. (\ref{chara_function}). By neglecting the term on the order of $\mathcal O(\zeta^3\lambda^3)$ when $\zeta \ll 1$, we obtain Corollary \ref{coro_domi_root_approx}.}
% \end{proof}

\section{Parameter Optimization for Distributed SGD}\label{sec_pra_dis}

In this section, we will further present run-time and staleness analysis for distributed SGD, building upon the presented theoretical performance analysis.  Following this, we will reveal the protocol design criteria for distributed SGD.

\subsection{Parameter Optimization for {ASGD}}\label{sub_toadd}
In this subsection, we present the probability distribution of the update interval, also referred to as the run-time\cite{slow_and_stale}, and step staleness in $B$-ASGD to bridge the {ASGD} algorithm and the SDDE-based continuous approximation. Recall that in $B$-ASGD, the central server collects the first $B$ gradients while the other clients do not halt the computation of their gradient. For the sake of discussion, we assume infinite communication bandwidth in this subsection, in which the communication delays are negligible compared to the computational delay. In this case, the gradient delay or the continuous-time gradient staleness equals the computational delay.
The computational delay of each client is assumed to follow a general distribution, denoted by $T_\kappa$, where $\kappa$ represents the client index.
Let $N_p(t)$ be the aggregated arrival process of gradients to the central server in $B$-ASGD. 
% To ensure the validity of the Poisson approximation, it is assumed that $B$ is much smaller than $K$ in this section. 
Let $Y\sim \text{Exp}(\nu)$ denote a random variable following an exponential distribution with a mean $1/\nu$, i.e., a rate $\nu$. 

{
Let $N_\kappa(t)$ be the $\kappa$-th gradient arrival process, which is defined as the number of gradients collected by the central server until time $t$ from the $\kappa$-th worker. Clearly, $N_\kappa(t)$ is a step process, where the increments $N_\kappa(t) - N_\kappa(s)$, take on only non-negative integral values. In addition, a step process $N(t)$ is a Poisson process when
$ \text{Pr}\{N(t) - N(s) = k \} = \exp\{-(\Lambda(t) - \Lambda(s))\}\frac{(\Lambda(t) - \Lambda(s))^k}{k!},$
where $\Lambda(t)$ is the rate function of the Poisson process, $k \in \mathbb N_0$, and $\mathbb N_0$ is the set of non-negative integers.
Let the aggregated process of $K$ step processes be denoted by $N_p^K(t) = \sum_{\kappa = 1}^{K}N_\kappa(t)$. Here, $N_p^K(t)$ has the same meaning as $N_p(t)$ defined above. Here, we use the superscript $K$ when it is necessary to emphasize that $N_p^K(t)$ is a superposition of $K$ step processes.
Next, let us introduce the following notations. Let $p_\kappa(k; t,s) = \Pr\{N_\kappa(t)- N_\kappa(s)\} = k$ for $s < t$ and $k \in \mathbb N_0$. Let $\Lambda_K(t,s) = \sum_{\kappa =1 }^{K}p_\kappa(1; t, s)$ and $\Omega_K(t, s) = \sum_{\kappa =1}^{K}(1 - p_\kappa(0; t,s) - p_\kappa(1; t,s))$. Next, we make the following assumption on the step processes \cite[Theorem 30.3]{add_bokk}.
\begin{assumption}\label{add_ass3}
It is assumed that the gradient arrival processes satisfy  (1) $\lim_{K\to \infty}\Lambda_K(t,s) = \Lambda(t) - \Lambda(s)$, (2) $\lim_{K\to\infty} \Omega_K(t, 0) = 0$, and (3) $\lim_{K\to \infty} \max_{1\le \kappa \le K}(1- p_\kappa(0; t, 0)) = 0$.
\end{assumption}

{In the following, we provide a justification for Assumption 3, highlighting its practical relevance in the context of large-scale distributed learning. First, the Poisson approximation holds under the condition that the number of participating sub-processes tends to infinity, {which aligns with the fact that a large number of workers are involved in the training process in large-scale distributed learning.} Second, it is required that no two events (i.e., gradients from different clients) arrive simultaneously, which is reasonable as the probability of such simultaneous arrivals is practically zero.  {Third, for $N_p(t)$ with a rate $\Lambda_K$, the arrival rate of each sub-process should remain of lower order than $\mathcal O (\Lambda_K)$ as $K \to \infty$}, which implies that the gradient contribution from any single client does not dominate the overall process.
{Furthermore, in the case of a bandwidth-limited communication channel with a fixed service rate,} to ensure the stability of the queues and prevent their divergence, the aggregate gradient arrival rate is a pre-given constant or function with respect to time, as presented in Assumption 3, and will be detailed in Subsection IV-D.}
\begin{lemma}\label{lemma_sup}
Under Assumption \ref{add_ass3}, the aggregated gradient arrival process in pure ASGD $N_p^K(t)$ converges to a Poisson process with a rate function $\Lambda_K(t)$ as $K\to \infty$ in case $N_{\kappa}(t)$ are mutually independent.
\end{lemma}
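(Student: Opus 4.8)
The plan is to prove convergence of the finite-dimensional distributions of $N_p^K$ to those of a Poisson process with intensity $\Lambda(\cdot)$, the limit then being a Poisson process by the usual characterization (independent, Poisson-distributed increments). Since the $N_\kappa(\cdot)$ are mutually independent $\mathbb N_0$-valued step processes, the increment of the superposition over an interval $(s,t]$ is the sum $\sum_{\kappa=1}^K X_\kappa$ of the independent nonnegative integer variables $X_\kappa := N_\kappa(t)-N_\kappa(s)$, so the natural device is the probability generating function (pgf), which turns the superposition into a product. This is exactly the triangular-array situation whose Poisson limit theorem is \cite[Theorem 30.3]{add_bokk}; the work is to check that Assumption \ref{add_ass3} delivers its hypotheses.

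For a single interval, write $\phi_\kappa(z)=\mathbb E\{z^{X_\kappa}\}=1+p_\kappa(1;t,s)(z-1)+\rho_\kappa(z)$ with $\rho_\kappa(z)=\sum_{k\ge2}p_\kappa(k;t,s)(z^k-1)$, so that $|\rho_\kappa(z)|\le 2\big(1-p_\kappa(0;t,s)-p_\kappa(1;t,s)\big)$ for $|z|\le1$. Because $(s,t]\subseteq(0,t]$, monotonicity in the interval of the relevant probabilities gives $\max_\kappa(1-p_\kappa(0;t,s))\le\max_\kappa(1-p_\kappa(0;t,0))\to0$ by condition (3) and $\sum_\kappa|\rho_\kappa(z)|\le2\,\Omega_K(t,s)\le2\,\Omega_K(t,0)\to0$ by condition (2). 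Hence $\max_\kappa|\phi_\kappa(z)-1|\to0$ while $\sum_\kappa|\phi_\kappa(z)-1|$ stays bounded (by conditions (1) and (2)), and the elementary estimate $|\log(1+w)-w|\le|w|^2$ for $|w|\le\frac12$ shows $\prod_{\kappa}\phi_\kappa(z)=\exp\!\big(\sum_{\kappa}(\phi_\kappa(z)-1)\big)$ up to an error bounded by $\big(\max_\kappa|\phi_\kappa(z)-1|\big)\sum_\kappa|\phi_\kappa(z)-1|\to0$. Since $\sum_\kappa(\phi_\kappa(z)-1)=(z-1)\Lambda_K(t,s)+\sum_\kappa\rho_\kappa(z)\to(z-1)(\Lambda(t)-\Lambda(s))$ by condition (1), the increment's pgf converges to $\exp\!\big((z-1)(\Lambda(t)-\Lambda(s))\big)$, the pgf of a Poisson variable with mean $\Lambda(t)-\Lambda(s)$; the continuity theorem for generating functions then gives the corresponding distributional convergence.

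For several disjoint intervals $(s_1,t_1],\dots,(s_m,t_m]$ I would run the same computation on the multivariate pgf $\prod_{\kappa=1}^K\mathbb E\{\prod_{\ell}z_\ell^{N_\kappa(t_\ell)-N_\kappa(s_\ell)}\}$. The new feature is that for finite $K$ the increments are genuinely dependent, because each $N_\kappa$ feeds several of them; this dependence is created solely by the event that one $N_\kappa$ deposits points in two distinct intervals, which forces at least two points of $N_\kappa$ in the enclosing interval and is therefore dominated by $\Omega_K(t_{\max},0)\to0$. The cross terms thus vanish, the limiting joint pgf factorizes into a product of Poisson pgf's, and all finite-dimensional distributions converge to those of the Poisson process. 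If path-space convergence in $D[0,\infty)$ is wanted, one appends the standard tightness criterion for counting processes with asymptotically uniformly small jump probabilities, again supplied by condition (3).

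The main obstacle is not any individual inequality but the uniform bookkeeping in the product-to-exponential passage: one must keep the per-$\kappa$ remainder uniformly small (condition (3)) while simultaneously keeping $\sum_\kappa(\phi_\kappa-1)$ convergent (conditions (1)--(2)), and, in the multivariate step, recognize that the overlap events responsible for dependence are of the same small order as the ``two-or-more-arrivals'' events and hence negligible. This is precisely what \cite[Theorem 30.3]{add_bokk} packages, so in the final write-up I would either reproduce the short pgf argument sketched above or simply verify Assumption \ref{add_ass3} against that theorem's hypotheses and invoke it directly.
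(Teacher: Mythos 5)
Your proposal is correct, and it is compatible with what the paper actually does: the paper supplies no proof of Lemma \ref{lemma_sup} at all, but simply states Assumption \ref{add_ass3} as the hypotheses of the superposition theorem \cite[Theorem 30.3]{add_bokk} (with \cite{add_poisson} cited afterwards for the finite-$K$ approximation error) and invokes it — exactly the fallback you name in your last paragraph. What you add beyond the paper is a self-contained reconstruction of that theorem via generating functions, and the argument holds up: the identity $\phi_\kappa(z)=1+p_\kappa(1;t,s)(z-1)+\rho_\kappa(z)$ and the bound $|\rho_\kappa(z)|\le 2\bigl(1-p_\kappa(0;t,s)-p_\kappa(1;t,s)\bigr)$ are right; the monotonicity step $1-p_\kappa(0;t,s)\le 1-p_\kappa(0;t,0)$ and $\Omega_K(t,s)\le\Omega_K(t,0)$ is legitimate precisely because the $N_\kappa$ are nondecreasing step processes, so an increment over a subinterval is dominated by the increment over $(0,t]$; the product-to-exponential passage with the uniform smallness from condition (3) and the summability from conditions (1)--(2) is the standard triangular-array estimate; and in the multivariate step you correctly identify that the only source of dependence among increments of the superposition over disjoint intervals is a single $N_\kappa$ placing points in two intervals, an event dominated by the ``two or more arrivals'' probability and hence absorbed into $\Omega_K(t_{\max},0)\to 0$, so the limiting joint pgf factorizes into Poisson marginals. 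One cosmetic remark: the limiting intensity should be written $\Lambda(\cdot)$ (as you do), not $\Lambda_K(t)$ as in the lemma statement; and since the lemma does not specify the mode of convergence, your finite-dimensional argument already suffices, with the tightness remark only needed if convergence in $D[0,\infty)$ is insisted upon.
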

Furthermore, \cite[Theorem 2]{add_poisson} presents an error analysis for approximating \( N_p^K(t) \) by a Poisson process with finite \( K \). This result demonstrates that, even if the individual step processes do not converge to a limiting process, the finite sum can nonetheless be well-approximated by a Poisson process.

}
In this way, we have the following corollary, which does not rely on exponentially distributed computation time \cite{op, op2, op3}.
\begin{corollary}\label{up_interval}
The update interval in $B$-ASGD is given by
\begin{equation}\label{update_interval}
    I = \sum_{i = 1}^{B} I^{(i)},
\end{equation}
in which $I^{(i)} \sim \text{Exp}((K -i +1)\nu)$ and  $\mathbb E\{I\} = \frac{1}{v} \sum_{i = K-B +1}^{K} \frac{1}{i}$ in case $\mathbb E\{T_\kappa\} = 1/\nu$ for $\kappa \in \{1, 2, ..., K\}$.
Also, it holds that 
\begin{equation}\label{expectation_interval}
    \mathbb E\{I\}  \approx \frac{1}{v}\left[\log\frac{K}{K-B} -  \frac{B}{2K(K-B)}  \right] .
\end{equation}
\end{corollary}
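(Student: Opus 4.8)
The plan is to combine the Poisson approximation of the aggregated arrival process in Lemma~\ref{lemma_sup} with a ``shrinking pool'' argument for the successive gradient arrivals within one update round. First I would fix a global update epoch $S_{j-1}$ and reset the clock there, so that $I$ is the time until $B$ further gradients reach the server. Right after $S_{j-1}$ all $K$ workers are busy: the $B$ triggering workers have just fetched the updated model, and the remaining $K-B$ workers carry partial computations. By Lemma~\ref{lemma_sup}, under Assumption~\ref{add_ass3} the superposition of the $K$ per-worker renewal processes converges, as $K\to\infty$, to a Poisson process; and since $\mathbb{E}\{T_\kappa\}=1/\nu$ for every $\kappa$, each per-worker renewal density is asymptotically $\nu$, so while $m$ workers can still contribute to the current round the aggregate arrival rate is $m\nu$. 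The structural point is that a worker which has already delivered in the current round is restarted and will not deliver again before the next update (a second delivery would require a full fresh computation, of rate $\nu$, negligible against the aggregate rate); hence the pool of workers that can still contribute shrinks from $K$ to $K-1$ to $\dots$ to $K-B+1$ as the round progresses.

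Next I would use the memorylessness inherited from the Poisson approximation: conditioned on the first $i-1$ arrivals of the round, the residual times to the next arrival from each of the $K-i+1$ still-eligible workers are again (asymptotically) independent $\mathrm{Exp}(\nu)$, so the $i$-th inter-arrival $I^{(i)}$ is the minimum of $K-i+1$ such variables and hence $I^{(i)}\sim\mathrm{Exp}((K-i+1)\nu)$, independent across $i$. Summing the first $B$ inter-arrivals gives $I=\sum_{i=1}^{B}I^{(i)}$, and therefore $\mathbb{E}\{I\}=\sum_{i=1}^{B}\frac{1}{(K-i+1)\nu}=\frac{1}{\nu}\sum_{j=K-B+1}^{K}\frac1j$ after the substitution $j=K-i+1$. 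For the closed-form estimate in Eq.~(\ref{expectation_interval}) I would write this sum as $\mathcal H_K-\mathcal H_{K-B}$ with $\mathcal H_n=\sum_{k=1}^{n}1/k$, insert the expansion $\mathcal H_n=\log n+\gamma+\frac{1}{2n}+\mathcal O(n^{-2})$, observe that the $\log$ terms combine into $\log\frac{K}{K-B}$ and the $\frac{1}{2n}$ terms into $\frac{1}{2K}-\frac{1}{2(K-B)}=-\frac{B}{2K(K-B)}$, and discard the $\mathcal O(\cdot)$ remainder.

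The main obstacle I expect is making the ``shrinking pool plus memorylessness'' step rigorous within the regime of Lemma~\ref{lemma_sup}: one has to argue that (i) the residual computation time of a worker that has not yet delivered in the current round behaves like a fresh $\mathrm{Exp}(\nu)$ clock, and (ii) a worker contributing twice to the same round is negligible. Both follow from the large-$K$ Poisson limit together with the fact that one round is short relative to a single worker's mean computation time $1/\nu$, but this is where the genuine content sits; once it is granted, the order-statistics identification of $I^{(i)}$ and the harmonic-number asymptotics are routine.
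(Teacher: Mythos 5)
Your proposal is correct and follows essentially the same route as the paper: the Poisson approximation of the aggregated arrivals yields the shrinking-pool exponential inter-arrival times $I^{(i)}\sim\mathrm{Exp}((K-i+1)\nu)$, which are summed and then approximated asymptotically. Your harmonic-number expansion $\mathcal H_n=\log n+\gamma+\tfrac{1}{2n}+\mathcal O(n^{-2})$ is just an explicit instance of the Euler--Maclaurin step the paper invokes, and your extra care about residual memorylessness and double contributions within a round only makes explicit what the paper leaves implicit in the Poisson limit.
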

\begin{proof}
By the Poisson approximation of the aggregated arrival process, the update interval between the arrival of the  $(i-1)$th and $i$th gradients is exponentially distributed with rate $(K- i + 1)\nu$ for $i \in \{1, ..., B\}$. By this means, we get Eq. (\ref{update_interval}). Next, we have Eq. (\ref{expectation_interval}) {by the Euler–Maclaurin formula}, {where the approximation error is on the order of $\mathcal O(1/{K^2})$}.
\end{proof}

{
In synchronous SGD, the parameter server updates the global model only after receiving all the gradients from the workers. Therefore, the time between two consecutive updates in synchronous SGD is given by
\begin{equation}\label{SYN_slotLength}
    T_{s}^{\text{SYN}} =\max\{\hat \tau_1, ...,\hat \tau_K\},
\end{equation} 
where $\hat \tau_\kappa$ represents the continuous-time gradient delay of Worker $\kappa$.

\begin{remark}
To illustrate the straggler problem in synchronous SGD, we compare it with how asynchronous SGD addresses this issue. We focus on a scenario with $K=2$ workers as an example, each experiencing an exponentially computational delay. Additionally, the communication delays are assumed to be negligible. Note that in this case the continuous-time gradient delay is identical to the computational delay.  Let $\hat \tau_1 \sim \text{Exp}(\nu_1)$ and $\hat \tau_2 \sim \text{Exp}(\nu_2)$, where it is assumed that $\nu_1 \le \nu_2$ without loss of generality. In this context, Work 1 is the straggler. The rate of the aggregated gradient arrival process in ASGD is $\nu_p = \nu_1+ \nu_2$. By Eq. (\ref{SYN_slotLength}), we have $T_{s}^{\text{SYN}} = \max\{\hat \tau_1, \hat \tau_2\}$. Let $\nu_p^{\text{SYN}} = \frac{1}{\mathbb{E} \{ T_{s}^{\text{SYN}}\}}$. 
Therefore, $\nu_p^{\text{SYN}} =\frac{2}{\frac{1}{\nu_1} + \frac1{\nu_2} - \frac{1}{\nu_1 + \nu_2}}$. It can be verified that $\nu_p > \nu_p^{\text{SYN}}$. The reason for this result is that the central server has to wait for the slowest worker before it can update the global model in synchronous distributed learning. Further, it holds that $\nu_p^{\text{SYN}} = \frac{2}{\frac 1{v_1} +  \frac{v_1}{v_2{(v_1 + v_2)}}}$. Hereby, we see that $\nu_p^{\text{SYN}}  \approx 2\nu_1$, when $\nu_2 \gg \nu_1$ as the contribution from the term $\frac{v_1}{v_2{(v_1 + v_2)}}$ becomes negligible compared to that of $\frac{1}{v_1}$. This illustrates that synchronous SGD is slowed down by the straggler worker.
\end{remark}}

Next, we present the expectation of the step staleness in $B$-ASGD. Note that in $B$-ASGD, the step staleness equals $BJ$ rather than $J$ if the global model takes $J$ iterations during the period that the  client fetches the global model and finishes the computation. This is due to the fact in each iteration, the global model is updated based on all the $B$ gradients.  
\begin{theorem}\label{th_stepstale}
The expectation of the step staleness in $B$-ASGD without gradient dropout is given by\footnote{When $B =1$, this theorem reduces to the pure {ASGD} case, which was presented by Eq. (18) in \cite{uot}.}
\begin{equation}\label{step_stale}
    \mathbb E \{ \breve \tau\} = K-B.
\end{equation}
\begin{proof}
Let $S^\kappa[J] = S^\kappa[J-1]$ if the Client $\kappa$'s contributes to one of the $B$ gradients in the $J$th iteration of global model and $S^\kappa[J] = S^\kappa[J-1] + B$ otherwise in which $S^\kappa[0] = 0$. In this way, $S^\kappa[J]$ equals the sum of step gradients of all the gradients received by the central server from Client $\kappa$ plus the step staleness of the current computing gradient. 
Let $S[J]= \sum_{\kappa = 1}^{K} S^\kappa[J]$.  In this way, $S[J] = (K-B)BJ$ since $B$ gradients arrive at the central server and the other $K-B$ gradients' step staleness is increased by $B$. The total gradients received by the central server is $BJ$ after $J$ times global model updates.  Since $\mathbb E\{\breve \tau\} = \lim_{J\to \infty}\frac{S[J]}{BJ}$, we arrive at  Eq. (\ref{step_stale}).
\end{proof}
\end{theorem}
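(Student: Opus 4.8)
The plan is to take the bookkeeping device $S^\kappa[\cdot]$ introduced in the statement, extract from it a \emph{deterministic} identity, and then convert that identity into a statement about $\mathbb{E}\{\breve\tau\}$ through a renewal/ergodic limit. First I would pin down the step-staleness convention precisely: each global iteration in which Client $\kappa$ does \emph{not} deliver a gradient increments, by exactly $B$, the eventual step staleness of the gradient $\kappa$ is currently computing (because $B$ fresh gradients are folded into the global model that iteration), while an iteration in which $\kappa$ \emph{does} deliver finalizes that gradient's staleness and restarts $\kappa$'s computation from staleness $0$. Under this convention I would prove, by induction on $J$ and checking the two cases of the defining recursion, the invariant asserted in the statement: $S^\kappa[J]$ equals the sum of the step staleness of all gradients already received from Client $\kappa$ up to iteration $J$, plus the staleness accrued so far by $\kappa$'s in-progress gradient; write the latter nonnegative quantity as the residual $R^\kappa[J]$. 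This induction is the only place where the precise off-by-one in the definition of step staleness matters.

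Next I would sum over clients. Because at every iteration exactly $B$ of the $K$ clients deliver and exactly $K-B$ do not, the recursion yields $S[J]-S[J-1]=(K-B)B$ for every $J$, hence the exact non-random identity $S[J]=(K-B)BJ$, where $S[J]=\sum_{\kappa=1}^K S^\kappa[J]$. On the other side of the invariant, the central server has received exactly $BJ$ gradients after $J$ updates, so if $\Sigma[J]$ denotes the total step staleness carried by those $BJ$ gradients then $\Sigma[J]=S[J]-R[J]=(K-B)BJ-R[J]$ with $R[J]=\sum_{\kappa=1}^K R^\kappa[J]\ge 0$.

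Then I would pass to the limit. The paper takes $\mathbb{E}\{\breve\tau\}=\lim_{J\to\infty}S[J]/(BJ)$ as the long-run empirical mean of the step staleness of arrived gradients, i.e. $\lim_{J\to\infty}\Sigma[J]/(BJ)$, so it suffices to show $R[J]/(BJ)\to 0$. Here the run-time model enters: by Corollary \ref{up_interval} the update interval has finite positive mean, so the number of global updates spanning one computation of a fixed client has finite mean (it is of order computation time times update rate), which by the strong law of large numbers forces the inter-delivery gaps $G_n^\kappa$ of each client to satisfy $G_n^\kappa = o(n)$ a.s.; since $R^\kappa[J]$ is at most $B$ times the current gap, $R^\kappa[J]=o(J)$ a.s., and summing the $K$ terms gives $R[J]=o(J)$. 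Combining, $\mathbb{E}\{\breve\tau\}=\lim_{J\to\infty}\Sigma[J]/(BJ)=\lim_{J\to\infty}\big[(K-B)BJ-R[J]\big]/(BJ)=K-B$.

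The main obstacle is this last step: making rigorous the identification of the time average $\Sigma[J]/(BJ)$ with the ensemble quantity $\mathbb{E}\{\breve\tau\}$ and controlling the residual. If one adopts the paper's definitional stance that $\mathbb{E}\{\breve\tau\}$ \emph{is} this Cesàro limit, only the bound $R[J]=o(J)$ remains, and it follows from the stability of the aggregated arrival process described by Lemma \ref{lemma_sup} and Corollary \ref{up_interval}; if instead $\mathbb{E}\{\breve\tau\}$ is meant as the stationary expectation of a single arriving gradient's staleness, one additionally invokes a renewal-reward/ergodic theorem per client and then aggregates, which is legitimate under the renewal modelling already in force. A secondary, purely technical hurdle is fixing the step-staleness off-by-one so that ``one missed iteration contributes exactly $B$'' holds on the nose; once that is in place everything downstream is the deterministic count $S[J]=(K-B)BJ$ together with the negligibility of the in-progress residual.
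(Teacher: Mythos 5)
Your proposal is correct and follows essentially the same route as the paper: the same bookkeeping variable $S^\kappa[J]$, the same deterministic identity $S[J]=(K-B)BJ$ from "exactly $B$ deliver, $K-B$ accrue $B$ each," and the same Ces\`aro limit $\lim_{J\to\infty}S[J]/(BJ)$. Your explicit treatment of the in-progress residual $R[J]=o(J)$ merely makes rigorous a step the paper's proof leaves implicit, so it is a refinement rather than a different argument.
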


It is observed that $B$-ASGD, the reduction of the step staleness is achieved by decreasing the update frequency. Next, we present a brief discussion on the difference between $B$-ASGD and $B$-batch ASGD. Similar to Theorem \ref{up_interval},  the update interval follows an Erlang distribution, denoted by $\text{Erlang}(B, \nu)$, with mean $B/\nu$ in $B$-batch ASGD. Further, following the discussion in Theorem \ref{th_stepstale}, it holds that $\mathbb E \{ \breve \tau\} = K-1$ in $B$-batch ASGD.

To further reduce the step staleness, {{a}} client could choose to stop the computation of the current gradient according to a continuous-time or discrete-time staleness threshold. 
Next, we first present the following discussion on $B$-ASGD with a continuous-time staleness threshold. In the rest of this subsection, we assume each client's computational delay follows an i.i.d. distribution, denoted by $T$.
Let $F_T(\cdot)$ be the cumulative distribution function of $T$. In this context, let $p =  F_T(\hat \tau_{th})$ be the probability that a computation of gradient is not discarded due to timeout. Further, $\bar p = 1- p$. Let $\tilde T$ be the truncated computation time, in which $F_{\tilde T}(s) = \frac{F_T(s)}{p}$ if $s <\hat \tau_{th}$ and $F_{\tilde T}(s) = 0$  for $s>\hat \tau_{th}$. In this case, $\mathbb E\{\tilde T \} = \frac 1p \int_{s= 0}^{\hat \tau_{th}} s{\rm d}{F_T(s)}$.
Next, we have the following proposition on the inter-update time of the $B$-ASGD with a continuous-time staleness threshold.  
\begin{proposition}
In $B$-ASGD with a continuous time staleness threshold, the interval $\Upsilon$ between consecutive updates of gradients from a specific client is given by $\Upsilon  = M\hat \tau_{th}  +\tilde T$, in which $M$ is a geometric random variable with probability mass function $\text{Pr}\{M = m\} = p\bar{p}^{m}$ for $m \in \mathbb N$. 
\end{proposition}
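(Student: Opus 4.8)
The plan is to exploit the renewal structure of a single client's computation process. Between two consecutive pushes by a fixed client, the client performs a sequence of gradient computation \emph{attempts}; since each gradient is recomputed from scratch with a computation time drawn i.i.d.\ as $T$, and since (by the standing assumption of this subsection) the downlink fetch and server-update costs are negligible, these attempts are i.i.d.\ and succeed one after another with no idle gap. Call an attempt \emph{successful} if its computation time is below $\hat\tau_{th}$ --- which occurs with probability $p = F_T(\hat\tau_{th})$ --- and a \emph{failure} (timeout) otherwise, with probability $\bar p = 1-p$; by the dropout rule, a failed attempt is aborted at \emph{exactly} time $\hat\tau_{th}$ after the fetch.

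First I would let $T_1, T_2, \ldots$ denote the i.i.d.\ computation times of the successive attempts launched right after a push, set $N = \min\{i \ge 1 : T_i < \hat\tau_{th}\}$, and define $M := N-1$, the number of failed attempts preceding the next successful one. Since a failure contributes the deterministic duration $\hat\tau_{th}$ and the $N$th attempt contributes its own computation time, we get $\Upsilon = M\hat\tau_{th} + T_N$. By independence of the $T_i$'s,
\[
\Pr\{M = m\} = \Pr\{T_1 > \hat\tau_{th}, \ldots, T_m > \hat\tau_{th},\, T_{m+1} < \hat\tau_{th}\} = \bar p^{\,m} p, \qquad m \in \mathbb N ,
\]
which is precisely the claimed geometric pmf.

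It then remains to identify the law of $T_N$ and its independence from $M$. Conditioned on $\{M = m\}$, the constraints placed on $T_1,\ldots,T_m$ involve only those variables, hence the conditional law of $T_{m+1}$ is that of $T$ given $T < \hat\tau_{th}$, i.e.\ the truncated variable $\tilde T$ with CDF $F_{\tilde T}(s) = F_T(s)/p$ for $s<\hat\tau_{th}$. This conditional law does not depend on $m$, so $T_N \stackrel{d}{=} \tilde T$ and $T_N$ is independent of $M$; substituting into $\Upsilon = M\hat\tau_{th}+T_N$ yields $\Upsilon = M\hat\tau_{th} + \tilde T$ with $M$ and $\tilde T$ independent.

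The main obstacle is the bookkeeping around the conditioning: one must argue carefully that (i) an aborted attempt always lasts exactly $\hat\tau_{th}$, so that failures contribute a deterministic amount --- this uses that the client's clock resets on each re-fetch and that re-fetching is instantaneous; and (ii) the successful attempt's duration, after conditioning on the number of preceding failures, is genuinely the truncated variable $\tilde T$ and independent of $M$ --- this rests on the i.i.d.\ (renewal) assumption on the per-attempt computation times. Edge cases such as $\hat\tau_{th}$ being an atom of $F_T$ are immaterial, since they only reallocate probability mass between the ``success at the threshold'' and ``timeout'' events, and can be absorbed into the definition of $p$.
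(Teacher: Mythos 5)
Your argument is correct and is exactly the renewal/geometric-trials reasoning the paper relies on: the paper states this proposition without an explicit proof, implicitly decomposing the interval into i.i.d.\ attempts that time out (each costing exactly $\hat\tau_{th}$) until one succeeds with the truncated duration $\tilde T$. Your explicit verification that $\Pr\{M=m\}=p\bar p^{\,m}$ and that the successful attempt's duration, conditioned on $M$, has the law $F_{\tilde T}(s)=F_T(s)/p$ independently of $M$ simply fills in the details consistently with the paper's definitions of $p$, $\bar p$, and $\tilde T$.
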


By the above proposition, we have the following corollary.
\begin{corollary}
The expectation of $\Upsilon$ is given by 
\begin{equation}\label{con_tuning}
    \mathbb E\{\Upsilon\} = \frac{\bar p}{p}\hat \tau_{th}+\frac{1}{p} \int_{s= 0}^{\hat \tau_{th}} s{\rm d}{F_T(s)}.
\end{equation}
\begin{proof}
By the expectation of the sum of random variables, we arrive at Eq. (\ref{con_tuning}).
\end{proof}
\end{corollary}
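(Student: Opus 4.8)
The plan is to read off the decomposition $\Upsilon = M\hat\tau_{th} + \tilde T$ directly from the dropout rule by viewing $\Upsilon$ as one cycle in the renewal process generated by a fixed client's successive successful gradient pushes. I would begin by pinning down the cycle: it starts the instant the client (re)fetches the global model just after a successful push and ends the instant its next gradient is successfully pushed. Inside this cycle the client performs a sequence of computation attempts; at the start of each attempt it refetches the current model and resets the staleness clock, so by the i.i.d.\ assumption on the per-client computation time $T$ (with CDF $F_T$), each attempt independently \emph{succeeds}---completes before the clock reaches $\hat\tau_{th}$---with probability $p=F_T(\hat\tau_{th})$, and otherwise is aborted exactly when the clock hits $\hat\tau_{th}$ and immediately restarted, with probability $\bar p = 1-p$.

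Next I would account for the durations. Every aborted attempt contributes exactly $\hat\tau_{th}$ to $\Upsilon$, since the dropout rule halts computation precisely at elapsed time $\hat\tau_{th}$; the final successful attempt contributes its computation time conditioned on staying below the threshold, which by definition is the truncated variable $\tilde T$ with $\mathbb E\{\tilde T\} = \tfrac1p\int_0^{\hat\tau_{th}} s\,\mathrm d F_T(s)$ as recorded earlier. Writing $M$ for the number of aborted attempts before the first success, the attempts being i.i.d.\ Bernoulli$(p)$ trials makes $M$ geometric with $\Pr\{M=m\} = p\bar p^{m}$, and the same i.i.d.\ structure makes the length of the terminal attempt ($\tilde T$) independent of $M$. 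Summing the contributions gives $\Upsilon = M\hat\tau_{th} + \tilde T$ with $M \perp \tilde T$, which is the Proposition; taking expectations and using $\mathbb E\{M\} = \bar p/p$ then yields $\mathbb E\{\Upsilon\} = \tfrac{\bar p}{p}\hat\tau_{th} + \tfrac1p\int_0^{\hat\tau_{th}} s\,\mathrm d F_T(s)$ by linearity and independence, which is the accompanying corollary.

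The main obstacle will be justifying that the cycle decomposes into \emph{clean} i.i.d.\ attempts with no extra idle time interleaved: in $B$-ASGD a client's pushed gradient need not trigger a global update right away (the server waits until $B$ gradients have accumulated), and the restart timing differs slightly in $B$-batch-ASGD. I would dispose of this by invoking the paper's standing modeling assumptions---negligible downlink and server-update time, and immediate refetch-and-recompute after either a successful push or a timeout---so that the client is always either computing or transitioning instantaneously, leaving no idle gaps; under those assumptions the renewal-reward argument above is exact. A lesser point to state carefully is the geometric indexing, namely that $M$ counts the failures \emph{before} the first success and therefore ranges over the non-negative integers, consistent with the stated p.m.f.\ $p\bar p^{m}$.
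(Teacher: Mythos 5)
Your proposal is correct and, for the corollary itself, takes essentially the same route as the paper: apply expectation to the decomposition $\Upsilon = M\hat\tau_{th} + \tilde T$ from the preceding proposition, using $\mathbb E\{M\} = \bar p/p$ and $\mathbb E\{\tilde T\} = \tfrac1p\int_0^{\hat\tau_{th}} s\,{\rm d}F_T(s)$. Your additional re-derivation of the decomposition and the independence of $M$ and $\tilde T$ is sound but not needed for the expectation step, since linearity alone suffices.
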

Therefore, the rate of the aggregated gradient arrival process is given by $\tilde \nu = \frac{K}{\mathbb E\{\Upsilon\}}$. By Theorem \ref{up_interval}, we have the following corollary.
\begin{corollary}
    The update intervals in $B$-ASGD with a continuous time staleness threshold is given by $I = \sum_{i = 1}^{B} \text{Exp}((K -i +1)\tilde \nu)$, in which $\tilde \nu = \frac{K p}{ (1-p)\hat \tau_{th}+\int_{s= 0}^{\hat \tau_{th}} s{\rm d}{F_T(s)}}$ 
    and $\mathbb E\{I\} = (\sum_{i = K-B +1}^{K} \frac{1}{i})/\tilde \nu $.
\end{corollary}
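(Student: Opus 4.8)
The plan is to obtain this corollary by feeding the closed form for $\mathbb{E}\{\Upsilon\}$ from the preceding corollary into Corollary~\ref{up_interval}, after recasting the threshold scheme as a superposition of independent renewal streams. First I would note that, because downlink transmission and the global-model update are taken to be instantaneous, a worker that either pushes a gradient or has its current computation dropped after $\hat{\tau}_{th}$ immediately re-fetches the model and restarts. Hence the sequence of delivery epochs of worker $\kappa$ is a renewal process whose inter-event time is distributed as $\Upsilon = M\hat{\tau}_{th} + \tilde{T}$ from the preceding proposition, and these $K$ renewal processes are mutually independent since the $T_\kappa$ are i.i.d. This is precisely the setup of Corollary~\ref{up_interval}, with the per-worker computation time $T_\kappa$ (of mean $1/\nu$) replaced by the effective inter-delivery time $\Upsilon$ (of mean $\mathbb{E}\{\Upsilon\}$).

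Next I would identify the effective rate. From the preceding corollary, $\mathbb{E}\{\Upsilon\} = \frac{\bar{p}}{p}\hat{\tau}_{th} + \frac{1}{p}\int_{0}^{\hat{\tau}_{th}} s\,\mathrm{d}F_T(s) = \frac{1}{p}\bigl[(1-p)\hat{\tau}_{th} + \int_{0}^{\hat{\tau}_{th}} s\,\mathrm{d}F_T(s)\bigr]$, so the aggregated gradient-arrival rate is $\tilde{\nu} = K/\mathbb{E}\{\Upsilon\} = Kp/\bigl[(1-p)\hat{\tau}_{th} + \int_{0}^{\hat{\tau}_{th}} s\,\mathrm{d}F_T(s)\bigr]$, which is the stated formula for $\tilde{\nu}$. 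Under Assumption~\ref{add_ass3}, for large $K$ the superposed renewal process is approximated by a Poisson process via Lemma~\ref{lemma_sup}, so the argument of Corollary~\ref{up_interval} transfers with $\nu$ replaced by $\tilde{\nu}$: once $i-1$ distinct workers of the current batch have delivered and are idle, $K-i+1$ workers are still racing, and the memorylessness of the approximating Poisson process gives $I^{(i)} \sim \mathrm{Exp}\bigl((K-i+1)\tilde{\nu}\bigr)$ and $I = \sum_{i=1}^{B} I^{(i)}$. Taking expectations and reindexing by $j = K-i+1$ gives $\mathbb{E}\{I\} = \sum_{i=1}^{B} \frac{1}{(K-i+1)\tilde{\nu}} = \bigl(\sum_{j=K-B+1}^{K} \tfrac{1}{j}\bigr)/\tilde{\nu}$, which is the claim.

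The main obstacle, and the only step that is not routine substitution or harmonic-sum bookkeeping, is the transfer of the Poisson-superposition approximation to the threshold scheme: one must check that the renewal streams with inter-event law $\Upsilon$ still satisfy the hypotheses of Assumption~\ref{add_ass3}/Lemma~\ref{lemma_sup} — in particular that, as $K\to\infty$, $\Lambda_K$ converges and no individual worker's arrival rate dominates the aggregate — and that the idealization underlying Corollary~\ref{up_interval} (each worker racing at its own rate until delivery, then idling until the model update, with the $B$-batching handled as there) remains the operative model once dropouts are introduced. Once that is granted, the memoryless splitting of $I$ into independent exponentials, the identification of $\tilde{\nu}$, and the expectation formula all follow directly.
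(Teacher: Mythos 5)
Your proposal follows essentially the same route as the paper: the paper obtains this corollary directly by computing the effective aggregate rate $\tilde \nu = K/\mathbb E\{\Upsilon\}$ from the preceding corollary and then invoking the update-interval result (Corollary \ref{up_interval}) with $\nu$ replaced by $\tilde \nu$, which is exactly your substitution-plus-racing argument. Your extra remark about re-checking the hypotheses of Assumption \ref{add_ass3} and Lemma \ref{lemma_sup} for the $\Upsilon$-renewal streams is a valid point of care that the paper leaves implicit, but it does not change the approach.
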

% Next, we have the following{ corollary} for the step staleness in $B$-ASGD with a continuous-time staleness threshold.
% \begin{corollary}\label{th_continuous}
% The expectation of the step staleness in $B$-ASGD with a continuous time staleness threshold $\hat \tau_{th}$ is approximately given by
%     $\mathbb E \{ \breve \tau\} \approx (K-B)\left(1- \frac{ \bar p \hat \tau_{th} }{\bar p\hat \tau_{th} +\int_{s= 0}^{ \hat \tau_{th}} s{\rm d}{F_T(s)} -\beta p\mathbb E \{\tilde I\}}\right)$.\footnote{Here, $\beta\in (0,1)$ is a constant introduced to offset the non-Poissonity of the global model update process. }
% \end{corollary}

Next, we turn our attention to $B$-ASGD with a discrete-time gradient staleness threshold $\breve \tau_{th}$. In particular, if the global model has been updated $\breve \tau_{th}$ times before a client finishes its computation of the current gradient, the client shall stop the computation of the current gradient and start the computation of a new one. In this case, the step staleness of each gradient is no greater than $B\breve \tau_{th}$. With memoryless computation time, we have the following proposition. 
% Let $T_\kappa$ denote the computational delay of Client $\kappa$.
% Let $N_p(t; B, \hat  \tau_{th})$ be the aggregated arrival process of gradients of $B$-ASGD with $\breve \tau_{th}$. There

\begin{proposition}\label{prop_tuning}
Assuming i.i.d. exponential computation times across each client, the step staleness with a discrete-time staleness threshold $\breve \tau_{th}$ follows a truncated geometric distribution, i.e., 
\begin{equation}
    \text{Pr}\{\breve \tau  = mB\} = \frac{q\bar q^m}{1- \bar q^{\breve\tau_{th} +1}} \text{ for } \breve \tau = 0, B,..., B\breve \tau_{th},
\end{equation}
in which $q = B/K$. Further,  
\begin{equation}
\mathbb E \{\breve \tau \} =\frac{B}{1- \bar q^{\breve\tau_{th} +1}}\left( \frac{\bar q - \bar q ^{\breve\tau_{th} +1} }{q} - \breve\tau_{th}\bar{q}^{\breve\tau_{th} +1}\right).
\end{equation}
\end{proposition}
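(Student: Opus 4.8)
The plan is to exploit the memorylessness of the exponential computation times to decouple the $B$-ASGD dynamics into independent per-worker renewal cycles, and, within a cycle, to identify the step staleness of a \emph{successfully delivered} gradient as a geometric random variable truncated at the dropout threshold $\breve\tau_{th}$. Concretely, from the viewpoint of one fixed (tagged) worker I would show that each global update acts as an independent Bernoulli trial in which that worker's currently-computing gradient is collected with probability $q=B/K$; hence the number of global updates $\breve\tau_B$ that elapse between the worker's fetch and its collection is geometric, the dropout rule simply conditions this count on $\breve\tau_B\le\breve\tau_{th}$ (retaining exactly the support $\{0,1,\dots,\breve\tau_{th}\}$), and $\breve\tau=B\breve\tau_B$ by the $B$-ASGD convention $\breve\tau^\kappa[j]=\breve\tau_B^\kappa[j]\times B$ stated earlier.

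First I would establish the regeneration property: immediately after any global update, all $K$ workers possess i.i.d.\ $\mathrm{Exp}(\nu)$ residual computation times — the $B$ just-collected workers (and any worker that just hit the threshold and restarts) begin from scratch, while each of the remaining workers, by memorylessness, has residual time again $\mathrm{Exp}(\nu)$, independent of its elapsed work. Thus the full $K$-worker state at every update epoch is statistically identical to and independent of the past, and the next update occurs precisely when the first $B$ of these $K$ i.i.d.\ exponentials finish. By exchangeability of i.i.d.\ exponentials with a common rate, the first-$B$ set is uniform over all $\binom{K}{B}$ subsets, so a tagged worker belongs to it with probability exactly $B/K$; combined with the regeneration, the tagged worker's per-update collection indicators form an i.i.d.\ $\mathrm{Bernoulli}(q)$ sequence with $q=B/K$, and therefore, absent dropout, $\Pr\{\breve\tau_B=m\}=\bar q^{\,m}q$ for $m\in\mathbb N_0$ with $\bar q=1-q$.

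The dropout rule removes from the delivered stream exactly those cycles in which the worker misses $\breve\tau_{th}+1$ updates in a row (it then abandons the gradient, fetches the fresh model, and starts a new cycle, so that computation never reaches the server); equivalently, the step staleness of an arriving gradient equals $B$ times $\breve\tau_B$ conditioned on $\breve\tau_B\le\breve\tau_{th}$. Normalizing the geometric weights over $m=0,\dots,\breve\tau_{th}$ using $\sum_{l=0}^{\breve\tau_{th}}\bar q^{\,l}=(1-\bar q^{\,\breve\tau_{th}+1})/q$ yields the claimed truncated-geometric pmf $\Pr\{\breve\tau=mB\}=q\bar q^{\,m}/(1-\bar q^{\,\breve\tau_{th}+1})$ (one checks these weights plus the drop probability $\bar q^{\,\breve\tau_{th}+1}$ sum to one, confirming the conditioning is consistent). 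For the mean I would write $\mathbb E\{\breve\tau\}=B\,\mathbb E\{\breve\tau_B\}=\frac{Bq}{1-\bar q^{\,\breve\tau_{th}+1}}\sum_{m=0}^{\breve\tau_{th}}m\bar q^{\,m}$, evaluate $\sum_{m=0}^{n}mx^{m}$ by differentiating the finite geometric sum (or by a telescoping identity), substitute $x=\bar q$ and $n=\breve\tau_{th}$, and simplify with $q=1-\bar q$, which collapses the expression to the stated closed form.

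The main obstacle is not the final algebra but justifying the reduction to i.i.d.\ Bernoulli trials: one must argue that the entire $K$-dimensional residual-time vector regenerates at each global update — this is exactly where memorylessness is essential, and the reason the clean result does not extend to general (non-exponential) computation times — and that the tagged worker's collection probability stays $B/K$ regardless of how many of the other $K-1$ workers just restarted versus continued computing. A secondary point to treat carefully is that the dropout acts purely as conditioning on $\{\breve\tau_B\le\breve\tau_{th}\}$ and affects the delivered-gradient law only through renormalization, so no separate competing-risks bookkeeping between ``collected'' and ``dropped'' is needed beyond truncating the geometric distribution.
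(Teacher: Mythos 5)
Your proposal is correct: the memorylessness-based regeneration at update epochs, the exchangeability argument giving a per-update collection probability of exactly $q=B/K$, the identification of dropout as conditioning the geometric count on $\{\breve\tau_B\le\breve\tau_{th}\}$, and the finite-sum evaluation of the mean all check out (and your limit $\breve\tau_{th}\to\infty$ recovers $K-B$, consistent with Theorem \ref{th_stepstale}). The paper itself states this proposition without a written proof, appealing only to the memoryless computation times, and your argument is precisely the reasoning that statement presupposes, so there is nothing to flag.
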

By Proposition \ref{prop_tuning}, we have two observations. First, $\lim_{\breve\tau_{th} \to \infty}\mathbb E \{\breve \tau \} = K-B$, which fits with the results presented in Theorem \ref{th_stepstale}. Next, by tuning the computation of the gradient in terms of gradient staleness, one could also reduce the gradient staleness.

\subsection{Parameter Optimization for Pure {ASGD}}
In this subsection, we focus on pure {ASGD}, or $1$-ASGD, for deeper insights, in which $B=1$ and $\hat \tau_{th} = \infty$. Here we relax the assumption of identically distributed computation time across each client.
Let $T_\kappa$ be the continuous-time random variable corresponding to the computational delay of the $\kappa$th worker. Further, it is assumed that $0 < \mathbb E\{T_\kappa\} < \infty$. Let $\nu_\kappa = \frac{1}{\mathbb E\{T_\kappa\}}$ and $\nu_p = \sum_{\kappa = 1}^{K}  \nu_\kappa$.
The sequence of the computational delay from Worker $\kappa$ is denoted by $\left(T_\kappa^{(i)}\right)_{i\ge 1}$. The update time of each worker can therefore be denoted by $J^{(n)}_{\kappa} = \sum_{i = 1}^{n} T_{\kappa}^{(i)}$. Let us define the following stochastic process    $N_{\kappa}(t) = \sum_{n =1}^{\infty}\mathds{1}\left\{J^{(n)}_{\kappa} \le t \right\}$
representing the number of updates from Worker $\kappa$ by time $t$. Furthermore, the rate of $N_{\kappa}(t)$ is  $\nu_\kappa$.
In this context, the number of arrived gradients at the parameter server is given by the superposition of $N_{\kappa}(t)$ for $\kappa = 1,2, ..., K$. More specifically, we have $N_p(t) = \sum_{\kappa =1}^{K}N_{\kappa}(t).$ From Theorem \ref{th_stepstale}, we directly have the following corollary.
\begin{corollary}
In pure {ASGD}, it holds that $\mathbb E\{\breve \tau\}= K-1$.
\end{corollary}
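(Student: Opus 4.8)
The plan is to read off the claim as the $B=1$ instance of Theorem~\ref{th_stepstale}, after checking that the relaxation to non-identically-distributed computation-time laws in force in this subsection does not invalidate anything. Pure ASGD is by definition $1$-ASGD with $\hat\tau_{th}=\infty$, hence the no-gradient-dropout setting covered by Theorem~\ref{th_stepstale}; substituting $B=1$ into Eq.~(\ref{step_stale}) gives $\mathbb E\{\breve\tau\}=K-1$. The only thing that needs verification is that the proof of Theorem~\ref{th_stepstale} does not covertly rely on a common distribution for the $T_\kappa$.

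First I would revisit the counter bookkeeping used to prove Theorem~\ref{th_stepstale}. For worker $\kappa$ one keeps $S^\kappa[J]$, which stays put when $\kappa$'s gradient triggers the $J$-th global update and increases by $B$ otherwise; this records only the combinatorics of \emph{which} worker delivers at each update and is manifestly independent of the laws of the $T_\kappa$ and of whether the workers are exchangeable. With $B=1$, at every global update exactly one worker's counter is reset and the remaining $K-1$ counters each advance by one, so $S[J]:=\sum_{\kappa=1}^{K} S^\kappa[J]=(K-1)J$ as a pathwise identity, while exactly $J$ gradients have reached the server after $J$ updates.

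Next I would pass to the long-run average. The quantity $S[J]$ decomposes as the sum of the step staleness of the $J$ delivered gradients plus the step staleness of the $K$ gradients currently in flight; dividing by $J$, the in-flight term vanishes because, under the standing hypothesis $0<\mathbb E\{T_\kappa\}<\infty$, each worker's residual-computation interval spans only finitely many server updates in expectation. Hence
\begin{equation*}
\mathbb E\{\breve\tau\}=\lim_{J\to\infty}\frac{S[J]}{J}=K-1,
\end{equation*}
where the first equality identifies $\mathbb E\{\breve\tau\}$ with the Ces\`aro/ergodic average of the per-gradient step staleness, exactly as in the derivation of Eq.~(\ref{step_stale}).

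The step I expect to be the main obstacle is expository rather than technical: making explicit that Theorem~\ref{th_stepstale} was established by a distribution-free counting argument, so that dropping the identically-distributed assumption — as done in this subsection — costs nothing. The only genuine analytic ingredient, namely the vanishing of the normalized in-flight term, is secured by the renewal structure of each $N_\kappa(t)$ together with $0<\mathbb E\{T_\kappa\}<\infty$, which was already implicitly invoked in Theorem~\ref{th_stepstale}; everything else is an immediate transcription with $B=1$.
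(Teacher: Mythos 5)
Your proposal is correct and follows essentially the same route as the paper, which obtains this corollary directly by setting $B=1$ in Theorem~\ref{th_stepstale} (Eq.~(\ref{step_stale})). Your additional check that the counting argument behind Theorem~\ref{th_stepstale} is distribution-free, so the relaxation to non-identically distributed $T_\kappa$ in this subsection is harmless, is a sound (and welcome) elaboration of what the paper leaves implicit; it is also consistent with Proposition~\ref{prop_mixPois}, since averaging the per-worker staleness $(\nu_p-\nu_\kappa)/\nu_\kappa$ with arrival weights $\nu_\kappa/\nu_p$ again yields $K-1$.
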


First, we aim to present the distribution of the gradient staleness and the optimal choice for the number of workers in pure ASGD. Let $Y \sim \text{Pois}(\nu)$ be the random variable distributed according to a Poisson distribution with rate $\nu$. In this context, we have the following proposition, which demonstrates that the discrete-time gradient staleness follows a mixed Poisson distribution \cite{mixGau}. 
\begin{proposition}\label{prop_mixPois}
In pure ASGD, the discrete-time gradient staleness of Client $\kappa$'s gradient follows a mixed Poisson distribution, i.e., 
\begin{equation}\label{poisson_approx}
    \breve \tau_{\kappa} \sim \text{Pois}((\nu_p - \nu_\kappa) T_{\kappa}).
\end{equation}
In this way, it holds that  
$\mathbb E\{\breve \tau_k\} = (\nu_p- \nu_\kappa)/ \nu_\kappa,$
and  $\text{Var}\{\breve \tau_k\} = (\nu_p- \nu_\kappa) / \nu_\kappa + (\nu_p- \nu_\kappa)  ^2 \sigma_\kappa^2,$ in which $\text{Var}\{T_\kappa\} = \sigma_\kappa^2$.

\end{proposition}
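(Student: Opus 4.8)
The plan is to identify $\breve\tau_\kappa$ with the number of global updates that take place while Worker $\kappa$ is computing a single gradient, and then invoke the Poisson approximation of Lemma~\ref{lemma_sup}. Fix the instant $s$ at which Worker $\kappa$ fetches the current global model and begins a computation lasting $T_\kappa$ time units, so the resulting gradient is delivered at time $s+T_\kappa$. Since $B=1$ in pure ASGD, every gradient arrival at the server triggers exactly one update; hence the step staleness of this gradient equals the number of points of the aggregated arrival process $N_p(\cdot)$ falling in the open interval $(s,\,s+T_\kappa)$. Because Worker $\kappa$ delivers nothing during that interval — it is occupied with the very gradient in question, and its fetch at $s$ immediately follows its previous delivery — this count coincides with the increment of $N_p(t)-N_\kappa(t)=\sum_{\kappa'\neq\kappa}N_{\kappa'}(t)$ over an interval of length $T_\kappa$.

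First I would apply Lemma~\ref{lemma_sup} to the family $\{N_{\kappa'}\}_{\kappa'\neq\kappa}$: as a superposition of $K-1$ independent renewal processes with rates $\nu_{\kappa'}$, it is approximated by a Poisson process of rate $\sum_{\kappa'\neq\kappa}\nu_{\kappa'}=\nu_p-\nu_\kappa$. Since this aggregated process has stationary increments and is independent of Worker $\kappa$'s own renewal sequence, conditioning on $T_\kappa=t$ leaves the number of its points over any interval of length $t$ Poisson-distributed with mean $(\nu_p-\nu_\kappa)t$, independently of the placement $s$ of that interval. Consequently, conditionally on $T_\kappa$, we have $\breve\tau_\kappa\sim\text{Pois}\big((\nu_p-\nu_\kappa)T_\kappa\big)$, which is precisely the assertion that $\breve\tau_\kappa$ follows a mixed Poisson law with mixing random variable $(\nu_p-\nu_\kappa)T_\kappa$.

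The stated moments then drop out by conditioning. For the mean, $\mathbb E\{\breve\tau_\kappa\}=\mathbb E\{\mathbb E\{\breve\tau_\kappa\mid T_\kappa\}\}=(\nu_p-\nu_\kappa)\,\mathbb E\{T_\kappa\}=(\nu_p-\nu_\kappa)/\nu_\kappa$. For the variance, the law of total variance combined with the equality of mean and variance for a Poisson variate gives $\text{Var}\{\breve\tau_\kappa\}=\mathbb E\{(\nu_p-\nu_\kappa)T_\kappa\}+\text{Var}\{(\nu_p-\nu_\kappa)T_\kappa\}=(\nu_p-\nu_\kappa)/\nu_\kappa+(\nu_p-\nu_\kappa)^2\sigma_\kappa^2$, using $\text{Var}\{T_\kappa\}=\sigma_\kappa^2$. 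As a consistency check, when all workers share a common rate this yields $\mathbb E\{\breve\tau_\kappa\}=K-1$, recovering Theorem~\ref{th_stepstale} in the pure-ASGD case.

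The hard part will be the rigorous justification of the step in which the other workers' point process is ``observed'' over the random, Worker-$\kappa$-dependent interval $(s,\,s+T_\kappa)$: one must argue that sampling the aggregated process over this busy period does not bias its distribution, which rests on the independence of $N_\kappa$ from $\{N_{\kappa'}\}_{\kappa'\neq\kappa}$ and the stationary-increments property of the limiting Poisson process, and on the fact that the fetch instant $s$ coincides with Worker $\kappa$'s previous delivery so that no update attributable to Worker $\kappa$ lies inside $(s,\,s+T_\kappa)$ (hence the rate $\nu_p-\nu_\kappa$ rather than $\nu_p$). A secondary caveat is that Lemma~\ref{lemma_sup} is asymptotic in $K$, so the proposition is understood up to the finite-$K$ Poisson-approximation error quantified by the cited bound.
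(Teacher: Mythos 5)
Your proposal is correct and follows essentially the same route as the paper: viewing the superposition of the other $K-1$ workers' arrival processes as a Poisson process of rate $\nu_p-\nu_\kappa$ (via Lemma~\ref{lemma_sup}) and counting its points over Worker $\kappa$'s computation interval $T_\kappa$, which is exactly the paper's one-line argument. Your explicit derivation of the mean and variance via the tower property and the law of total variance, and the consistency check against Theorem~\ref{th_stepstale}, simply fill in steps the paper leaves implicit.
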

\begin{proof}
From the perspective of Worker $\kappa$, the superposition of the arrival gradients from all other workers can be modeled as a Poisson process with rate $\nu_p - \nu_\kappa$. By this means, we arrive at Eq. (\ref{poisson_approx}). 
\end{proof}

Next, the discrete-time staleness is a mixed Poisson distribution with the rate parameter distributed according to an exponential distribution. Therefore, we have the following corollary.
\begin{corollary}\label{coro_EXP}
Given $T_{\kappa} \sim \text{Exp}\left(\nu\right)$, the gradient staleness in {ASGD} with {an} ideal communication channel follows a geometric distribution, i.e., 
$   \breve \tau_{\kappa} \sim \text{Geo}\left(\frac{1}{K-1}\right),$
in which $K$ is the total number of workers. Further, we have $\mathbb E \{ \breve \tau_k\} = {K-1}$ and $\text{Var}\{ \breve \tau_k\} = K(K-1)$.
\end{corollary}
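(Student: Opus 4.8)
The plan is to specialize Proposition \ref{prop_mixPois} to the memoryless case and then evaluate the resulting mixed Poisson distribution. First I would invoke Proposition \ref{prop_mixPois}, which states that the discrete-time gradient staleness of Client $\kappa$ in pure ASGD follows $\breve\tau_\kappa \sim \text{Pois}((\nu_p - \nu_\kappa)T_\kappa)$, a Poisson distribution whose rate parameter is itself the random variable $(\nu_p - \nu_\kappa)T_\kappa$. Under the homogeneity assumption $T_\kappa \sim \text{Exp}(\nu)$ for every worker, we have $\nu_\kappa = \nu$ for all $\kappa$, hence $\nu_p = K\nu$ and $\nu_p - \nu_\kappa = (K-1)\nu$. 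Therefore the mixing random variable is $(K-1)\nu T_\kappa$, and since $T_\kappa \sim \text{Exp}(\nu)$ implies $\nu T_\kappa \sim \text{Exp}(1)$, the mixing variable is $(K-1)\cdot E$ with $E \sim \text{Exp}(1)$, i.e. an exponential random variable with mean $K-1$, equivalently $\text{Exp}\big(\tfrac{1}{K-1}\big)$.

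The core computation is then the classical fact that a Poisson distribution mixed over an exponentially distributed rate parameter is geometric. Concretely, I would compute $\Pr\{\breve\tau_\kappa = m\} = \int_0^\infty \frac{\mu^m e^{-\mu}}{m!}\, f_\Lambda(\mu)\, \mathrm{d}\mu$ where $\Lambda = (K-1)\nu T_\kappa$ has density $f_\Lambda(\mu) = \frac{1}{K-1} e^{-\mu/(K-1)}$. Evaluating the integral (a Gamma integral, or equivalently recognizing it as the probability-generating-function identity for mixed Poisson) yields $\Pr\{\breve\tau_\kappa = m\} = \frac{1}{K}\left(\frac{K-1}{K}\right)^m$, which is precisely the geometric distribution $\text{Geo}\big(\tfrac{1}{K-1}\big)$ in the mean-$(K-1)$ parametrization used in the paper, i.e. success probability $p = \frac{1}{K}$ in the standard parametrization. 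The moments then follow immediately: $\mathbb{E}\{\breve\tau_\kappa\} = \frac{1-p}{p} = K-1$ and $\text{Var}\{\breve\tau_\kappa\} = \frac{1-p}{p^2} = K(K-1)$. Alternatively, one can obtain the mean and variance directly from the law of total expectation/variance applied to the mixed Poisson form: $\mathbb{E}\{\breve\tau_\kappa\} = \mathbb{E}\{\Lambda\} = K-1$ and $\text{Var}\{\breve\tau_\kappa\} = \mathbb{E}\{\Lambda\} + \text{Var}\{\Lambda\} = (K-1) + (K-1)^2 = K(K-1)$, which also serves as a consistency check against the general formula in Proposition \ref{prop_mixPois} with $\sigma_\kappa^2 = 1/\nu^2$ and $\nu_\kappa = \nu$.

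I do not anticipate a serious obstacle here; the result is essentially a corollary in the literal sense. The only point requiring mild care is bookkeeping the parametrization of the geometric distribution: the paper writes $\text{Geo}\big(\tfrac{1}{K-1}\big)$, which must be read as the ``number of failures'' geometric with mean equal to $K-1$ (so the displayed mean and variance are internally consistent), rather than the ``number of trials'' convention. I would state this parametrization explicitly to avoid ambiguity. A secondary, more conceptual subtlety is the soundness of treating the aggregate of the other $K-1$ renewal processes as an exact Poisson process of rate $\nu_p - \nu_\kappa$ — but this is exactly what Proposition \ref{prop_mixPois} (and, upstream, Lemma \ref{lemma_sup} together with Assumption \ref{add_ass3}) already licenses, so for the purposes of this corollary I would simply cite Proposition \ref{prop_mixPois} and carry out the exponential-mixing computation above.
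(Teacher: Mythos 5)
Your proposal is correct and follows essentially the same route the paper intends: specializing Proposition \ref{prop_mixPois} to i.i.d.\ exponential computation times, recognizing the resulting exponential-mixed Poisson as a geometric distribution, and reading off the mean $K-1$ and variance $K(K-1)$ (consistent with the general moment formulas in Proposition \ref{prop_mixPois} with $\sigma_\kappa^2 = 1/\nu^2$). Your explicit note on the ``number of failures'' parametrization of $\text{Geo}\left(\frac{1}{K-1}\right)$ is a sensible clarification but does not change the argument.
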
 

This corollary aligns with Theorem \ref{th_stepstale} when $B = 1$. Further, the impact of the increase in the number of clients on the convergence of SGD is twofold. On the one hand, there is a higher volume of gradient data arriving within a given time duration. On the other hand, the increase in the number of clients can lead to a greater gradient staleness.

\subsection{Optimal Worker Numbers in {ASGD} Through an Ideal Data Link or Network}
In this subsection, our objective is to  determine the optimal choice of the number of workers in {ASGD} along the eigenvector with an eigenvalue $v > 0$, assuming an ideal data link. Here, an ideal data link refers to one that is error-free and zero-delay. In pure {ASGD}, the expectation of the step staleness is $\mathbb E \{\breve \tau \}= K-1$.  Based on this fact, we present the following proposition.

\begin{proposition}\label{prop_opt_number_ideal}
When $K \gg 1$, for pure {ASGD} through an ideal channel with a fixed learning rate, the optimal number of workers, defined as the number of workers under with the dominant root of SGD dynamics achieves its minimum, denoted by  $K^*$, is given by 
 \begin{equation}
     K^* = 1 + \frac{1}{v\eta e}.
 \end{equation}
\end{proposition}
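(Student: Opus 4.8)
The plan is to combine Corollary~\ref{two_roots} and Corollary~\ref{coro_EXP} to express the dominant characteristic root of the ASGD dynamics purely as a function of $K$, and then minimize its magnitude over $K$. First I would recall that in pure ASGD the step staleness satisfies $\mathbb E\{\breve\tau\} = K-1$, so for the continuous-time approximation $\breve x[j+1] = \breve x[j] - \eta v\,\breve x[j - \breve\tau]$ Corollary~\ref{two_roots} gives characteristic roots $\lambda_k = W_k(-v\eta\breve\tau)/(\eta\breve\tau)$. Substituting the mean staleness $\breve\tau \approx K-1$, the dominant (slowest-decaying) root in the non-oscillatory regime is $\lambda_0(K) = W_0\!\left(-v\eta(K-1)\right)/\big(\eta(K-1)\big)$, which is real and negative when $v\eta(K-1) < 1/e$; the convergence rate is governed by $|\lambda_0(K)|$, so we want to \emph{minimize} $\lambda_0(K)$ (make it as negative as possible).

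Next I would treat $\xi := \eta v (K-1) > 0$ as the single relevant variable and write $|\lambda_0| = -\lambda_0 = \dfrac{-W_0(-\xi)\,v}{\xi}$, i.e. up to the positive constant $v$ we are minimizing the function $g(\xi) = W_0(-\xi)/\xi$ over $\xi \in (0, 1/e)$ where $W_0(-\xi) \in (-1, 0)$. Differentiating and using the identity $W_0'(z) = \dfrac{W_0(z)}{z(1 + W_0(z))}$, one finds $g'(\xi) = \dfrac{W_0(-\xi)\big(\text{something}\big)}{\xi^2}$ whose sign is controlled by the factor involving $1 + W_0(-\xi)$; setting $g'(\xi) = 0$ yields the stationarity condition $W_0(-\xi) = -1$, equivalently $\xi = 1/e$. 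This identifies the optimal product as $\eta v (K^*-1) = 1/e$, i.e. $K^* = 1 + \dfrac{1}{v\eta e}$, which is exactly the claimed expression; the $K \gg 1$ hypothesis is what justifies both the Poisson/mean-staleness approximation $\breve\tau \approx K-1$ and treating $K$ as a continuous variable.

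The main obstacle, and the point I would be most careful about, is the boundary behaviour of the optimization: the minimizer $\xi = 1/e$ sits exactly at the edge of the non-oscillatory regime (recall from Eq.~(\ref{small}) and the surrounding discussion that for $v\eta\breve\tau \in (1/e, \pi/2)$ the roots become complex with still-negative real part, and divergence occurs only for $v\eta\breve\tau > \pi/2$). So I would verify that $|\mathrm{Re}\,\lambda_0|$ (not just $|\lambda_0|$ on the real branch) is maximized at $v\eta\breve\tau = 1/e$ by checking that on the complex branch $\mathrm{Re}\,\lambda_0$ increases back toward $0$ as $\xi$ grows past $1/e$, so $\xi = 1/e$ is a genuine global maximizer of $|\mathrm{Re}\,\lambda_0|$ over the stable range. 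A secondary technical point is replacing the random staleness $\breve\tau$ (geometric with mean $K-1$ by Corollary~\ref{coro_EXP}) by its mean in the characteristic equation; I would invoke the earlier remark that for large $K$ the effect of staleness variability is of lower order, so the dominant-root analysis with $\breve\tau \approx K-1$ is accurate to the order claimed. Assembling: set $\eta v(K^*-1) = 1/e$ and solve for $K^*$.
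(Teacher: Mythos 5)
Your conclusion and most of the machinery coincide with the paper's: both use $\mathbb E\{\breve\tau\}=K-1$ for pure ASGD, both read the dominant root off Corollary \ref{two_roots} as $\lambda_0=W_0(-(K-1)v\eta)/((K-1)\eta)$, and both land on $(K^*-1)\eta v=1/e$. The genuine difference is the objective being minimized. You minimize $\lambda_0$ itself, i.e.\ the per-iteration (SDDE-time) decay rate, and invoke $K\gg 1$ only for the mean-staleness/continuous-$K$ approximation. The paper instead minimizes the \emph{wall-clock} dominant pole, which carries the aggregated gradient-arrival rate as an extra factor: $\mathbb E\{\hat x(t)\}\approx A_0 e^{\lambda_0 K\nu t}+A_1 e^{\lambda_1 K\nu t}$, so the paper's objective is $K\,\mathrm{Re}\{W_0(-(K-1)v\eta)\}/((K-1)\eta)$, and the hypothesis $K\gg1$ is used precisely to write $K/(K-1)=1+\mathcal O(1/K)$ and reduce the problem to minimizing $\mathrm{Re}\{W_0(-\xi)\}$, whose minimum $-1$ is attained at $\xi=1/e$. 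With a fixed learning rate the two objectives happen to share the same minimizer (since $\mathrm{Re}\{W_0\}\ge -1$ on the principal branch, $\mathrm{Re}\{W_0(-\xi)\}/\xi\ge -1/\xi>-e$ for $\xi>1/e$, while on the real branch the quotient decreases to $-e$ at $\xi=1/e$), so your route is sound and even avoids the $K/(K-1)$ step; but it silently drops the ``more workers means faster gradient arrivals'' factor that is the conceptual point of the trade-off the proposition formalizes, and omitting it would change the answer in nearby settings, e.g.\ the worker-number-aware step size discussed right after the proposition.

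One technical slip: using $W_0'(z)=W_0(z)/\big(z(1+W_0(z))\big)$ one finds $g'(\xi)=-W_0(-\xi)^2/\big(\xi^2(1+W_0(-\xi))\big)$, which is strictly negative on $(0,1/e)$; hence $W_0(-\xi)=-1$ is \emph{not} a stationary point of $g$ — it is the branch point where the real branch terminates and $g'$ blows up. The optimum is therefore a boundary/branch-point minimum, not an interior critical point, and the argument must be closed exactly as in your last paragraph: past $\xi=1/e$ the roots become complex with $\mathrm{Re}\{W_0(-\xi)\}>-1$, so $\mathrm{Re}\{\lambda_0\}$ moves back toward zero, making $\xi=1/e$ the global minimizer of $\mathrm{Re}\{\lambda_0\}$ over the stable range. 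With that repair, your derivation of $K^*=1+\frac{1}{v\eta e}$ is correct.
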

\begin{proof}
See Appendix \ref{app:add}.
\end{proof}

\begin{remark}
In this discussion, we aim to analyze the effect of the number of workers on the convergence rate of pure {ASGD} when using a worker number-aware step size. The expected value of the discrete-time gradient staleness is given by $\mathbb E \{\breve \tau\} = K - 1$, in which $K$ {denotes} the number of workers. In this case, { the worker number-aware step size is given by $\eta = \eta_0/K$,} where the constant $\eta_0 > 0 $. The dominant characteristic root is $\lambda_0 = W_0(-\eta_0 v)/\eta_0$. Further, the update frequency is $\nu = {K\nu_0}$, where $\nu_0$ is the update frequency of one single worker. Consequently, by the dominant pole approximation, the SGD dynamics evolve as $\mathcal O(e^{ {W_0(-\eta_0 v)\nu_0t}})$ by the dominant pole approximation. {This analysis indicates that increasing $K$ does not effectively improve performance in {ASGD} when using a worker number-aware step size.}

Also, given the learning rate $\eta$ and the total number of workers $K$, it can be seen that when $K \ge  1+ \frac{1}{v\eta e}$ the performance of pure {ASGD} deteriorates due to the large step staleness. In this case, one could improve the performance by setting an appropriate $B$. A sensible choice of $B$ can be $B^* = K - \frac{1}{v\eta e}$ by Theorem \ref{th_stepstale}.
\end{remark}

\subsection{Optimal Worker Number in Pure {ASGD} Through a Bandwidth Constrained Channel Shared by Multiple Workers}\label{opt_num_afl}

In this subsection, we aim to determine the optimal number of workers in pure {ASGD} operating through a shared medium with limited bandwidth. In this case, the continuous-time gradient staleness is characterized by the sum of the computational delay and the communication delay.  In this scenario, an increase in $K$ not only results in more frequent updates but also leads to a rise in the overall gradient delay due to increased communication delay.

Specifically, we assume a constant service time for the communication channel, given that the arriving data packets are approximately uniform in size. Still, it  holds that $\mathbb E \{T_\kappa\} = \frac{1}{\nu}$, $\text{Var}\{T_\kappa\} = \sigma^2_\kappa$. The service rate of the communication channel is denoted by $\mu_{\text{Q}}$. In this way, the service time for each data packet is a deterministic time $T_{\text{Q}} = \frac{1}{\mu_Q}$ seconds. As per Lemma \ref{lemma_sup}, the arrival process of gradients to the communication channel can be approximated as a Poisson process with rate $\nu_p =K\nu $. Therefore, the communication channel can be modeled as a $M/D/1$ queue.
Note that $\nu_p$ is the rate of $N_p(t)$. In this way, the utility of the $M/D/1$ queue can be denoted by $\rho  = \frac{\nu_p}{\mu_Q}$ in which $\rho < 1$. Therefore, the average waiting time of the gradients in the queue is given by $W = \frac{1}{\mu_Q} \frac{\rho}{2(1-\rho)}.$

Further, the average communication delay, which is given by the sum of the average waiting time in the queue plus the average service time, is $\mathbb E\{D\} =  \frac{1}{\mu_Q} \left[\frac{\rho}{2(1-\rho)} + 1\right]$. In this way, the expectation of the continuous-time gradient staleness is given by the sum of the expectation of the computation and communication delay. More specifically, we have 
$\mathbb E\{\hat \tau\}   = \frac1{\nu} + \frac{1}{\mu_Q} \left[\frac{\rho}{2(1-\rho)} + 1\right].$
With limited bandwidth, the client can choose to start the computation of a new gradient right away after it finishes the computation of the previous one. It can also start the computation of the new gradient after it receives an ACK (acknowledgment) that the global model has been updated according to the gradient, which reduces the gradient staleness but also reduces the update frequency. In the latter situation, it still holds that $\mathbb E\{\breve \tau\}= K-1$. In the subsequent, we focus on the former situation and we have the following proposition.
\begin{proposition}
With $K$ workers in total, the expectation of step staleness in {ASGD} through a channel with a fixed service rate is given by\footnote{The arrival process of the gradients to the central server is the departure process of the $M/D/1$ queue. Rigorously, the departure process of the $M/D/1$ is not a Poisson process. But the intensity of the departure process equals the intensity of the arrival process to the queue as long as $\mu_Q > \nu$. }
\begin{equation}\label{aver_delayMD1}
    \mathbb E \{\breve \tau_\text{Q}(K)\} = (K -1)\left[1+ \frac{\nu}{2\mu_Q} \left(\frac{K\nu}{\mu_Q- K\nu} + 2\right)\right],
\end{equation}
in which $\mu_Q$ is the service rate of the communication channel. 
\end{proposition}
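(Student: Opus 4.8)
The plan is to transplant the mixed-Poisson characterization of step staleness from Proposition~\ref{prop_mixPois} to the bandwidth-limited setting, replacing the bare computation time by the full gradient delay, and then to substitute the $M/D/1$ sojourn-time formula already derived in this subsection. First I would fix an arbitrary Worker~$\kappa$ and argue, exactly as in the proof of Proposition~\ref{prop_mixPois} together with Lemma~\ref{lemma_sup}, that the gradients reaching the central server from the other $K-1$ workers form a stream of intensity $\nu_p-\nu_\kappa$; the only new ingredient is that this stream is now the \emph{departure} process of the shared $M/D/1$ queue rather than a raw superposition of renewal processes. Since pure ASGD has $B=1$, each such departure triggers exactly one global update, so the step staleness $\breve\tau_\kappa$ of a Worker-$\kappa$ gradient equals the number of those departures occurring in the interval between when the worker fetched the model and when its own gradient is delivered; that interval has length $\hat\tau_\kappa=(\text{computation delay})+(\text{communication delay})$. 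Conditioning on $\hat\tau_\kappa$ yields $\breve\tau_\kappa\sim\mathrm{Pois}\bigl((\nu_p-\nu_\kappa)\hat\tau_\kappa\bigr)$, and the tower rule gives $\mathbb E\{\breve\tau_\kappa\}=(\nu_p-\nu_\kappa)\,\mathbb E\{\hat\tau_\kappa\}$.

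Next I would evaluate the two factors. Under $\nu_\kappa\equiv\nu$ we have $\nu_p=K\nu$, hence $\nu_p-\nu_\kappa=(K-1)\nu$. For $\mathbb E\{\hat\tau_\kappa\}$ I would use the $M/D/1$ computation already in the text: the mean computation delay is $1/\nu$, and a typical (Poisson, by PASTA) arrival to the channel sees mean sojourn time $\mathbb E\{D\}=\frac{1}{\mu_Q}\bigl[\frac{\rho}{2(1-\rho)}+1\bigr]$ with $\rho=K\nu/\mu_Q<1$, so $\mathbb E\{\hat\tau_\kappa\}=\frac1\nu+\frac{1}{\mu_Q}\bigl[\frac{\rho}{2(1-\rho)}+1\bigr]$. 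Multiplying by $(K-1)\nu$, substituting $\rho=K\nu/\mu_Q$ and using the identity $\frac{\rho}{2(1-\rho)}=\frac{K\nu}{2(\mu_Q-K\nu)}$, the expression collapses to $\mathbb E\{\breve\tau_Q(K)\}=(K-1)\bigl[1+\frac{\nu}{2\mu_Q}\bigl(\frac{K\nu}{\mu_Q-K\nu}+2\bigr)\bigr]$, which is exactly the claimed formula.

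The step I expect to be the main obstacle is making the Poisson conditioning rigorous: the output of an $M/D/1$ queue is not a Poisson process, so the conditional law $\mathrm{Pois}\bigl((\nu_p-\nu_\kappa)\hat\tau_\kappa\bigr)$ is genuinely an approximation. Because the statement is about a first moment only, I would not attempt a distributional proof; instead I would lean on the two facts that (i) the departure intensity equals the arrival intensity $\nu_p$ whenever the queue is stable ($\mu_Q>K\nu$), as noted in the footnote already present in the text, and (ii) $\hat\tau_\kappa$ and the count of the other workers' intervening departures may be treated as uncorrelated, so that the identity $\mathbb E\{\breve\tau_\kappa\}=(\nu_p-\nu_\kappa)\mathbb E\{\hat\tau_\kappa\}$ survives even without exact Poissonity. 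A secondary point worth stating explicitly is that the proposition refers to the scheme in which a worker starts a fresh computation immediately after \emph{transmitting} its gradient (not after receiving an ACK), since that is precisely what makes $\hat\tau_\kappa$ the computation-plus-communication sum used in the derivation.
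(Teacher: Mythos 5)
Your derivation is correct and matches the paper's (implicit) argument: the paper also obtains the result by combining the mixed-Poisson staleness rate $(\nu_p-\nu_\kappa)=(K-1)\nu$ from Proposition~\ref{prop_mixPois} with the total gradient delay $\mathbb E\{\hat\tau\}=\tfrac1\nu+\tfrac{1}{\mu_Q}\bigl[\tfrac{\rho}{2(1-\rho)}+1\bigr]$ from the $M/D/1$ analysis, and the same caveat about the non-Poisson departure process is relegated to the footnote exactly as you note. One small wording caution: the scheme in question is the one where a worker restarts computation immediately after \emph{finishing the computation} of the previous gradient (so that each worker still injects gradients at rate $\nu$ and the queue arrival rate is $K\nu$), which is what your use of $\nu_p=K\nu$ already assumes, rather than after its transmission completes.
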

Therefore, starting a computation of a new gradient without waiting for an ACK increases the average step staleness with a bandwidth-limited channel. By Eq. (\ref{aver_delayMD1}), we present the following remark.

\begin{remark}
The impacts of $K$ on the gradient staleness $\mathbb E \{\breve \tau_\text{Q}(K)\}$ are twofold. First, the increase of $K$ lead to an increase of the communication delay, as indicated by the term $\frac{\nu}{2\mu_Q} \left(\frac{K\nu}{\mu_Q- K\nu} + 2\right)$ on the right-hand side of Eq. (\ref{aver_delayMD1}). Further, for a specific worker, as $K$ increases, more gradients from other workers arrive within a given period of time, leading to a linear increase with respect to $K$ in staleness, as indicated by the term $K$ on the right-hand side of Eq. (\ref{aver_delayMD1}). 
Note that $\mathbb E \{\breve \tau_\text{Q}(K)\}$ is monotonically increasing with respect to $K$. 
\end{remark}

Next, we have the following proposition on the optimal number of workers in pure {ASGD},
\begin{proposition}\label{prop7}
Given the learning rate $\eta$ and the queue capacity $\check K$, the optimal number of workers in pure {ASGD} can be determined as the smaller solution to the following quadratic equation:
\begin{equation}\label{eq_prop7}
  \left(1 + \frac{1}{2\check K}\right)x^2 - \left(1 + \check K + \frac{1}{e\eta v}\right)x + \frac{\check K}{e\eta v} = 0,
\end{equation}
in which the capacity of the queue is $\check K=\frac{\mu_Q}{\nu}$.
\end{proposition}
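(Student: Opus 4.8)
The plan is to reduce the optimization of the worker number to the staleness ``sweet-spot'' condition already used in the ideal-channel case (Proposition~\ref{prop_opt_number_ideal}), and then to substitute the $M/D/1$ step-staleness formula~(\ref{aver_delayMD1}) into that condition to obtain the quadratic~(\ref{eq_prop7}).

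\textbf{Step 1: identify the optimality condition.} Recall that in pure ASGD $\hat{\pmb x}(t)\approx\pmb x(\eta N_a(t))$ with $N_a(t)\approx K\nu t$ (the departure rate of the stable $M/D/1$ queue equals its arrival rate $K\nu$), and that $\pmb x(s)$ decays like $e^{\lambda_0 s}$, where the dominant per-step characteristic root is $\lambda_0=W_0\!\big(-v\eta\,\mathbb E\{\breve\tau_{\text Q}(K)\}\big)\big/\big(\eta\,\mathbb E\{\breve\tau_{\text Q}(K)\}\big)$ by Corollary~\ref{two_roots}, with $\mathbb E\{\breve\tau_{\text Q}(K)\}$ the effective step staleness. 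Hence the wall-clock decay exponent is proportional to $K\nu\,W_0\!\big(-v\eta\,\mathbb E\{\breve\tau_{\text Q}(K)\}\big)\big/\mathbb E\{\breve\tau_{\text Q}(K)\}$. Exactly as in the proof of Proposition~\ref{prop_opt_number_ideal}, for $K\gg 1$ the slowly-varying prefactor $K\nu/\mathbb E\{\breve\tau_{\text Q}(K)\}$ is dominated by the sharp behaviour of $W_0$ near its branch point: $|\mathrm{Re}\,W_0(-s)|/s$ is maximized at $s=1/e$, which is also the boundary~(\ref{small}) of the monotone-convergence regime. Thus the minimizing $K^\ast$ is characterized, to leading order, by $\mathbb E\{\breve\tau_{\text Q}(K^\ast)\}=1/(e\eta v)$, the direct analogue of $K^\ast-1=1/(e\eta v)$ in the ideal case.

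\textbf{Step 2: substitute the $M/D/1$ staleness and clear denominators.} Using $\check K=\mu_Q/\nu$ in~(\ref{aver_delayMD1}) gives $\nu/(2\mu_Q)=1/(2\check K)$ and $K\nu/(\mu_Q-K\nu)=K/(\check K-K)$, so the condition becomes
\begin{equation*}
(K-1)\left[1+\frac{1}{\check K}+\frac{K}{2\check K(\check K-K)}\right]=\frac{1}{e\eta v}.
\end{equation*}
For $K\gg 1$ I replace $K-1$ by $K$ (an approximation of the same order as the Poisson and Euler--Maclaurin ones already invoked), multiply through by $2\check K(\check K-K)$, and collect powers of $K$; the cubic terms cancel, and dividing by $2\check K$ leaves exactly $\big(1+\tfrac{1}{2\check K}\big)x^2-\big(1+\check K+\tfrac{1}{e\eta v}\big)x+\tfrac{\check K}{e\eta v}=0$ with $x=K^\ast$, which is~(\ref{eq_prop7}).

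\textbf{Step 3: select the smaller root.} By the remark following~(\ref{aver_delayMD1}), $K\mapsto\mathbb E\{\breve\tau_{\text Q}(K)\}$ is continuous and strictly increasing on $(1,\check K)$, from $0$ at $K=1$ to $+\infty$ as $K\to\check K^-$, so the physical solution of $\mathbb E\{\breve\tau_{\text Q}(K)\}=1/(e\eta v)$ is unique and lies in $(1,\check K)$. Writing $P(x)$ for the left-hand side of~(\ref{eq_prop7}), one checks $P(\check K)=-\check K/2<0$ while the leading coefficient $1+1/(2\check K)>0$; hence $\check K$ lies strictly between the two real roots, the larger root exceeds $\check K$ and is spurious (it violates the queue-stability bound $\rho<1$), and the optimal worker number is the smaller root, proving the proposition.

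\textbf{Main obstacle.} The delicate point is Step~1: rigorously one should differentiate the wall-clock exponent $K\nu\,W_0\!\big(-v\eta\,\mathbb E\{\breve\tau_{\text Q}(K)\}\big)/\mathbb E\{\breve\tau_{\text Q}(K)\}$ in $K$ and show its maximizer converges, as $K\to\infty$, to the solution of $\mathbb E\{\breve\tau_{\text Q}(K)\}=1/(e\eta v)$. This is where the $K\gg 1$ hypothesis is essential: otherwise both the slowly-varying prefactor and the replacement $K-1\approx K$ in Step~2 contribute corrections of the same order as the terms retained. Once the condition is in hand, Steps~2 and~3 are routine --- a collection of powers of $K$ and a single sign check, respectively.
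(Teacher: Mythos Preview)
Your proposal is correct and follows essentially the same route as the paper: both reduce the wall-clock optimization to the branch-point condition $K\eta v\,z(K)=1/e$ (equivalently $\mathbb E\{\breve\tau_{\text Q}(K)\}\approx 1/(e\eta v)$ after $K-1\approx K$), then substitute~(\ref{aver_delayMD1}) and clear denominators to obtain~(\ref{eq_prop7}); the paper reaches that condition by differentiating $W_0(-K\eta v z(K))/z(K)$ and observing the stationarity equation $\frac{Kz'}{z+Kz'}=\frac{1}{1+W_0}$ forces $W_0\to -1$, which is the rigorous version of your ``sharp $W_0$ versus slowly-varying prefactor'' heuristic. Your Step~3 (checking $P(\check K)=-\check K/2<0$ so that the larger root violates queue stability) is a welcome addition that the paper's proof omits.
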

\begin{proof}
See Appendix B.
\end{proof}

In practice, the workers can reduce the {computation time, i.e., the amount of time a worker takes to compute the gradient,} by reducing the size of the minibatch, thus increasing the frequency of gradient updates. However, this approach can lead to significant bandwidth consumption and network congestion, consequently degrading algorithm performance. Moreover, a high-resolution quantization of the gradients with large gradient noise also leads to substantial bandwidth usage, thus being inefficient due to the waste of network bandwidth.

\subsection{Optimal Design for Event-Triggered Distributed Optimization}
In this subsection, we present a discussion on distributed learning with event-triggered communication, which is presented in the literature to effectively save the bandwidth \cite{gupta}.

Given the state variable at $t_0$, denoted by $\pmb x_0$, the determination of the triggering time can be formulated as the first exit time of a continuous-time stochastic process (\ref{con_adl}), i.e., $H_E = \inf\{t\ge t_0| \pmb x(t)\in  E(\pmb x_0),  \pmb x(t_0) = \pmb x_0 \},$
in which $ E(\pmb x) = \{\tilde{ \pmb x}| \left\|\nabla f(\tilde {\pmb x}) -  \nabla f(\pmb x)\right\| \ge \xi\}$. Following the discussion in Subsection \ref{sgd_grad_noise}, we assume that the triggering time   is a Gaussian random variable, i.e., $H_E \sim\mathcal N(\vartheta, \varsigma^2)$. In this case, gradient staleness also arises in event-triggered SGD.
Given $H_E = \zeta$, the conditional gradient staleness approximately follows a uniform distribution $U(0, \zeta)$. Therefore, the gradient staleness follows a compound distribution, i.e., $\tau \sim U(0, H_E)$. Further, it is assumed that $\vartheta \gg \varsigma$. By the law of total expectation, it holds that $\mathbb E \{\tau\} =   \frac{\vartheta}{2}$.
Further, the p.d.f. of $ \tau$ is given by $ p_{\tau}(s) = \int_s^{\infty} \frac{1}{\sqrt{2\pi}\varsigma s}e^{-\frac{1}{2}\left(\frac{s- \vartheta}{\varsigma}\right)^2} {\rm{d}}s\text{, for } s\ge 0$. In this case, approximately uniformly distributed gradient staleness arises distributed SGD in with ETC when $\vartheta \gg \varsigma$.  
Next, for the solvable case, i.e., with the quadratic objective function $f(\pmb x) = \frac 12 \pmb x^T \pmb V \pmb x$, the expectation of the triggering-time in event-triggered SGD is given by $\mathbb E\{ H_E\} = \xi\sqrt{\frac{1}{\pmb x_0^T \pmb V^4 \pmb x_0}}$ by assuming constant gradient $\pmb V\pmb x_0$ when $\xi\ll \|\pmb x_0\|_2$. Note that different from the previous discussion, SGD along different eigenvector directions can not be viewed as independent in event-triggered SGD.

In distributed learning with event-triggered communication, the step staleness is $\breve \tau = \hat \tau/T_s$ in which $T_s$ is the timeslot duration in event-triggered SGD. In addition, we have $\tau = {\eta \hat \tau}/{T_s}$. Next, we focus on the optimal number of workers in event-triggered SGD given the total bandwidth constraint, in which we limit our discussion to the one-dimensional solvable objective function in which $v = 1$.
 
\begin{remark}
Let $C$ be the total bandwidth allocated to the uplink channel in event-triggered SGD. More specifically, the workers can update up to $C$ gradients to the parameter server per second on average. We assume that the triggering threshold is carefully chosen such that the communication channel is fully utilized and the workers are assumed to be homogeneous. Recall that the expectation of the triggering-time is denoted by $\mathbb E \{H_E\} = \vartheta$. In this way, the gradient staleness in SDDE is approximately uniformly distributed, i.e., $\tau \sim U(0, \vartheta)$, in which $\vartheta = \frac{K\eta}{CT_s}$. Note that $\eta$ is the fixed learning rate and $T_s$ is the time slot length in distributed learning with event-triggered communication. Similar to the discussion in Subsection \ref{opt_num_afl}, the optimal number of workers is approximately the number of workers at which the gradient staleness minimizes the real part of the dominant root. In this way, we conclude that the optimal number of workers is given by $ K^* = \frac{C\varpi^2 T_s}{\eta(e^{-\varpi }-1)}$ 
given a total bandwidth constraint in distributed learning with event-triggered communication. Given a total bandwidth limit, too many workers can induce a large degree of gradient staleness and the performance of event-triggered SGD can therefore deteriorate.
\end{remark}

\begin{figure*}[t]
    \centering
    \subfigure[The roots of the characteristic functions.]{\includegraphics[width=3.4in]{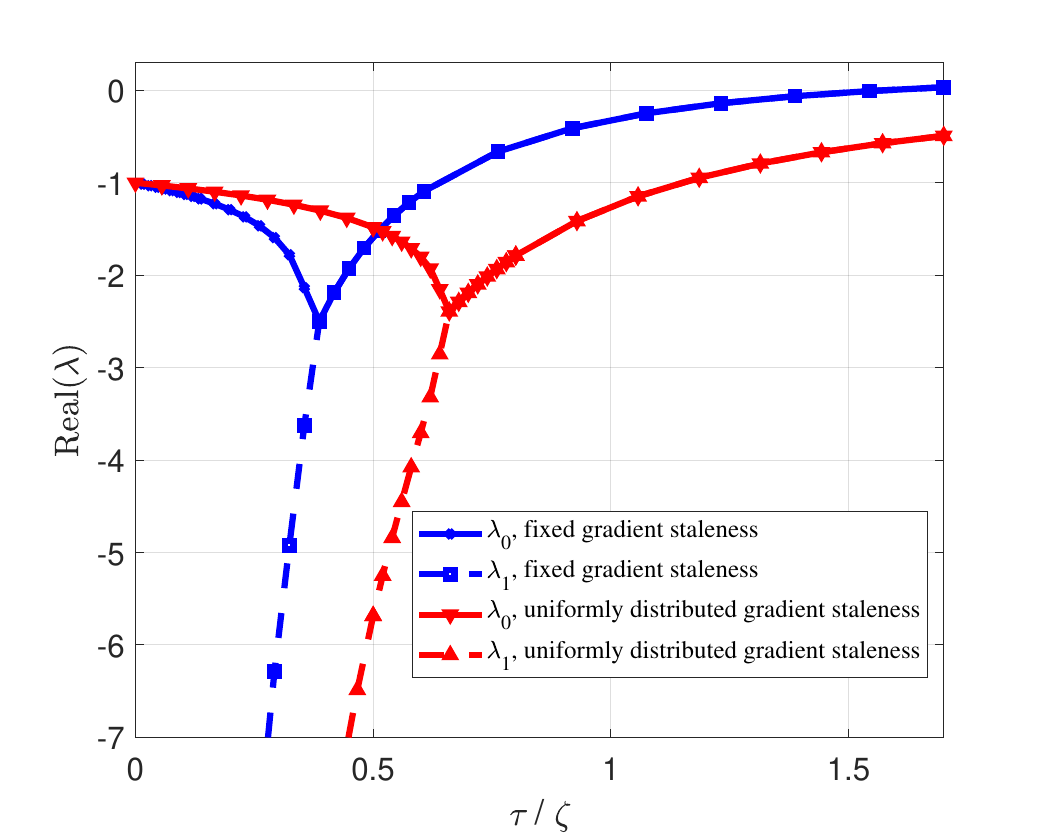}\label{fig_root} }
    \hfill
    \subfigure[The relationship between the convergence rate and the gradient staleness for the solvable cases.]{\includegraphics[width=3.4in]{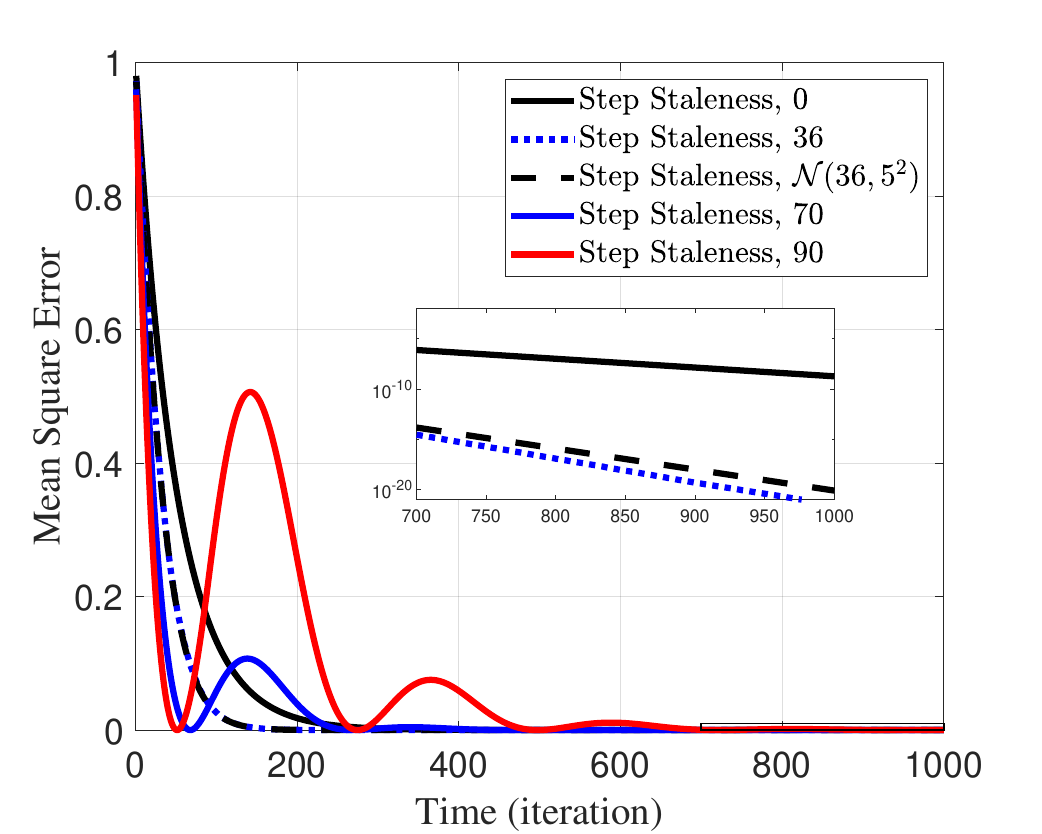}\label{fig_add}}
    \caption{The impact of $\tau$ on the convergence of the SGD algorithm.}
\end{figure*} 

\section{Numerical Results}

In this section, we shall present simulation and numerical results to verify the  theoretical analysis for distributed SGD with {stale gradients}. We will further demonstrate the effect of gradient noise, step size, and gradient staleness on {ASGD}.

To illustrate the relationship between the roots of the characteristic function and the gradient staleness,  Fig. \ref{fig_root} presents numerical results for the characteristic roots of Eq. (\ref{char_eq}) and Eq. (\ref{chara_function}). In particular, we consider a fixed gradient staleness, denoted by $\tau$. or a uniformly distributed gradient staleness, denoted by $\tau\sim \text{Uni}(0, \zeta)$, respectively. In both cases, the real part of the dominant root first decreases, then increases as $\tau$ or $\xi$ increases. With a fixed gradient staleness, the real part of the root achieves its minimum, i.e., $\text{Re}\{\lambda_0\} = -e$ when $\tau = 1/e$, while for uniformly distributed gradient staleness, the real part of the root achieves its minimum when $\zeta = 0.6476$.   The above results demonstrate that an SGD algorithm achieves performance gain with an appropriate gradient staleness. This may be counterintuitive that the gradient staleness does not always impede the convergence of the SGD method. However, the SGD process oscillates when $\tau> 1/e$ or $\zeta$, and diverges when $\tau > \frac{\pi}{2}$ or $\zeta > \frac{\pi^2}{2}$. This illustrates that the convergence property considerably deteriorates with a large gradient staleness.

To validate the aforementioned discussion on the roots of the characteristic function, Fig. \ref{fig_add} presents the effect of gradient staleness on the convergence rate of SGD with gradient staleness. To avoid the impact of gradient noise on SGD, we set $\sigma = 0$. Also, the learning rate is $\eta = 0.01$, and we adopt the same objective function as in Fig. \ref{fig_root}. We observe that, as the discrete-time gradient staleness increases from 0 to 36, an increased degree of gradient staleness leads to faster convergence. As the gradient staleness further increases, i.e., when $\eta \breve \tau> {1}/{e}$, SGD oscillates, which leads to {slower convergence}. Further, it can be seen that when $\breve \tau \sim N(36, 5^2)$, SGD nearly has the same {convergence rate} as when the discrete-time gradient staleness is a constant given by $\mathbb E\{\breve \tau\}$.

To reveal the optimal choice of the learning rate in multi-dimensional optimization, Fig. \ref{fig_lr} is presented. In particular, $f(\pmb x) = \frac{1}{2}\pmb x^T\pmb V\pmb x$ in which $\pmb V = \text{diag}(v,1)$. Notably, $\left\|\pmb V \right\|_2 = \max\{v, 1\}$. Given the discrete-time gradient staleness and $v$,  we numerically find the optimal (fixed) step size, i.e., the step-size that leads to the minimum loss after $1\times10^5$ iterations, by exhaustive search. For example, {we observe} that $\eta^* = \frac{1}{e\breve \tau}$ when $v =1$, which agrees with Eq. (\ref{eq_bsq_lr}). Notably, we observe that it always holds that $\eta^*\in\left[1/{(e\left\|\pmb V \right\|_2)}, {\pi}/{(2\left\|\pmb V \right\|_2)}\right)$, which is consistent with Eq. (\ref{necc}) in Subsection \ref{sub:relationship_between_gradient_staleness_and_the_convergence_rate}. This is due to the fact that, when $\eta> {\pi}/{(2\left\|\pmb V \right\|_2)}$, in the eigenvector direction which the largest eigenvalue, SGD diverges. {Additionally,} SGD converges slower along each eigenvector direction by further reducing the learning rate when  $\eta< {1}/{(e\left\|\pmb V \right\|_2)}$.

\begin{figure*}[t]
    \centering
    \subfigure[Number of iterations required for synchronous SGD to converge.]{\includegraphics[width=3.4in]{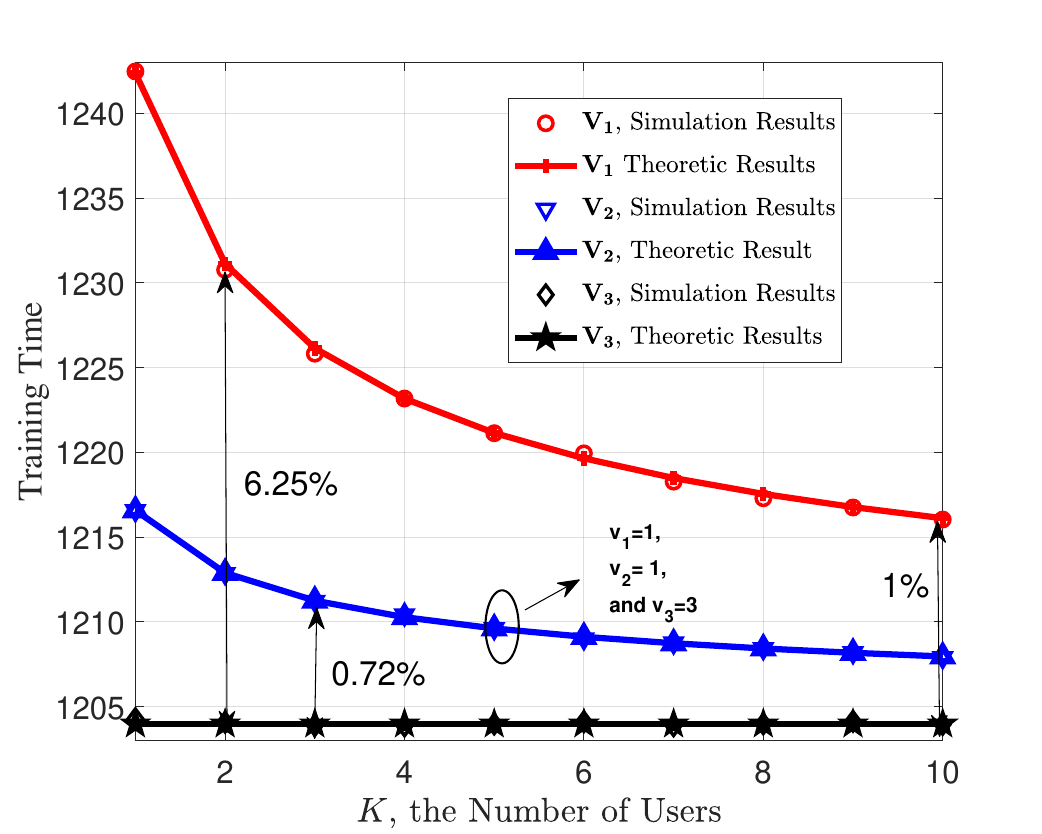}\label{fig1} } 
    \hfill
    \subfigure[The optimal learning rate for {ASGD}.]{\includegraphics[width=3.4in]{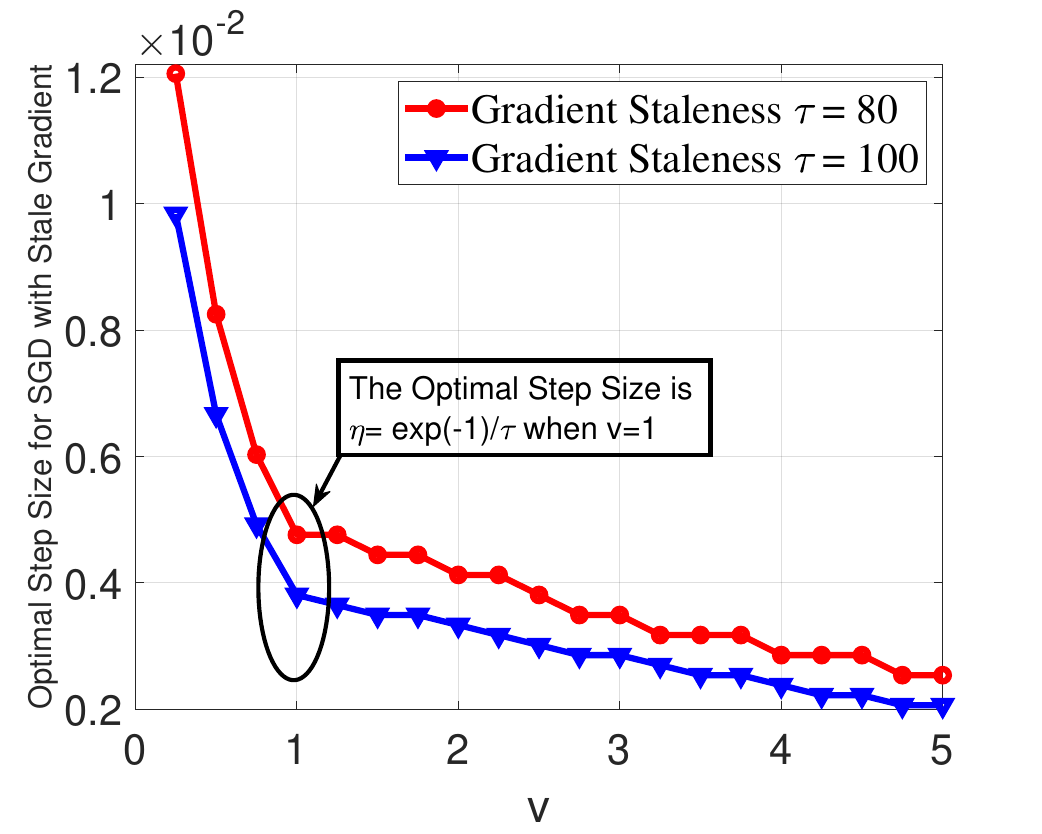}\label{fig_lr}}
    \caption{The impact of the gradient noise and learning rate on the convergence of the gradient descent algorithms.}
\end{figure*} 
In the subsequent, we verify our differential equation-based performance analysis on the effect of the gradient noise in Fig. \ref{fig1}.  To avoid the impact of gradient staleness on the convergence of SGD, we study $K$-synchronized SGD. The objective functions are $f_i(\pmb x) = \frac{1}{2}\pmb x^T \pmb V_i \pmb x$, in which $\pmb V_1 = \text{diag}(1,1,1)$, $\pmb V_2 = \text{diag}(1,1,3)$, and $\pmb V_3 = \text{diag}(1,2,3)$. Also, the variance of the additive Gaussian noise is inversely proportional to the square root of $K$, in which $K$ is the number of workers.  It is seen that in quadratic optimization, the number of iterations for an SGD algorithm is mainly determined by the smallest eigenvalue. Further, it can be observed that the average iterations for the SGD algorithm to meet the stopping criterion stays nearly unchanged despite {the increase} in the number of workers with the objective function $f_3(\pmb x)$. This is due to the fact that the average first hitting time of a one-dimensional OU-process remains unchanged with respect to the variance of the additive Gaussian noise as suggested by Lemma \ref{lemma_averagespeed}. Further, it is seen that the training time slightly decreases with the increase of $K$ with $\pmb V_1$, which fits well with Eq. (\ref{noise_effect}). This is due to the fact {that} the decrease in the noise variance leads to a decrease in the variance of the first hitting times along each eigenvector. In addition, we observe that, the differential-equation based approach presents an accurate performance analysis with a quadratic function. Also, it should be noted that the increase of $K$ reduces the variance of the training time.

 Next, we aim to verify the presented run-time and step staleness analysis for $B$-ASGD. Fig. \ref{fig_stale} presents the expectation of the step staleness versus $B$ in $B$-ASGD. In particular, we have $K=400$, in which $T\sim \mathcal N (10, 3^2)$ for $B$-ASGD, $T \sim  U(0,20)$  for $B$-ASGD with a continuous-time gradient staleness threshold $\hat \tau_{th} = 15$, and $T\sim \text{Exp}(1/10)$ for $B$-ASGD with a discrete-time gradient-staleness $\breve \tau =100$. It is observed that the numerical results fit well with the theoretical results, which verifies our discussions in Subsection \ref{sub_toadd}. Further, Fig. \ref{fig_time} presents the update interval of $B$-ASGD in which $K=400$, $B = 10$, and $T\sim \mathcal N (10, 3^2)$. It is seen the empirical results fit well with the theoretical results, which verifies Theorem \ref{up_interval} and the Poisson approximation of the aggregated gradient arrival process, even with non-exponentially distributed computation time.

In the following simulations, we investigate the effect of gradient noise and staleness and verify our performance analysis based on empirical results on the MNIST handwritten digit database. First, we present the effect of the number of workers on asynchronous learning with a fixed learning rate. In particular, the learning rate is $\eta = 0.01$. The computational delay of each worker follows an i.i.d. Gaussian distribution $\mathcal N (10,1^2)$ and the communication delay is assumed to be negligible. {In this case, we employ a convolutional neural network (CNN). More specifically, the  neural network consists of two convolutional layers followed by a fully connected layer. The first conventional layer  takes a single-channel input and applies 16 filters with a kernel size of 5. It has a stride of 1 and uses padding of 2 to keep the output size the same. After the convolution, it applies a ReLU activation function and then performs max pooling with a kernel size of 2. The second layer takes the output from the first layer and applies 32 filters with the same kernel size and parameters. It also uses ReLU activation and max pooling to further refine the features. The output from the second convolutional layer is flattened into a one-dimensional vector and fed into a fully connected layer that outputs 10 values, corresponding to digits 0 to 9. Additionally, the Hessian matrix is evaluated after 10 iterations from the start of the training process and the loss function selected for training is the cross-entropy loss. }

\begin{figure*}[t]
    \centering
    \subfigure[Step staleness in $B$-ASGD.]{\includegraphics[width=3.4in]{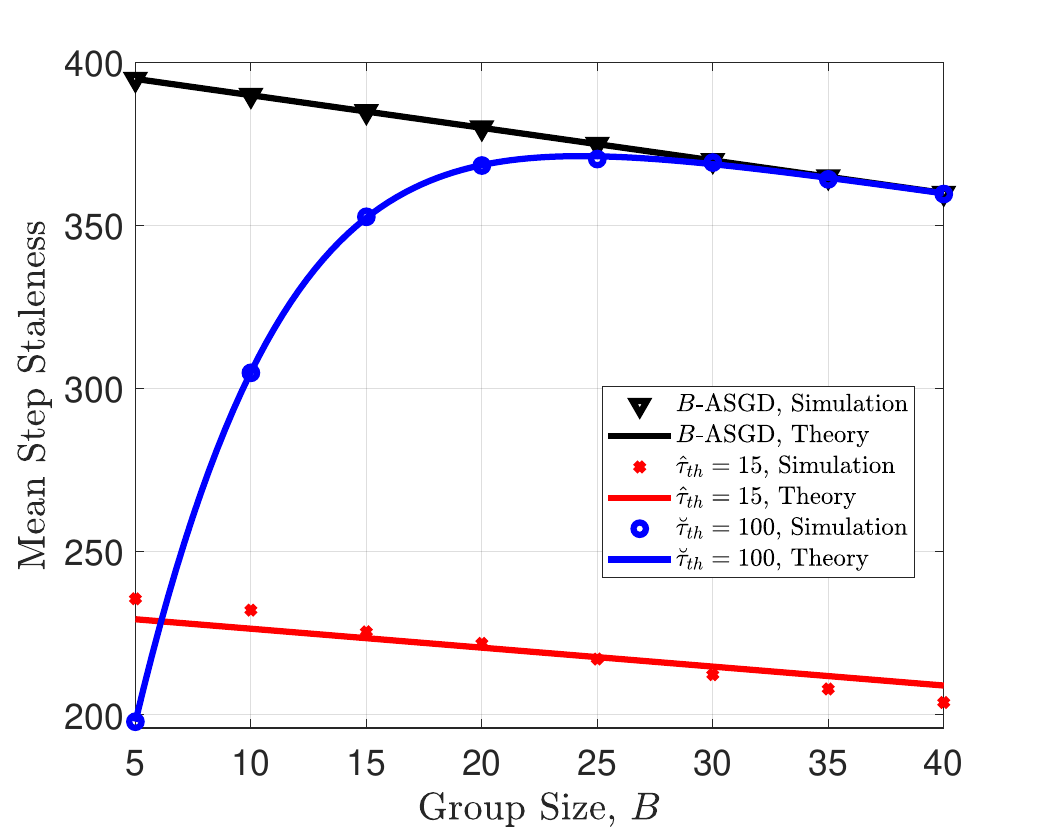}\label{fig_stale} }
    \hfill
    \subfigure[The probability distribution of the update interval in $B$-ASGD.]{\includegraphics[width=3.4in]{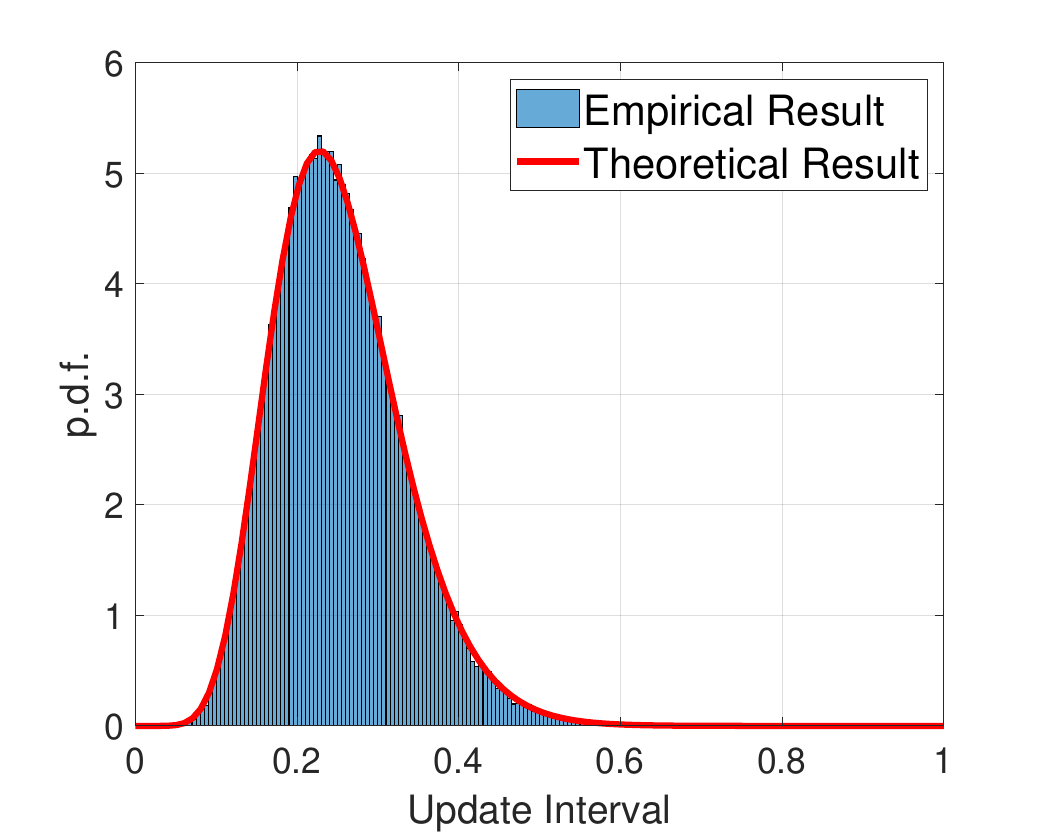}\label{fig_time} }
    \caption{Run-time and step staleness in $B$-ASGD.}
    \label{fig:runtime}
\end{figure*}

Fig. \ref{fig_mnist} presents the performance of asynchronous learning with respect to wall-clock time on the MNIST dataset with a CNN. First, similar to the solvable cases, the performance of asynchronous learning first increases with the increase of $K$ (or step staleness), then decreases. It is seen that when there are fewer workers, for example, $K= 5$, {ASGD} converges slowly.  This is due to the fact that with small $K$, the number of gradients collected by the global model is also limited. Further, when $ K = 20$, the {ASGD} algorithm achieves nearly the best performance. Another observation that can be made from Fig. \ref{fig_mnist} is that when $K \ge 20$, the performance of {ASGD} deteriorates, which diverges when  $K\ge 40$. The reason for this is that  gradient staleness becomes significant as $K$ increases, which leads to the divergence of {ASGD}. Note that the discrete-time gradient staleness is of significance in our analysis, which is defined as the number of global model updates between two consecutive gradient updates for the same worker. This is different from the continuous-time gradient staleness with respect to the wall-clock time. These results demonstrate that, too many workers in asynchronous learning can considerably deteriorate the efficiency of the algorithms.\footnote{In the simulation, we have also presented a rough estimation of the norm-2 of the Hessian matrix $\pmb H$ of the employed CNN, which indicates that $\left\|\pmb H \right\|_2$ falls in the interval $(3, 5)$ during the first several iterations. When $K = 20$, {ASGD} almost achieves the optimal performance. In this case, we have $\mathbb E \{\breve \tau\} \approx 20$. Hence, we have $\eta\mathbb E \{\breve \tau\} \left\|\pmb H \right\|_2 \in (0.6, 1.0)$. Note that, for the solvable cases, a necessary condition for {ASGD} to achieve optimal performance is $\eta\mathbb E \{\breve \tau\} \left\|\pmb H \right\|_2 \in ({1}/{e}, {\pi}/{2})$. Also, {ASGD} diverges when $\mathbb E \{\breve \tau\} \ge 40$, which  agrees with the argument in Eq. (\ref{necc}) for solvable cases in which {ASGD} diverges when $\eta \mathbb E \{\breve \tau\} \left\|\pmb H \right\|_2 > \pi/2$. This suggests the potential that the results for the solvable cases can be extended to general non-convex cases.} 
% E:\MyPython\PycharmProjects\mnistForAiPaper\adaptiveLRPlot\plotTrainTraceDiffLR.py 
\begin{figure*}[t]
    \centering
    \subfigure[Time-performance of {ASGD} versus the number of clients with a fixed learning rate.]{\includegraphics[width=3.4in]{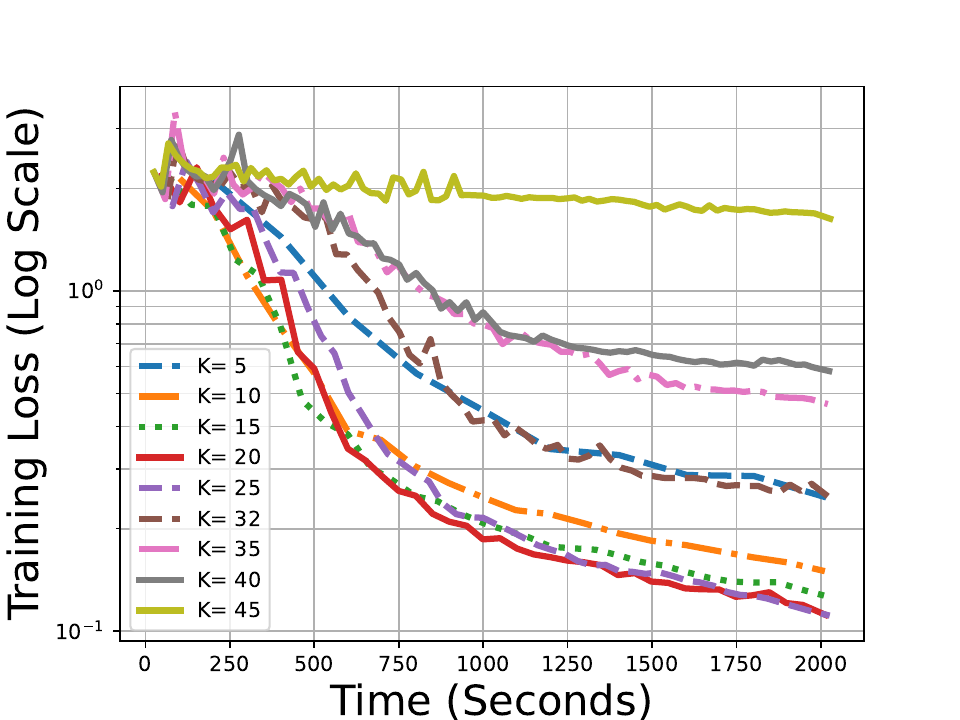}\label{fig_mnist} }
    \hfill
    \subfigure[Time-performance of {ASGD} versus the number of clients with a worker number-aware learning rate.]{\includegraphics[width=3.4in]{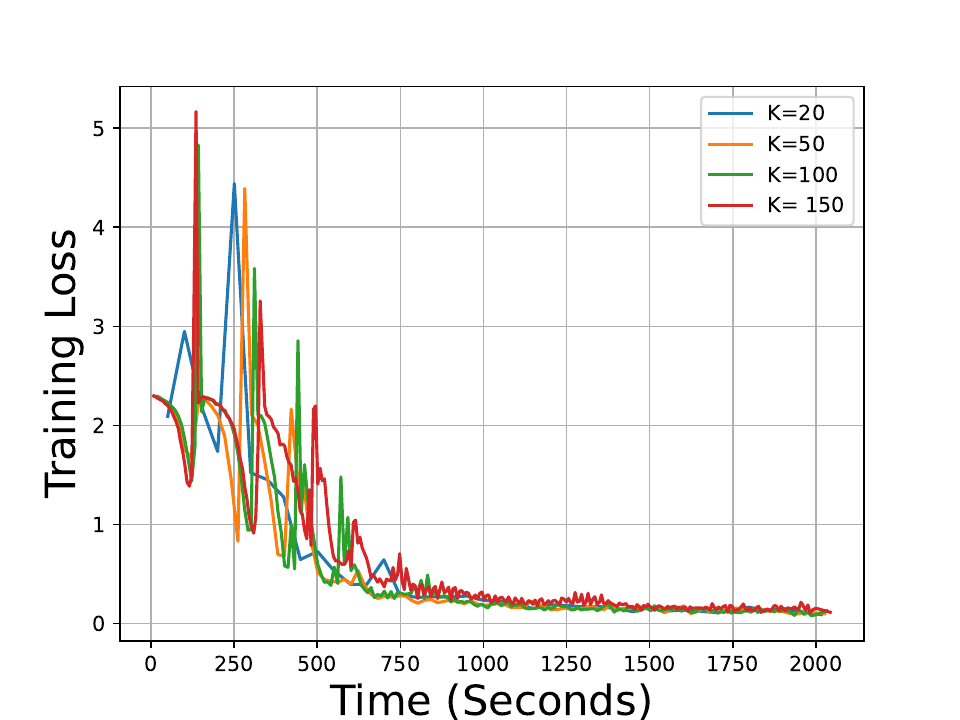}\label{fig_user_aware} }
    \caption{{ASGD} with gradient staleness on the MNIST dataset.}
    \label{fig:realSim}
\end{figure*}

With a fixed step-size, the performance of {ASGD} is shown to deteriorate with a large number of workers, or equivalently, with a large degree of gradient staleness. Next, we aim to evaluate the performance of {ASGD} with a worker number-aware step-size. In particular, we adopt the same setup as the experiment presented in Fig. \ref{fig_mnist}.  Fig. \ref{fig_user_aware} presents the performance of {ASGD} with worker number-aware step size. In particular, the step size is $\eta(K) = \frac{\eta_0}{K}$, wherein $K$ is the number of workers and $\eta_0 = 0.2$ is a constant. In this case, it is seen that the {ASGD} algorithm converges with a worker number-aware step size. However, a careful reader may also notice that the performance of the SGD algorithm does not significantly improve with the increase of $K$ in terms of wall-clock time. This is due to the fact that the positive effect of the increased number of gradients as $K$ increases is offset by the step size inversely proportional to the number of workers.  While asynchronous learning can outperform synchronous learning with respect to the wall-clock time with a relatively small $K$. This observation aligns with the literature, such as \cite{slow_and_stale}.

% The reason is that, if the learning rate is decreased with the growth of $K$, {ASGD} may diverge. However, after decreasing the learning rate, the performance improvement might not be very significant due to the small learning rate. 

\section{Conclusion}
In this work, we have presented performance analysis and protocol design criteria for asynchronous SGD with gradient staleness through an SDDE-based approach. To deal with non-exponentially distributed computation time, a Poisson approximation for the superposition of independent renewal processes has been adopted, based on which we have presented the run-time and staleness analysis of {ASGD}. In this way, we have bridged the behavior of the {ASGD} algorithm with gradient staleness and the solution of certain SDDE.
We have derived the convergence condition of distributed SGD by analyzing its characteristic roots, which allows us to optimize scheduling policies for asynchronous distributed SGD. In particular, the characteristic roots have been shown to be closely related to the learning rate, step staleness, and the eigenvalues of the objective function's Hessian matrix. {This work has also demonstrated that ASGD has a higher error floor compared to synchronous SGD. However, this issue can be mitigated by reducing the level of asynchrony as ASGD approaches the local minimum.}
It has been shown that the presence of a small gradient could slightly accelerate the convergence of the SGD, while a large degree of gradient staleness leads to its convergence. It has also been shown that, regardless of the specific distribution of computation time, the expectation of the step staleness in {ASGD} without gradient dropout is determined only by the number of workers and the group size. With limited bandwidth, excessive workers can lead to a large communication delay due to network congestion, degrading the performance of SGD. Further, we have also extended the derivation of the characteristic roots of the case where the gradient staleness in SDDE follows a uniform distribution, which arises in event-triggered SGD.  In practice, for a given learning rate, it has been shown that there exists an optimal number of workers in the {ASGD} tasks, for the fact that a large number of workers leads to a greater degree of staleness which may lead to divergence of the SGD algorithm. 
Numerical simulation results have been presented to demonstrate the potential of our SDDE framework, even in complex learning tasks with non-convex objective functions.

{
ASGD benefits from a higher gradient arrival rate compared to synchronous SGD.  It is noteworthy that the step staleness or discrete-time staleness, rather than the gradient delay, plays a crucial role in determining the convergence rate of {ASGD}. It has been demonstrated that, through careful selection of the number of workers or by designing appropriate protocols, step staleness can be effectively controlled. Also, it is possible to leverage stale gradients to accelerate the convergence of ASGD. Important theoretical potential directions for extending our SDDE-based works involve further analytical results with a non-convex objective function, staleness-aware learning rates, ASGD variants, and non-Gaussian gradient noise. Important practical future directions include developing adaptive learning rates and designing protocols that leverage gradient staleness to accelerate the convergence of ASGD.}

\section*{Acknowledgement}
The authors would like to thank the editor and anonymous reviewers for their professional and constructive comments which helped to improve the quality of this paper.

{
\appendices
\section{Proof of Proposition \ref{proFirstHitting}}\label{appendix:1}
Without loss of generality, we consider the hitting time to $\delta > 0$ rather than  $-\delta$. Hence, in this proof, let $x(0)>0$ such that the OU process first reaches $\delta > 0$. Also, the subscript $i$, which denotes the $i$-th OU process is omitted in this proof for brevity. First, we aim to show that the probability of the OU process leaving the interval \( (-\infty, \delta) \) after it enters this interval can be bounded by a  small positive number \( \epsilon > 0 \).

Let $k_0$ be the solution to $x(k_0) = \delta$. Then it holds that $\mathbb E\{x(k_0 + k\eta)\} = x_0 \exp(-{v} k \eta)$ and $\text{Var}\{x(k_0 +k \eta)\} = \frac{\sigma^2}{2{v}}(1 - \exp(-2 {v} k \eta))$ where $x(k_0 + k\eta)$ is a Gaussian distributed random variable.
Therefore, we have $\Pr(x(k_0 + k\eta)> \delta) \overset{(a)}{\le} Q\left(\frac{\sqrt{2{v}}\delta}{\sigma}(1- \exp(-{v} k\eta)\right)$
where $(a)$ follows from the fact that $1 - \exp(-2{v} k \eta) \le 1$ with positive ${v}, k,$ and $ \eta$.
Note that in practice, the evaluation of the value of the objective variable is only conducted at a few discrete time points. Let $E$ denote the event that $x(k_0 + k\eta) > \delta$ for any $k \in (k_0 + k_l, k_0 + k_r)$, where $k \in \mathbb N^+$, $0<\epsilon, \beta<1$, $k_l =\frac{-\log (\beta)}{{v}\eta}$, and $k_r= \frac{\epsilon}{\eta\exp\left(-\frac{(1-\beta)^2\delta^2 {v}}{\sigma^2}\right)}$. In the following, we aim to demonstrate the fact that 
\begin{equation}\label{add_prob}
    P(E)\le\epsilon. 
\end{equation} 
Clearly, for $k > k_l$, it holds that $\Pr(x(k_0 + k\eta)> \delta)\le \exp\left(- \frac{{v}\delta^2}{\sigma^2}(1-\beta)^2\right)$.
Therefore, 
\begin{align}
    P(E) \le k_r \eta \exp\left(- \frac{{v}\delta^2}{\sigma^2}(1-\beta)^2\right) \le\epsilon.
\end{align}
In this way, it can be seen that once the OU process enters the interval $(-\infty, \delta)$, the probability that the OU process leaves the interval is quite small. Further, we have $\Pr\{{H}\ge t\} =    \Pr\{{H}\ge t|x(t) \ge \delta\}   \Pr\{x(t) \ge \delta\} +  \Pr\{{H}\ge t|x(t) < \delta\}   \Pr\{x(t) < \delta\}.$
Clearly,  $\Pr\{{H}\ge t|x(t) < \delta\} = 0$. Next, it holds that $ \Pr\{{H}\ge t|x(t) \ge \delta\} + \Pr\{{H}< t|x(t) \ge \delta\} = 1$. By Eq. (\ref{add_prob}), we make the approximation that $Pr\{{H}< t|x(t) \ge \delta\}\approx 0$, i.e., $\Pr\{{H}\ge t\} \approx \Pr \{x(t) \ge \delta\}.$    

Therefore,  $\Pr \{x(t) \ge \delta\} = Q\left(\frac{x_0\exp(-{v} t) - \delta}{\sigma_o(t)}\right)$, 
where $Q(\cdot)$ is the $Q$-function and $\sigma_o(t) = \frac{\sigma}{\sqrt{2{v}}} \sqrt{1 - \exp{(-2{v} t)}}$. Let $p_H(t) = - \frac{\rm d }{\rm dt}\Pr \{x(t) \ge \delta\}$, which is the p.d.f. of the first hitting time.  
In this case, we have
\begin{align}\label{pdf_long}
& p_H(t) = \exp\left(-{\left(\frac{x_0 e^{-{v} t} - \delta}{\sqrt{2}\sigma_o(t)}\right)^2}\right)\times 
\\&\left[\frac{-{v} x_0 \exp(-{v} t)}{\sigma_o(t)}  -{\frac{(x_0 \exp(-{v} t) - \delta) \frac{\sigma {v} \exp(-2{v} t)}{\sqrt{2{v}(1 - \exp(-2{v} t))}}}{(\sigma_o(t))^2}} \right].\nonumber
\end{align}
Here, we are interested in the expression of Eq. (\ref{pdf_long}) around $t  = \frac{1}{{v}}\log\frac{x_0}{\delta}$. First, note that for $t = t_0 + \Delta t$, $G(t) = x_0 \exp(-{v} t) - \delta$ is on the order of $\mathcal O(\Delta t)$. This is due to the fact that $x_0 \exp(-{v} t) - \delta = \delta(\exp(-{v}\Delta t) -1)$. Further,  Therefore, $x_0 \exp(-{v} t) - \delta \approx -\delta {v}\Delta t$ by the first order Taylor expansion. Hence, it holds that $\lim_{\Delta_t \to 0} G_2(t_0+ \Delta t)= 0$. By ignoring the contribution of $G(t)$ as $\Delta t \to 0$, we have 
\begin{equation}
    p_H(t_0 + \Delta t) = \frac{{v}\delta \exp(-{v}\Delta t)}{\sigma_o(t_0+\Delta t)}
    \exp\left({-\frac{{v}\delta (\Delta t)^2}{2\sigma^2_o(t_0+\Delta t)}}\right) + \mathcal O(\Delta t).
\end{equation}
By approximating \( \exp(-v \Delta t) \approx 1 \) and \( \sigma_o(t_0 + \Delta t) \approx \sigma_o(t_0) \) for small \( \Delta t \), where the approximation error is on the order of \( \mathcal O(\Delta t) \), we find that
\begin{equation}
    p_H(t_0 + \Delta t) =  \frac{1}{\tilde \sigma \sqrt{2 \pi}} \exp\left(-\frac{(\Delta t)^2}{2 \tilde \sigma^2}\right) + \mathcal O(\Delta t),
\end{equation}
where $\tilde \sigma = \sqrt{\frac{\sigma^2}{2 v_i^3 \delta^2} \left( 1 - \frac{1}{\alpha_i^2} \right)}$.In practice, \( \mathcal O(\Delta t) \) is on the same order of \( \mathcal O(\sigma) \), which suggests that this approximation is accurate, especially with a small gradient noise, i.e., small \( \sigma \). Thus, we obtain Eq. (\ref{approximation}).}
{
\section{A Bound on the Probability of the Event $H_{\max} \ne \tilde H$.}\label{appendix:2}
A careful reader may observe that the $H_{\max}$ is not necessarily equal to $\tilde H$. More specifically, define $i = \arg \max H_j$. where $H_j$ is given by $H_j = \inf \{ t> 0 : |x_j(t) -  x_j^*|< \delta \}$, as defined in Eq. (\ref{eq_def_Hi}). Hence, if $|x_j(H_{\max})|\le \delta$ for all $j \ne i$, then $H_{\max}$ and $\tilde H$ are identical. Otherwise, if $|x_j(H_{\max})|> \delta$ for any $j \ne i$, then we have $H_{\max}\ne \tilde H$.
Let $L_j = |H_j - H_{\max}|$, representing the time difference between $H_j$ and $H_{\max}$. Next, we would like to demonstrate the probability of the event $|x_j(H_{\max})|> \delta$ decays exponentially with $L_j$. Specifically, we can establish that
\begin{align}
\text{Pr}(|x_j(H_{\max})|> \delta) \overset{(a)}{\le} 2\exp\left(- \frac{v_j^3 \delta^2L_j ^2}{\sigma^2}\right),
\end{align}
where $Q(\cdot)$ is the $Q$-function and the inequality $(a)$ holds by the Chernoff bound of the $Q$-function, i.e., $Q(s) \le e^{-\frac{s^2}{2}}$, $s>0$.
Therefore, we conclude that $\text{Pr}(|x_j(H_{\max})|> \delta)$ is on the order of $\mathcal O(e^{-L_j^2})$, and hence it becomes negligible for large values of $L_j$.}

\section{Proof for Corollary \ref{coro_solution_for_delay}}
Let $\omega = \zeta\lambda$. It holds that $\zeta  = \frac{\omega^2}{e^{-\omega}-1}$ by Eq. (\ref{chara_function}). Let us define $g(\omega) = \frac{\omega^2}{e^{-\omega}-1}$. Therefore, the derivative of $g(\omega)$ with respect to $\omega$ is given by 
\begin{equation}
    g'(\omega) = \frac{2\omega(e^{-\omega} -1) + \omega^2e^{-\omega}}{(e^{-\omega}-1)^2}.
\end{equation}  
Therefore, $g(\omega)$ is monotonically increasing in $(-\infty, \varpi)$ and monotonically decreasing in $(\varpi, \infty)$ in which $g'(\varpi) = 0$. Further, it holds that $(\varpi+2)e^{-(\varpi+2)} = 2e^{-2}$. Hence, we have $\varpi = -W_0\left(-\frac2{e^{2}}\right)-2$. Therefore,  the characteristics equation Eq. (\ref{chara_function}) has two distinct negative real roots for $\zeta \in (0, \frac{\varpi ^2}{e^{-\varpi }-1})$.
Further, suppose the solution for Eq. (\ref{diff_eq_uni_delay}) is pure imaginary when $\omega = \gamma$, i.e., $\lambda = i\theta$. In this case, it holds that $\sin(-\gamma\theta) = 0$ and $\cos(-\gamma\theta) = -\gamma \theta^2 + 1.$
Therefore, we have $\gamma = \frac{\pi^2}{2}$ and $\theta = \pm \frac{2}{\pi}$. By this means, we arrive at Corollary \ref{coro_solution_for_delay}.

\section{Proof of Proposition \ref{prop_opt_number_ideal}}\label{app:add}
Given the learning rate $\eta$, the two roots of the characteristics function is given by determined by Proposition \ref{two_roots}. {Further,} it holds that $\text{Re}\{\lambda_0\} > \text{Re}\{\lambda_1\}$ when $\breve \tau \eta v< \frac{1}{e}$, and $\text{Re}\{\lambda_0\} =  \text{Re}\{\lambda_1\}$ when $\breve \tau \eta v\ge  \frac{1}{e}$. {In this way, the convergence rate of the SGD algorithm is mainly determined by the dominant root $\text{Re}\{\lambda_0\}$, i.e.,
$\mathbb E \{\hat x(t) \} = A_0e^{\lambda_0 {  K\nu t}} + A_1e^{\lambda_1 {  K\nu t}}$ where $A_0$ and $A_1$ are constants while $K\nu$ corresponds to the rate of the gradient arrival process. Next, we adopt dominant pole approximation by focusing on the dominant pole $K\nu\lambda_0$, which has the largest effect on the system's response. In particular, it holds that $\lambda_0 =  \frac{W_0{(-(K-1){v}\eta)}}{(K-1)\eta}.$
Such, $K^*= \arg\min_K \frac{K\text{Re}\{W_0(-( K-1){v}\eta)\}}{(K-1)\eta}$. Next, it holds that $\frac{K}{K-1} = 1 + \mathcal O(1/K)$ under the condition that $K \gg 1$. Notably, 
$\text{Re}\{W_0(s)\}$ attains its minimum at $s = -\frac{1}{e}$, leading us to the  proposition.}

\section{Proof of Proposition \ref{prop7}}
By modeling the communication channel with a fixed service rate in pure {ASGD} as an $M/D/1$ queue, the dominant root of the {ASGD} algorithm is given by $\lambda_{\text{Q}} = \frac{W_0{(-\mathbb E \{\breve \tau_\text{Q}\} \eta v)}}{\mathbb E \{\breve \tau_\text{Q}\} \eta }$, in which $\mathbb E \{\breve \tau_\text{Q}\}$ is given by Eq. (\ref{aver_delayMD1}). Therefore, the optimum number of workers in {ASGD} through a queue can be determined by solving the following optimization problem $\min_{K\le K_\text{max}} ~~~\frac{\text{Re}\left\{W_0\left(-K\eta v z(K)\right)\right\}}{\eta  z(K)},$
in which $\check K= \frac{\mu_Q}{\nu}$ is the capacity of the queue and $z(K) =1+ \frac{1}{2\check K} \left(\frac{K}{\check K- K} + 2\right)$. 
Next, the optimal number of workers in this setting, denoted by $K^*_{Q}$, should satisfy that $0< K^*_{Q}\eta v z(K^*_{Q})\le \frac{1}{e} $ such that the $W_0({-K^*_{Q}\eta v z(K^*_{Q})})$ is real since $z(K)$ is monotonically increasing with $K$. Therefore, we could focus on the following optimization problem:
\begin{align}\label{subopt_user}
    &\min_{0<  Kz(K)\le \frac{1}{e\eta v}} ~~~  \frac{W_0\left(-K\eta v z(K)\right)}{z(K)}.
\end{align}
By calculating the first derivative of (\ref{subopt_user}), we conclude that the optimum solution to (\ref{subopt_user}) can be determined by solving $\frac{Kz'(K)}{z(K)+ K z'(K)}  = \frac{1}{1+ W_0(-K\eta v z(K))},$
in which $ z'(K) = \frac{\check K}{(\check K-K)^2}$.
Since  $W_0(s)$ achieves its minimum $W_0(s) = -1$ at $s = {1}/{e}$, the optimal solution $K^*_{Q}$ approximately satisfies $ K^*_{Q}\eta v z(K^*_{Q}) = \frac 1e.$
By this means, we arrive at Eq. (\ref{eq_prop7}).

\vspace{-0.5ex}

\end{document}